\newcommand{\newref}[2][]{\hyperref[#2]{#1~\ref*{#2}}}
\renewcommand{\eqref}[1]{\hyperref[#1]{(\ref*{#1})}}
\theoremstyle{plain}
\newtheorem{theorem}{Theorem}[section]
\newtheorem{lemma}[theorem]{Lemma}
\newtheorem{definition}[theorem]{Definition}
\theoremstyle{definition}
\newcommand{\iprod}[2]{\langle #1, #2 \rangle}   
\newcommand{\rn}{\mathbb{R}^n}
\newcommand{\rmn}{\mathbb{R}^{m \times n}}
\newcommand{\Tr}{\operatorname{Tr}}
\newcommand{\Var}{\operatorname{Var}}
\newcommand{\note}[1]{\marginpar{\tiny *note in TeX*}}
\newcommand{\ignore}[1]{}
\renewcommand{\phi}{\varphi}
\newcommand{\R}{\mathbb{R}}
\newcommand{\eqdef}{\stackrel{\textrm{def}}{=}}
\newcommand{\dist}{\mathrm{dist}}
\newcommand{\frob}[1]{\left\| {#1} \right\|_\text{F}}
\newcommand{\twonorm}[1]{\left\| {#1} \right\|_2}
\newcommand{\infnorm}[1]{\left\| {#1} \right\|_{\infty}}
\DeclareMathOperator*{\argmin}{argmin}
\newcommand{\calA}{\mathcal{A}}
\newcommand{\E}{\mathbb{E}}
\newcommand{\expec}[1]{\mathbb{E}\left[#1\right]}
\newcommand{\prob}[1]{\mathbb{P}\left[#1\right]}
\newcommand{\aff}{{\mathcal A}}
\newcommand{\Vt}{V^{t+1}}
\newcommand{\Vht}{\widehat{V}^{t+1}}
\newcommand{\Ut}{U^{t}}
\newcommand{\Uht}{\widehat{U}^{t}}
\newcommand{\Rt}{R^{(t+1)}}
\newcommand{\Vo}{V^*}
\newcommand{\So}{\Sigma^*}
\newcommand{\Uo}{U^*}
\newcommand{\Un}{U^N}
\newcommand{\Vn}{V^N}
\newcommand{\Sn}{\Sigma^N}
\newcommand{\sn}{\sigma^N}
\newcommand{\un}{u^N}
\newcommand{\so}{\sigma^*}
\newcommand{\vto}{v^{t+1}}
\newcommand{\ut}{u^t}
\newcommand{\utt}{(u^t)^\dag }
\newcommand{\uo}{u^* }
\newcommand{\uot}{(u^*)^\dag }
\newcommand{\vo}{v^* }
\newcommand{\Ait}{A_i^\dag }
\newcommand{\wvto}{\widehat{v}^{t+1}}
\newcommand{\ip}[2]{\langle #1, #2 \rangle}
\newcommand{\vt}{\vto}
\newcommand{\vht}{\wvto}
\newcommand{\as}{AltMinSense\ }
\newcommand{\ase}{AltMinSense}
\newcommand{\amc}{AltMinComplete\ }
\newcommand{\amce}{AltMinComplete}
\newcommand{\Vw}{\widehat{V}}
\newcommand{\Uw}{\widehat{U}}
\begin{document}

\title{ Low-rank Matrix Completion using Alternating Minimization}
%
%
%
%
%
%
\author{
%
%
Prateek Jain\\
{Microsoft Research India, Bangalore}\\
{prajain@microsoft.com}
\and
Praneeth Netrapalli\\
{The University of Texas at Austin}\\
{praneethn@utexas.edu}
\and
Sujay Sanghavi\\
{The University of Texas at Austin}\\
{sanghavi@mail.utexas.edu}
}

\maketitle
\begin{abstract}
Alternating minimization represents a widely applicable and empirically successful approach for finding low-rank matrices that best fit the given data. 
For example, for the problem of low-rank matrix completion, this method is believed to be one of the most accurate and efficient, and formed a major component of the winning entry in the Netflix Challenge \cite{Koren09,KorenBV09}. 

In the alternating minimization approach, the low-rank target matrix is written in a {\em bi-linear form}, i.e. $X = UV^\dag$; the algorithm then alternates between finding the best $U$ and the best $V$. Typically, each alternating step in isolation is convex and tractable. However the overall problem becomes non-convex and  there has been almost no theoretical understanding of when this approach yields a good result.

In this paper we present first theoretical analysis of  the performance of alternating minimization for matrix completion, and the related problem of matrix sensing. For both these problems, celebrated recent results have shown that they become well-posed and tractable once certain (now standard) conditions are imposed on the problem. We show that alternating minimization {\em also} succeeds under similar conditions. Moreover, compared to existing results, our paper shows that alternating minimization guarantees faster (in particular, geometric) convergence to the true matrix, while allowing a simpler analysis. 

%
%

\end{abstract}




\section{Introduction}
Finding a low-rank matrix to fit / approximate observations is a fundamental task in data analysis. In a slew of applications, a popular empirical approach has been to represent the target rank $k$ matrix $X\in \mathbb{R}^{m\times n}$  in a {\em bi-linear form} $X = UV^\dag$, where $U\in \mathbb{R}^{m\times k}$ and $V\in \mathbb{R}^{n\times k}$.  Typically, this is done for two  reasons: \\
{\em (a) Size and computation:} If the rank $k$ of the target matrix (to be estimated) is much smaller than $m,n$, then  $U,V$ are significantly smaller than  $X$ and hence are more efficient to optimize for. This is crucial for several practical applications, e.g., recommender systems where one routinely encounters matrices with billions of entries. \\
{\em (b) Modeling:} In several applications, one would like to impose extra constraints on the target matrix, besides just low rank. Oftentimes, these constraints might be easier and more natural to impose on factors $U$, $V$.  For example, in Sparse PCA \cite{ZouHT06}, one looks for a low-rank $X$ that is the product of {\em sparse} $U$ and $V$. 

Due to the above two reasons, in several applications, the target matrix $X$ is parameterized by $X=UV^\dag$. For example, clustering \cite{KimPark08b}, sparse PCA \cite{ZouHT06} etc. 

Using the bi-linear parametrization of the target matrix $X$, the task of estimating $X$ now reduces to finding  $U$ and $V$ that, for example, minimize an error metric. The resulting problem is typically non-convex due to bi-linearity. Correspondingly, a popular approach has been to use {\bf alternating minimization:} iteratively keep one of $U,V$ fixed and optimize over the other, then switch and repeat, see e.g. \cite{KorenBV09}. While the overall problem is non-convex, each sub-problem is typically convex and can be solved efficiently.

Despite wide usage of bi-linear representation and alternating minimization, there has been to date almost no theoretical understanding of when such a formulation works. Motivated by this disconnect between theory and practice in the estimation of  low-rank matrices, in this paper we provide the {\bf first guarantees for performance of alternating minimization}, for two low-rank matrix recovery problems: matrix completion, and matrix sensing. 

{\em Matrix completion} involves completing a low-rank matrix, by observing only a few of its elements. Its recent popularity, and primary motivation, comes from recommendation systems \cite{KorenBV09}, where the task is to complete a user-item ratings matrix using only a small number of ratings. As elaborated in Section~\ref{sec:results}, alternating minimization becomes particularly appealing for this problem as it provides a fast, distributed algorithm that can exploit both sparsity of ratings as well as the low-rank bi-linear parametrization of $X$. 

{\em Matrix sensing} refers to the problem of recovering a low-rank matrix $M\in \mathbb{R}^{m\times n}$ from affine equations. That is, given $d$ linear measurements $b_i = tr(A_i^\dag M)$ and measurement matrices $A_i$'s, the goal is to recover back $M$. This problem is particularly interesting in the case of $d\ll mn$ and was first studied in \cite{RechtFP2007} and subsequently in \cite{JainMD10, LeeB10}. In fact,  matrix completion is a special case of this problem, where each observed entry in the matrix completion problem  represents one single-element measurement matrix $A_i$. 

Without any extra conditions, both  matrix sensing and matrix completion are ill-posed problems, with potentially multiple low-rank solutions, and are in general NP hard \cite{MekaJCD2008}. Current work on these problems thus impose some extra conditions, which  makes the problems both well defined, and amenable to solution via the respective proposed algorithms \cite{RechtFP2007, CandesR2007}; see Section \ref{sec:related} for more details. In this paper, we show that {\bf under similar conditions to the ones used by the existing methods,} alternating minimization also guarantees recovery of the true matrix; we also show that it requires only a small number of computationally cheap iterations and hence, as observed empirically, is computationally much more efficient than the existing methods. \\
{\bf Notations}: We represent a matrix by capital letter (e.g. $M$) and a vector by small letter ($u$). $u_i$ represents $i$-th element of $u$ and $U_{ij}$ denotes $(i,j)$-th entry of $U$. $U_i$ represents $i$-th column of $U$ and $U^{(i)}$ represents $i$-th row of $U$. $A^\dag$ denotes matrix transpose of $A$. $u=vec(U)$ represents vectorized $U$, i.e., $u=[U_1^\dag\ U_2^\dag\ \dots\ U_k^\dag]^\dag$.  $\|u\|_p$ denotes $L_p$ norm of $u$, i.e., $\|u\|_p=(\sum_i|u_i|^p)^{1/p}$. By default, $\|u\|$ denotes $L_2$ norm of $u$. $\|A\|_F$ denotes Frobenius norm of $A$, i.e., $\|vec(A)\|_2$. $\|A\|_2=\max_{x, \|x\|_2=1} \|Ax\|_2$ denotes spectral norm of $A$. $tr(A)$ denotes the trace (sum of diagonal elements) of square matrix $A$. Typically, $\widehat{U}$, $\widehat{V}$ represent factor matrices (i.e., $\widehat{U}\in \mathbb{R}^{m\times k}$ and $\widehat{V}\in \mathbb{R}^{n\times k}$) and $U$, $V$ represent their orthonormal basis.
\section{Our Results}\label{sec:results}
In this section, we will  first define the matrix sensing problem, and present our results for it. Subsequently, we will do the same for matrix completion. The matrix sensing setting -- i.e. recovery of any low-rank matrix from linear measurements that satisfy matrix RIP -- represents an easier analytical setting than matrix completion, but still captures several  key properties of the problem that helps us in developing an analysis for matrix completion.
We note that for either problem, ours represent the first global optimality guarantees for alternating minimization based algorithms. 
\subsection*{Matrix Sensing via Alternating Minimization}
Given $d$ linear measurements $b_i = \langle M,A_i\rangle = tr(A_i^\dag M),\ 1\leq i\leq d$ of an {\em unknown} rank-$k$ matrix $M\in \mathbb{R}^{m\times n}$ and the sensing matrices $A_i, 1\leq i\leq d$, the goal in matrix sensing is to recover back $M$. 
In the following we  collate these coefficients, so that $b\in \mathbb{R}^{d}$ is the vector of $b_i$'s, and $\aff(\cdot):\mathbb{R}^{m\times n}\rightarrow d$ is the corresponding linear map, with $b=\aff(M)$. With this notation, the Low-Rank Matrix Sensing problem is:
\renewcommand{\theequation}{LRMS}
\begin{equation}\label{eq:lrms}
 \hspace*{-10pt}\text{Find }X\in \rmn, \;\; s.t \;\; {\mathcal A}(X) = b,\ \  rank(X)\leq k.
\end{equation}
\renewcommand{\theequation}{\arabic{equation}}
\addtocounter{equation}{-1}
As in the existing work \cite{RechtFP2007} on this problem, we are interested in the  {\bf under-determined case}, where $d < mn$. Note that this problem is a strict generalization of the popular compressed sensing problem \cite{CandesT2005}; compressed sensing represents the case when $M$ is restricted to be a diagonal matrix.

For matrix sensing, alternating minimization approach involves representing $X$ as a product of two matrices $U\in \mathbb{R}^{m\times k}$ and $V\in \mathbb{R}^{n\times k}$, i.e., $X=UV^\dag$. If $k$ is (much) smaller than $m,n$, these matrices will be (much) smaller than $X$. With this bi-linear representation, alternating minimization can be viewed as an approximate way to solve the following non-convex optimization problem: 
\[
\min_{U\in \mathbb{R}^{m\times k}, V\in \mathbb{R}^{n\times k}}  \quad  \|\aff(UV^\dag) - b\|^2_2 
\]
As mentioned earlier, alternating minimization algorithm for matrix sensing  now alternately solves for $U$ and $V$ while fixing the other factor. See Algorithm~\ref{algo:sensing-altmin} for a pseudo-code of  \as algorithm that we analyze. 

We note  two key properties of \as: a) Each minimization -- over $U$ with $V$ fixed, and vice versa -- is a simple least-squares problem, which can be solved in time $O(dn^2k^2+n^3k^3)$\footnote{Throughout this paper, we assume $m\leq n$.}, b) We initialize $U^0$ to be the top-$k$ left singular vectors of  $\sum_i A_i b_i$ (step 2 of Algorithm~\ref{algo:sensing-altmin}). As we will see later in Section~\ref{sec:sensing}, this provides a good initialization point for the sensing problem which is crucial; if the first iterate $\widehat{U}^0$ is orthogonal, or almost orthogonal, to the true $U^*$ subspace, \as  may never converge to the true space (this is easy to see in the simplest case, when the map is identity, i.e. $\aff(X) = X$ -- in which case \as just becomes the power method). 


\begin{algorithm}[th]
\caption{{\bf \as:} Alternating minimization for matrix sensing}
\label{algo:sensing-altmin}
\begin{algorithmic}[1]
\STATE Input $b,\aff$
\STATE Initialize $\widehat{U}^0$ to be the top-$k$ left singular vectors of $\sum_i A_i b_i$
\FOR{$t=0,\cdots,T-1$}
	\STATE $\widehat{V}^{t+1}\leftarrow \argmin_{V\in \R^{n\times k}} \  \|\aff(\widehat{U}^t \,V^\dag )-b\|_2^2$
	\STATE $\widehat{U}^{t+1}\leftarrow \argmin_{U\in \mathbb{R}^{m\times k}} \  \|\aff(U\,(\widehat{V}^{t+1})^\dag )-b\|_2^2$
\ENDFOR
\STATE Return $X=\widehat{U}^T(\widehat{V}^T)^\dag $
\end{algorithmic}
\end{algorithm}


In general, since $d<mn$, problem \eqref{eq:lrms} is not well posed as there can be multiple rank-$k$ solutions that satisfy $\calA(X)=b$. However, inspired by a similar condition in compressed sensing \cite{CandesT2005}
, Recht et al. \cite{RechtFP2007} showed that if the linear map $\aff$ satisfies a {\em (matrix) restricted isometry property} (RIP), then a trace-norm based convex relaxation of \eqref{eq:lrms} leads to exact recovery. This property is defined below. 
\begin{definition}\cite{RechtFP2007}\label{defn:rip}
A linear operator $\aff(\cdot):\rmn\rightarrow \mathbb{R}^d$ is said to satisfy $k$-RIP, with $\delta_{k}$ RIP constant, if for all $X\in \rmn$ s.t. rank$(X)\leq k$, the following holds:
\begin{align}\label{eqn:sensing-2k-RIP1}
(1-\delta_k) \left\|X\right\|_F^2 \leq \left\|\calA(X)\right\|_2^2 \leq (1+\delta_k) \left\|X\right\|_F^2.
\end{align}
\end{definition}
Several random matrix ensembles with sufficiently many measurements ($d$) satisfy  matrix RIP \cite{RechtFP2007}. For example, if $d=\Omega(\frac{1}{\delta_k^2}kn\log n)$ and each entry of $A_i$ is sampled i.i.d. from a $0$-mean sub-Gaussian distribution then $k$-RIP is satisfied with RIP constant  $\delta_k$. 

We now present our main result for \ase.
\begin{theorem}\label{thm:sensing_err}
 Let $M=U^*\Sigma^*V^{*^\dag }$ be a rank-$k$ matrix with non zero singular values $\sigma_1^*\geq \sigma_2^*\dots\geq \sigma_k^*$. Also, let the linear measurement operator $\calA(\cdot):\rmn \rightarrow \mathbb{R}^d$ satisfy $2k$-RIP with RIP constant $\delta_{2k}<\frac{(\so_k)^2}{(\so_1)^2}\,\frac{1}{100k}$.  Then, in the \as algorithm (Algorithm \ref{algo:sensing-altmin}), for all $T> 2\log (\|M\|_F/\epsilon)$,  the iterates $\widehat{U}^{T}$ and $\widehat{V}^T$ satisfy:
 $$\|M-\widehat{U}^{T}(\widehat{V}^T)^\dag\|_F\leq \epsilon.$$
\end{theorem}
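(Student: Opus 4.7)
The plan is to track, across iterations, the principal-angle distance between $\mathrm{span}(\widehat{U}^t)$ and the true column span $\mathrm{span}(U^*)$, and show this distance contracts by a fixed factor $<1$ per step. Fix an orthonormal basis $U^t$ for the column span of $\widehat{U}^t$; since $\widehat{U}^t(\widehat{V}^t)^\dag$ is invariant under $\widehat{U}^t\mapsto U^t$ (with a corresponding adjustment of $\widehat{V}^t$), I may analyze $U^t$ in place of $\widehat{U}^t$. Define the potential $d_t := \|(I - U^*U^{*\dag})U^t\|_2$. The argument has three pieces: (i) bound $d_0$ using RIP and Wedin's $\sin\Theta$ theorem, (ii) prove a one-step contraction $d_{t+1} \leq \tfrac{1}{2}d_t$ from the normal equations of the least-squares subproblems, and (iii) translate $d_T$ small into a Frobenius-norm error bound.

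For (i), $\widehat{U}^0$ is the top-$k$ left singular subspace of $\calA^*\calA(M) = \sum_i b_iA_i$, where $\calA^*(y):=\sum_i y_iA_i$ is the adjoint. The standard inner-product consequence of $2k$-RIP gives $|\langle(\calA^*\calA - I)(M), xy^\dag\rangle| \leq \delta_{2k}\|M\|_F$ for all unit $x,y$, hence $\|\calA^*\calA(M) - M\|_2 \leq \delta_{2k}\|M\|_F \leq \delta_{2k}\sqrt{k}\,\sigma_1^*$. Applying Wedin with spectral gap $\sigma_k^*$ yields $d_0 \lesssim \delta_{2k}\sqrt{k}\,\sigma_1^*/\sigma_k^*$, which is well below $1/2$ under the hypothesis on $\delta_{2k}$. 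For (ii), let $S := U^t(\widehat{V}^{t+1})^\dag - M$, which has rank at most $2k$. Setting the gradient of the $V$-subproblem to zero yields the normal equation $(\calA^*\calA(S))^\dag U^t = 0$, which, using $U^{t\dag}U^t = I$, rewrites as the exact identity
\[
\widehat{V}^{t+1} = M^\dag U^t - ((\calA^*\calA - I)(S))^\dag U^t.
\]
The ideal term $M^\dag U^t = V^*\Sigma^*(U^{*\dag}U^t)$ has column span contained in $\mathrm{span}(V^*)$ with smallest singular value at least $\sigma_k^*\sqrt{1-d_t^2}$. Projecting onto $V^*_\perp$ kills it, and the residual, being the product of an RIP-error on a rank-$\leq 2k$ matrix with the $k$-column matrix $U^t$, is bounded by $O(\delta_{2k})\|S\|_F$ in spectral norm. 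Dividing by $\sigma_k^*\sqrt{1-d_t^2}$ bounds the new $V$-side distance, and a symmetric analysis of the subsequent $U$-update gives $d_{t+1} \leq \tfrac{1}{2}d_t$ once the RIP hypothesis is plugged in.

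The main obstacle is the circular dependence: $\|S\|_F$ depends on $\widehat{V}^{t+1}$, which is what we are controlling. The resolution is a bootstrap via the feasible choice $V = U^{t\dag}M$, for which $U^tV^\dag - M = -(I - U^tU^{t\dag})M$ has Frobenius norm at most $\sqrt{k}\,d_t\,\sigma_1^*$. Optimality of $\widehat{V}^{t+1}$ together with RIP applied to $\|\calA(S)\|_2$ then gives the crude bound $\|S\|_F \lesssim \sqrt{k}\,d_t\,\sigma_1^*$; substituting back yields a contraction proportional to $\delta_{2k}\sqrt{k}\,\sigma_1^*/\sigma_k^*$ per $V$-update, and the squared condition number in the hypothesis $\delta_{2k}<(\sigma_k^*)^2/((\sigma_1^*)^2\cdot 100k)$ provides sufficient slack after composing the $V$- and $U$-updates together with constants from principal-angle bookkeeping. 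Once the contraction is established, step (iii) is routine: $d_T \leq 2^{-T}d_0$, so $\|M - U^TU^{T\dag}M\|_F \leq \sqrt{k}\,d_T\,\sigma_1^*$, and the final least-squares update contributes only a lower-order RIP error, giving $\|M - \widehat{U}^T(\widehat{V}^T)^\dag\|_F \leq \epsilon$ for $T > 2\log(\|M\|_F/\epsilon)$.
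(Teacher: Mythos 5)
Your proposal is correct and reaches the same two-stage structure as the paper (a per-step contraction of $\dist(\widehat V^t,V^*)$ followed by a short RIP argument converting small distance into small Frobenius error), but the mechanics of the contraction are genuinely different. The paper (Lemma~\ref{lemma:sensing-updateeqn1} with Lemmas~\ref{lemma:sensing-twonorm-F} and \ref{lemma:sensing-minsingval-Rt}) solves the $V$-subproblem's normal equations \emph{explicitly}: it vectorizes $\widehat V^{t+1}$, forms the $nk\times nk$ block matrices $B,C,D,S$ of \eqref{eq:BCD_k}, writes $\widehat V^{t+1}=V^*\Sigma^*(U^*)^\dag U^t - F$ with $F$ assembled from $B^{-1}(BD-C)Sv^*$, and bounds $\twonorm{F(\Sigma^*)^{-1}}$ and $\twonorm{\Sigma^*(R^{(t+1)})^{-1}}$ separately. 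You instead keep the normal equation implicit --- $\widehat V^{t+1}=M^\dag U^t-((\calA^*\calA-I)(S))^\dag U^t$ with $S=U^t(\widehat V^{t+1})^\dag-M$ --- and resolve the circularity by a bootstrap: optimality against the feasible $V=M^\dag U^t$ together with RIP yields $\|S\|_F\lesssim\sqrt{k}\,\sigma_1^*\,\dist(U^t,U^*)$, and then a single application of the RIP inner-product estimate (Lemma~\ref{lem:alt_rip}, with rank-one test matrices $(U^tz)w^\dag$ whose sum with $S$ still has rank at most $2k$ since their column spans lie in $\mathrm{Span}(U^t)\cup\mathrm{Span}(U^*)$) gives $\|((\calA^*\calA-I)(S))^\dag U^t\|_2\lesssim\delta_{2k}\|S\|_F$. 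You also swap the SVP-based initialization (Lemma~\ref{lem:jmd}, giving $\dist(U^0,U^*)\lesssim\sqrt{\delta_{2k}k}\,\sigma_1^*/\sigma_k^*$) for a Wedin $\sin\Theta$ bound driven by $\|\calA^*\calA(M)-M\|_2\lesssim\delta_{2k}\sqrt{k}\,\sigma_1^*$, which is in fact tighter in $\delta_{2k}$. What your route buys is sidestepping the block-matrix and de-stacking bookkeeping, and a marginally milder requirement on $\delta_{2k}$ (of order $\sigma_k^*/(\sqrt{k}\sigma_1^*)$ rather than $(\sigma_k^*/\sigma_1^*)^2/k$), both comfortably inside the stated hypothesis. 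What the paper's explicit $F$ buys is that the identical decomposition --- specialized so that $B,C$ become diagonal --- is reused verbatim for matrix completion in Section~5, where the sampling operator is not RIP and your $\|\calA(S)\|_2\Rightarrow\|S\|_F$ bootstrap step would not go through without the incoherence-based concentration lemmas the paper develops there.
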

The above theorem establishes geometric convergence (in $O(\log(1/\epsilon))$ steps) of \as to the optimal solution of \eqref{eq:lrms} under standard RIP assumptions. 
This is in contrast to existing iterative methods for trace-norm minimization all of which require at least $O(\frac{1}{\sqrt{\epsilon}})$ steps; interior point methods for trace-norm minimization converge to the optimum in $O(\log(1/\epsilon))$ steps but require storage of the full $m\times n$ matrix and require $O(n^5)$ time per step, which makes it infeasible for even moderate sized problems.

Recently, several projected gradient based methods have been developed for matrix sensing \cite{JainMD10, LeeB10} that also guarantee convergence to the optimum in $O(\log (1/\epsilon))$ steps. But each iteration in these algorithms requires computation of the top $k$ singular components of an $m\times n$ matrix, which is typically significantly slower  than solving a least squares problem (as required by each iteration of \ase).\\
{\bf Stagewise AltMinSense Algorithm}: A drawback of our analysis for \as is the dependence of $\delta_{2k}$ on the condition number ($\kappa=\frac{\so_1}{\so_k}$) of $M$, which implies that the number of measurements $d$ required by \as grows quadratically with $\kappa$. We address this issue by using a stagewise version of \as (Algorithm~\ref{algo:sensing-altmin-improved}) for which we are able to obtain near optimal measurement requirement. 

The key idea behind our stagewise algorithm is that if one of the singular vectors of $M$ is very dominant, then we can treat the underlying matrix as a rank-$1$ matrix plus noise and approximately recover the top singular vector. Once we remove this singular vector from the measurements, we will have a relatively well-conditioned problem. Hence, at each stage of Algorithm~\ref{algo:sensing-altmin-improved}, we seek to remove the remaining most dominant singular vector of $M$. See Section~\ref{sec:sam} for more details; here we state the corresponding theorem regarding the performance of Stage-AltMin. 
\begin{theorem}\label{thm:sam_err}
Let $M=U^*\Sigma^*V^{*^\dag }$ be a rank-$k$ incoherent matrix with non zero singular values $\sigma_1^*\geq \sigma_2^*\dots\geq \sigma_k^*$. Also, let $\calA(\cdot):\rmn \rightarrow \mathbb{R}^d$ be a linear measurement operator that satisfies $2k$-RIP with RIP constant $\delta_{2k}<\frac{1}{3200k^2}$.  Suppose, Stage-AltMin (Algorithm \ref{algo:sensing-altmin-improved}) is supplied inputs $\aff$, $b=\aff(M)$. Then, the $i$-th stage iterates $\Uw^T_{1:i}$, $V^T_{1:i}$ satisfy: $$\|M-\Uw^T_{1:i}\left(V^T_{1:i}\right)^{\dag}\|_F^2\leq \max(\epsilon, 16k(\so_{i+1})^2),$$
where $T= \Omega\left(\log(\|M\|_F^2/\epsilon)\right)$. That is, the $T$-th step iterates of the $k$-th stage, satisfy: $\|M-\Uw^T_{1:k}\left(V^T_{1:k}\right)^{\dag}\|_F^2\leq \epsilon.$
\end{theorem}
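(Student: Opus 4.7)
My plan is to prove the bound by induction on the stage index $i \in \{0, 1, \ldots, k\}$. The base case $i=0$ uses the convention $\Uw^T_{1:0}(V^T_{1:0})^\dag = 0$, so $\|M\|_F^2 = \sum_{j=1}^k (\so_j)^2 \leq k(\so_1)^2 \leq 16k(\so_1)^2$. For the inductive step, assume $\|M - \Uw^T_{1:i-1}(V^T_{1:i-1})^\dag\|_F^2 \leq \max(\epsilon, 16k(\so_i)^2)$ after stage $i-1$, and write $R_{i-1} := M - \Uw^T_{1:i-1}(V^T_{1:i-1})^\dag$. Stage $i$ then applies a rank-$1$ \as-style procedure to the measurements $\calA(R_{i-1})$; by the inductive hypothesis $R_{i-1}$ decomposes as $\so_i\, u^\ast (v^\ast)^\dag + N$ where $u^\ast, v^\ast$ are (approximate) $i$-th singular vectors of $M$ and $\|N\|_F^2 \le \mathrm{err} := \max(\epsilon, 16k(\so_{i+1})^2)$. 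This reduces the inductive step to a \emph{noisy} rank-$1$ matrix sensing question.

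The heart of the argument is a noisy analog of the contraction lemma underlying Theorem~\ref{thm:sensing_err}. I would show that one alternating step transforms the current iterate as
\begin{equation*}
\mathrm{dist}(\Vw^{t+1}, v^\ast) \;\le\; \tfrac{1}{4}\,\mathrm{dist}(\Uw^t, u^\ast) \;+\; C\,\delta_{2k}\,\frac{\sqrt{\mathrm{err}}}{\so_i},
\end{equation*}
and symmetrically for the $U$-step; derivation mirrors the noiseless analysis (expand the normal equations, invoke $2k$-RIP to replace $\sum_j \langle A_j, X\rangle A_j$ by $X$ up to a $\delta_{2k}\|X\|_F$ spectral error), but now the residual $N$ contributes an extra source term. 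Iterating $T = \Omega(\log(\|M\|_F^2/\epsilon))$ times drives the first summand below any fixed constant, so $\|R_{i-1} - \Uw^T(\Vw^T)^\dag\|_F^2 \lesssim \delta_{2k}^2\,\mathrm{err} + \epsilon$. Choosing $\delta_{2k} < 1/(3200k^2)$ then keeps this within $16k(\so_{i+1})^2$, closing the induction since $M - \Uw^T_{1:i}(V^T_{1:i})^\dag = R_{i-1} - \Uw^T(\Vw^T)^\dag$.

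For the recurrence to fire I need $\mathrm{dist}(\Uw^0, u^\ast) \le \tfrac{1}{2}$ at every stage. Stage-AltMin's initialization takes the top left singular vector of $\sum_j b'_j A_j$ where $b'_j = \calA(R_{i-1})_j$; by the $2k$-RIP-based operator bound $\|\sum_j \langle A_j, X\rangle A_j - X\|_2 \le \delta_{2k}\|X\|_F$ applied to $X = R_{i-1}$, this matrix is close in spectral norm to $R_{i-1}$, so Wedin's $\sin\Theta$ theorem yields a constant subspace distance as long as the spectral gap $\so_i$ dominates $\delta_{2k}\sqrt{\mathrm{err}}$. If instead this gap condition fails at some stage, then $(\so_i)^2 \lesssim k(\so_{i+1})^2$, and the claimed bound at stage $i$ is already implied by the bound at stage $i-1$, so no further work is needed.

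The main obstacle is proving the noisy contraction cleanly. In the noiseless rank-$1$ analysis, $\Vw^{t+1}$ equals $\so_i v^\ast$ plus a bilinear error in the RIP deviation and $(\Uw^t - u^\ast)$ that vanishes as the iterate aligns. In the noisy setting, the normal equations introduce an additive term of the form $\bigl(\sum_j \langle A_j, N\rangle A_j\bigr)^{\!\dag}\Uw^t$ that does \emph{not} shrink with $t$; controlling it by $O(\delta_{2k}\|N\|_F)$ requires applying $2k$-RIP to the rank-$2$ matrix $\Uw^t(\Uw^t)^\dag - u^\ast(u^\ast)^\dag$ rather than just rank-$1$ matrices, and carefully accounting for how this noise floor interacts with the geometric contraction. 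This is what ultimately forces the quadratic-in-$k$ dependence $\delta_{2k} < 1/(3200k^2)$ and is the most delicate part of the argument.
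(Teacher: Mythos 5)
Your proposal mischaracterizes Stage-AltMin in a way that makes the argument apply to a different algorithm than Algorithm~\ref{algo:sensing-altmin-improved}. In stage $i$, the paper's algorithm refits \emph{all $i$ columns jointly} by minimizing $\|\aff(\widehat{U}^t_{1:i}V^\dag)-b\|_2^2$ over $V\in\mathbb{R}^{n\times i}$ against the \emph{original} measurements $b$ (steps~\ref{stp:vt}--\ref{stp:ut}); only the SVP initialization (step~\ref{stp:sam_svp}) touches the residual. You instead analyze a rank-$1$ AltMin run on residual measurements $\calA(R_{i-1})$, appending a rank-$1$ correction to a frozen stage-$(i-1)$ solution — that is an additive greedy variant, not the algorithm in the statement. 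Consequently your identity $M - \Uw^T_{1:i}(V^T_{1:i})^\dag = R_{i-1} - \Uw^T(\Vw^T)^\dag$ does not hold, and your decomposition $R_{i-1} = \so_i u^*(v^*)^\dag + N$ with $\|N\|_F^2 \leq \mathrm{err}$ requires a finer inductive invariant than the Frobenius-norm bound you carry (the previous stage's column span need not be close to $\mathrm{Span}(\Uo_{1:i-1})$ when singular values are bunched). The noise-floor you record, $\delta_{2k}^2\,\mathrm{err}$, is also off: the achievable residual after stage $i$ is on the order of $\|N\|_F^2$ itself, not $\delta_{2k}^2\|N\|_F^2$, since the tail singular directions are never captured by a rank-$i$ fit.

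The paper's actual proof does share your high-level skeleton (induction over stages, case split on the gap $\so_i/\so_{i+1}$, reduction to noisy matrix sensing), but it realizes it differently. Lemma~\ref{lemma:sam_svp_err} bounds the SVP initialization error via Lemma~\ref{lem:jmd} rather than a Wedin-style $\sin\Theta$ argument. Lemma~\ref{lemma:sam_altmin} splits on $\so_i/\so_{i+1}\lessgtr 5\sqrt{k}$: in the bunched case it invokes monotonicity of the AltMin objective directly, and in the separated case (Lemma~\ref{lemma:cond-num-bad-altmin-performance}) it views stage $i$ as a noisy \emph{rank-$i$} sensing problem with signal $\Uo_{1:i}\So_{1:i}(\Vo_{1:i})^\dag$ and noise $N=\Uo_{i+1:k}\So_{i+1:k}(\Vo_{i+1:k})^\dag$, tracking a weighted subspace distance $\frob{V_\perp^\dag\Vo_{1:i}\So_{1:i}}$ through the updates in \eqref{eqn:updateeqn-stagewise}--\eqref{eqn:weighted-dist-decay1}. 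You should redo the inductive step in terms of this rank-$i$ noisy-sensing picture, carry the weighted distance $\frob{U_\perp^\dag \Uo_{1:i}\So_{1:i}}$ rather than a scalar rank-$1$ distance, and drop the additive residual-refit framing.
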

The above theorem guarantees exact recovery using $O(k^4 n \log n)$ measurements which is only $O(k^3)$ worse than the information theoretic lower bound. We also note that for simplicity of analysis, we did not optimize the constant factors in $\delta_{2k}$.
\subsection*{Matrix Completion via Alternating Minimization}
The matrix completion problem is the following: there is an unknown rank-$k$ matrix $M\in \mathbb{R}^{m\times n}$, of which we know a set $\Omega \subset [m]\times [n]$ of elements; that is, we know the values of elements $M_{ij}$, for $(i,j)\in\Omega$. The task is to recover $M$. Formally, the Low-Rank Matrix Completion problem is:\renewcommand{\theequation}{LRMC}
\begin{equation}
\mbox{Find rank-}k \mbox{ matrix } X \mbox{ s.t. }  P_\Omega(X)=P_\Omega(M),\label{eq:matcomp}
 \end{equation} \renewcommand{\theequation}{\arabic{equation}} \addtocounter{equation}{-1}
where for any matrix $S$ and a set of elements $\Omega \subset [m]\times [n]$ the matrix $P_\Omega(S)\in \mathbb{R}^{m\times n}$ is as defined below:
\begin{equation}
  P_\Omega(S)_{ij}=\begin{cases}S_{ij}&\mbox{ if }(i,j)\in \Omega,
\\ 0&\mbox{ otherwise.}
\end{cases}\label{eq:pomega}
\end{equation}
We are again interested in the under-determined case; in fact, for a fixed rank $k$, as few as $O(n\log n)$ elements may be observed. This problem is a special case of matrix sensing, with the measurement matrices $A_i=e_je_\ell^\dag$ being non-zero only in single elements; however, such matrices do not satisfy  matrix RIP conditions like (\ref{eqn:sensing-2k-RIP1}). For example, consider a low-rank $M=e_1e_1^\dag$ for which a uniformly random $\Omega$ of size $O(n\log n)$ will most likely miss the non-zero entry of $M$.

Nevertheless, like matrix sensing, matrix completion has been shown to be possible once additional conditions are applied to the low-rank matrix $M$ and the observation set $\Omega$. Starting with the first work \cite{CandesR2007}, the typical assumption has been to have $\Omega$ generated uniformly at random, and $M$ to satisfy a particular incoherence property that, loosely speaking, makes it very far from a sparse matrix. In this paper, we show that {\em once} such assumptions are made, alternating minimization {\em also} succeeds. We now restate, and subsequently use, this incoherence definition.

\begin{definition}\cite{CandesR2007}\label{defn:sampling-incoherentmatrices}
A matrix $M\in \rmn$ is incoherent with parameter $\mu$ if:
\begin{align}\label{eqn:sampling-incoherentmatrices}
  \twonorm{u^{(i)}} \leq \frac{\mu \sqrt{k}}{\sqrt{m}} \;\forall\;i \in [m],\  \twonorm{v^{(j)}} \leq \frac{\mu \sqrt{k}}{\sqrt{n}} \;\forall\;j \in [n],
\end{align}
where $M=U\Sigma V^T$ is the SVD of $M$ and $u^{(i)}$, $v^{(j)}$ denote the $i^{\textrm{th}}$ row of $U$
and the $j^{\textrm{th}}$ row of $V$ respectively.
\end{definition}
The alternating minimization algorithm can be viewed as an approximate way to solve the following non-convex problem:
\[
\min_{U,V\in \mathbb{R}^{n\times k}}  \quad  \| P_\Omega(UV^\dag) - P_\Omega(M)\|^2_F 
\]
Similar to \ase, the altmin procedure proceeds by alternatively solving for $U$ and $V$.  As noted earlier, this approach has been popular in practice and has seen several variants and extensions being used in practice \cite{Zhouetal08,KorenBV09,Koren09,chen2012matrix}. However, for ease of analysis, our algorithm further modifies the standard alternating minimization method. In particular, we introduce  partitioning of the observed set $\Omega$, so that we use different partitions of $\Omega$ in each iteration. See Algorithm~\ref{algo:completion-altmin} for a pseudo-code of our variant of the alternating minimization approach.
\begin{algorithm}[th]
\caption{{\bf AltMinComplete:} Alternating minimization for matrix completion}
\label{algo:completion-altmin}
\begin{algorithmic}[1]
\STATE Input: observed set $\Omega$, values $P_{\Omega}(M)$
\STATE Partition $\Omega$ into $2T+1$ subsets $\Omega_0,\cdots,\Omega_{2T}$ with each element of $\Omega$ belonging to one of the $\Omega_t$ with equal probability (sampling with replacement)
\STATE $\widehat{U}^{0}=SVD(\frac{1}{p}P_{\Omega_0}(M), k)$ i.e., top-$k$ left singular vectors of $\frac{1}{p}P_{\Omega_0}(M)$
\STATE Clipping step : Set all elements of $\widehat{U}^{0}$ that have magnitude greater than $\frac{2\mu \sqrt{k}}{\sqrt{n}}$ to zero and orthonormalize the columns of $\widehat{U}^0$
\FOR{$t=0,\cdots,T-1$}
	\STATE $\widehat{V}^{t+1}\leftarrow \argmin_{V\in \R^{n\times k}} \  \|P_{\Omega_{t+1}}(\widehat{U}^t V^\dag - M)\|_F^2$
	\STATE $\widehat{U}^{t+1}\leftarrow \argmin_{U\in \mathbb{R}^{m\times k}} \  \|P_{\Omega_{T+t+1}}(U\left(\widehat{V}^{t+1}\right)^\dag - M) \|_F^2$
\ENDFOR
\STATE Return $X=\widehat{U}^T(\widehat{V}^T)^\dag $
\end{algorithmic}
\end{algorithm}

We now present our main result for \eqref{eq:matcomp}:
\begin{theorem}\label{thm:sampling-altmin-main}
Let $M=U^*\Sigma^*V^{*^\dag } \in \mathbb{R}^{m\times n}$ ($n\geq m$) be a rank-$k$ incoherent matrix, i.e., both $U^*$ and $V^*$ are $\mu$-incoherent (see Definition~\ref{defn:sampling-incoherentmatrices}).  Also, let each entry of $M$ be observed uniformly and independently with probability, 
$$p>C\frac{\left(\frac{\sigma_1^*}{\sigma_k^*}\right)^2\mu^2k^{2.5}\log n \log \frac{k\|M\|_F}{\epsilon}}{m\delta_{2k}^2},$$
where $\delta_{2k}\leq \frac{\so_k}{12k\so_1}$ and $C>0$ is a global constant. 
 Then w.h.p. for $T = C'\log\frac{\frob{M}}{\epsilon}$, the outputs $\widehat{U}^{T}$ and $V^{T}$ of  Algorithm \ref{algo:completion-altmin}, with input $(\Omega, P_\Omega(M))$ (see Equation \eqref{eq:pomega}) satisfy: 
$  \frob{M - \widehat{U}^{T}\left(V^{T}\right)^{\dag}} \leq \epsilon.$
\end{theorem}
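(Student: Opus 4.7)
My approach is an inductive argument that tracks a principal-angle-like distance $\dist(\widehat{U}^t, U^*) := \|(I-\widehat{U}^t(\widehat{U}^t)^\dag)U^*\|_2$ (and the analogue for $V$), showing two invariants maintained across iterations: (i) $\widehat{U}^t$ (and $\widehat{V}^t$) is $O(\mu)$-incoherent, and (ii) the distance contracts by a factor of at least $1/2$ (say) per iteration. The critical structural device is the partitioning of $\Omega$ into $2T+1$ independent subsamples, so that the samples $\Omega_{t+1}$ used to compute $\widehat{V}^{t+1}$ are independent of $\widehat{U}^t$; this lets me apply fresh concentration at every step.

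\textbf{Step 1: Initialization.} I would apply a matrix Bernstein-type bound to $\frac{1}{p}P_{\Omega_0}(M) - M$. Under the stated lower bound on $p$ together with $\mu$-incoherence of $M$, this yields $\|\frac{1}{p}P_{\Omega_0}(M) - M\|_2 \leq c\, \sigma_k^*$ with high probability. Wedin's $\sin\Theta$ theorem then gives $\dist(\widehat{U}^0_{\text{SVD}}, U^*) \leq 1/2$ (or any desired small constant). To make $\widehat{U}^0$ incoherent, I would argue that rows of the top-$k$ singular vectors of $\frac{1}{p}P_{\Omega_0}(M)$ inherit $\mu$-incoherence up to a constant from $U^*$, except for a very small set of ``bad'' rows; the clipping step kills those, and an easy perturbation bound shows the clipped-and-reorthonormalized $\widehat{U}^0$ still satisfies $\dist(\widehat{U}^0, U^*) \leq 1/2$ while being $O(\mu)$-incoherent.

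\textbf{Step 2: One-step contraction.} For the inductive step I would analyze the column-wise least-squares update. Writing out the normal equations for each column $j$ of $\widehat{V}^{t+1}$,
\begin{equation*}
B_j \, \widehat{v}^{t+1,(j)} \;=\; \sum_{i : (i,j)\in \Omega_{t+1}} \widehat{u}^{t,(i)} \, M_{ij}, \qquad B_j := \sum_{i : (i,j)\in \Omega_{t+1}} \widehat{u}^{t,(i)} (\widehat{u}^{t,(i)})^\dag .
\end{equation*}
Because $\Omega_{t+1}$ is independent of $\widehat{U}^t$ and $\widehat{U}^t$ is incoherent, a scalar Bernstein bound combined with a union bound over $j$ shows $B_j \approx p\, I_k$ uniformly. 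Substituting $M_{ij} = \langle u^{*(i)}, \Sigma^* v^{*(j)}\rangle$ and isolating the ``signal'' part, I would derive an identity of the schematic form
\begin{equation*}
\widehat{V}^{t+1} \;=\; V^* \Sigma^* (U^*)^\dag \widehat{U}^t \;+\; E^{t+1},
\end{equation*}
where $E^{t+1}$ is an error term whose columns are controlled by the deviation of $B_j$ and of $\frac{1}{p}\sum_i \widehat{u}^{t,(i)} M_{ij}$ from their expectations. Bounding $\|E^{t+1}\|_2$ via matrix Bernstein (this is where the polylog and $\mu^2 k^{2.5}$ factors in $p$ are consumed) and then applying a standard QR/$\sin\Theta$ computation to the expression above gives $\dist(V^{t+1}, V^*) \leq \frac{1}{2}\dist(\widehat{U}^t, U^*)$; symmetrically for the $U$-update.

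\textbf{Step 3: Incoherence of $\widehat{V}^{t+1}$ and aggregation.} The subtle point is that least-squares need not preserve incoherence automatically. I would bound each row $\widehat{V}^{t+1,(j)}$ by leveraging the closed-form expression: the signal term $\Sigma^* (U^*)^\dag \widehat{U}^t$ has bounded operator norm, and the per-column error concentrates in $\ell_2$ norm (not just in operator norm); together with orthonormalization these yield $\mu$-incoherence with a slightly worse constant, which can be reabsorbed into $C$ in the statement. Iterating both invariants for $T = O(\log(\|M\|_F/\epsilon))$ rounds, doing a final union bound over the $T$ independent subsamples, yields $\dist(\widehat{U}^T, U^*) \leq \epsilon/\|M\|_F$ and hence $\|M - \widehat{U}^T(V^T)^\dag\|_F \leq \epsilon$.

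\textbf{Main obstacle.} The hardest step, in my view, is maintaining the incoherence invariant through the least-squares alternation: the natural operator-norm bounds from matrix Bernstein are insufficient because incoherence is a row-wise (infinity-to-two) property. One needs a per-row Bernstein argument that uses the independence of $\Omega_{t+1}$ from $\widehat{U}^t$ carefully, and the constants must interact correctly with the $\sigma_1^*/\sigma_k^*$ factors in the contraction analysis so that the same $\mu$ propagates indefinitely. The $k^{2.5}$ in the sampling rate and the need to union-bound over $O(n)$ columns and $T$ iterations are both driven by this row-wise analysis.
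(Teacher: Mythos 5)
Your proposal tracks the paper's proof essentially step for step: fresh subsamples per round so $\Omega_{t+1}$ is independent of $\widehat{U}^t$, initialization by spectral method plus clipping, the perturbed-power-method identity $\widehat{V}^{t+1} = V^*\Sigma^*(U^*)^\dag U^t - F$, concentration of the per-column Gram matrices $B_j$ toward $I_k$, a bound on the QR triangular factor, and a separate row-wise argument to propagate incoherence. The one quantitative place where your sketch would fail as written is Step~1. You claim it suffices to have $\dist(\widehat{U}^0, U^*) \leq 1/2$ before clipping ``(or any desired small constant).'' But the clip-then-reorthonormalize step inflates the subspace distance by roughly $\sqrt{k}$: if the pre-clipping distance is $d$, the clipped rows carry Frobenius mass $O(\sqrt{kd})$, so after reorthonormalization the distance is of order $\sqrt{kd}$. (The paper's Lemma~\ref{lemma:clipping-incoherence-distance} gives $\dist(\widetilde{U},U^*) \leq 4\sqrt{kd}$ and incoherence parameter $4\mu\sqrt{k}$.) A constant pre-clipping $d$ therefore yields a post-clipping distance exceeding $1$ for any $k\geq 2$; you in fact need $d=O(1/k)$, which the paper secures via a spectral bound of the form $\dist(\widehat{U}^0,U^*) \leq 1/(64k)$ (Lemma~\ref{lemma:first-step-svd}, using the KOM-style bound on $\|\frac{1}{p}P_{\Omega_0}(M)-M\|_2$). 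Your matrix-Bernstein-plus-Wedin route can deliver this, but only because the stated sampling rate $p$ already carries the needed $k$-dependence; your write-up should commit to the $O(1/k)$ scaling rather than a universal constant. Similarly, be aware that the propagated incoherence parameter is not $O(\mu)$ but rather $\mu_1 = O\big((\sigma_1^*/\sigma_k^*)\,\mu\sqrt{k}\big)$, which then feeds back into the variance terms in the per-row Bernstein bounds and is part of what drives the $k^{2.5}$ and condition-number factors in $p$. Everything else in your plan, including the final conversion from $\dist(\widehat{U}^T,U^*) \leq \epsilon/\|M\|_F$ to a Frobenius error bound, matches the paper.
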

The above theorem implies that by observing $\left|\Omega\right|=O\left((\frac{\so_1}{\so_k})^4k^{4.5}n \log n\log(k\|M\|_F/\epsilon)\right)$ random entries of an incoherent $M$, \amc can recover $M$ in $O(\log(1/\epsilon))$ steps. In terms of sample complexity ($|\Omega|$), our results show alternating minimization may require a bigger $\Omega$ than convex optimization, as our result has $|\Omega|$ depend on the condition number, required accuracy ($\epsilon$) and worse dependence on $k$ than known  bounds. In contrast, trace-norm minimization based methods require $O(kn \log n)$ samples only. 

Empirically however, this is not seen to be the case \cite{JainMD10} and we leave further tightening of the sample complexity bounds for matrix completion as an open problem. 

In terms of time complexity, we show that \amc needs  time $O(|\Omega| k^2 \log (1/\epsilon))$. This is in contrast to popular trace-norm minimization based methods that need $O(1/\sqrt{\epsilon})$ steps \cite{CaiCS2008} and total time complexity of $O(|\Omega|n/\sqrt{\epsilon})$; note that the latter can be potentially quadratic in  $n$. Furthermore, each step of such methods requires computation of the SVD of an $m\times n$ matrix. As mentioned earlier, interior point methods for trace-norm minimization also converge in $O(\log (1/\epsilon))$ steps but each iteration requires $O(n^5)$ steps and need storage of the entire $m\times n$ matrix $X$.

\section{Related Work}\label{sec:related}
{\bf Alternating Minimization}: Alternating minimization and its variants have been applied to several low-rank matrix estimation problems. For example, clustering \cite{KimPark08b}, sparse PCA \cite{ZouHT06}, non-negative matrix factorization \cite{KimPar08}, signed network prediction \cite{HsiehCD12} etc. There are three main reasons for such wide applicability of this approach: a) low-memory footprint and fast iterations, b) flexible modeling, c) amenable to parallelization. However, despite such empirical success, this approach has largely been used as a heuristic and has had no theoretical analysis other than the guarantees of convergence to the {\em local minima} \cite{Zangwill69}. Ours is the first analysis of this approach for two practically important problems: a) matrix completion, b) matrix sensing. 

\noindent {\bf Matrix Completion:} This is the problem of completing a low-rank matrix from a few sampled entries. Candes and Recht  \cite{CandesR2007} provided the first results on this problem, showing that under the random sampling and incoherence conditions (detailed above), $O(kn^{1.2}\log n)$ samples allow for recovery via convex trace-norm minimization; this was improved to $O(kn\log n)$ in \cite{CandesT2009}. For large matrices, this approach is not very attractive due to the need to store and update the entire matrix, and because iterative methods for trace norm minimization require  $O(\frac{1}{\sqrt{\epsilon}})$ steps to achieve additive error of $\epsilon$. Moreover, each such step needs to compute an SVD. 

Another  approach, in \cite{KeshavanOM2009}, involved taking a single SVD, followed by gradient descent  on a Grassmanian manifold. However,  {\em (a)} this is more expensive than alternating minimization as it needs to compute gradient over Grassmanian manifold which in general is a computationally intensive step, and {\em (b)} the analysis of the algorithm only guarantees asymptotic convergence, and in the worst case might take exponential time in the problem size. 

Recently, several other matrix completion type of problems have been studied in the literature. For example, robust PCA \cite{ChandrasekaranSPW11,CandesLMW11}, spectral clustering \cite{JalaliCSX11} etc. Here again, under additional assumptions, convex relaxation based methods have rigorous analysis but alternating minimization based algorithms continue to be  algorithms of choice in practice. 
 
\noindent {\bf Matrix Sensing:} The general problem of matrix sensing was first proposed by \cite{RechtFP2007}. They  established recovery via trace norm minimization, assuming the sensing operator satisfies ``restricted isometry" conditions. Subsequently, several other methods \cite{JainMD10, LeeB10}  were proposed for this problem that also recovers the underlying matrix with optimal number of measurements and can give an $\epsilon$-additive approximation in time $O(\log(1/\epsilon)$. But, similar to matrix completion, most of  these methods require computing SVD of a large matrix at each step and hence have poor scalability to large problems.  

We show that \as and AltMin-Completion provide more scalable algorithms for their respective problems. We  demonstrate that these algorithms have geometric convergence to the optima, while each iteration is relatively cheap. For this, we assume conditions similar to those required by existing algorithms; albeit, with one drawback: number of samples required by our analysis depend on the condition number of the underlying matrix $M$. For the matrix sensing problem, we remove this requirement by using a stagewise algorithm; we leave similar analysis for matrix completion as an open problem.

\section{Matrix Sensing}\label{sec:sensing}
In this section, we  study the matrix sensing problem \eqref{eq:lrms} and prove that if the measurement operator, $\aff$, satisfies RIP then \as (Algorithm \ref{algo:sensing-altmin}) recovers the underlying low-rank matrix {\em exactly} (see Theorem~\ref{thm:sensing_err}).

At a high level, we prove Theorem \ref{thm:sensing_err} by showing that  the ``distance'' between subspaces spanned by $\widehat{V}^t$ (iterate at time $t$) and $V^*$ decreases exponentially with $t$. This done based on the observation that once the (standard) matrix RIP condition (Definition \ref{defn:rip}) holds, alternating minimization can be viewed, and analyzed, as a {\bf perturbed version of the power method.} This is easiest to see for the rank-1 case below; we detail this proof, and then the more general rank-$k$ case.

In this paper, we use the following definition of distance between subspaces:
\begin{definition}\cite{GolVan96}\label{defn:dist}
  Given two matrices $\widehat{U},\widehat{W}\in \mathbb{R}^{m\times k}$, the (principal angle) distance between the subspaces spanned by the columns of $\widehat{U}$ and $\widehat{W}$ is given by:
\begin{align*}
  \dist\left(\widehat{U},\widehat{W}\right) \eqdef \twonorm{U_{\perp}^\dag W} = \twonorm{W_{\perp}^\dag U}
\end{align*}
where $U$ and $W$ are orthonormal bases of the spaces $\textrm{Span}\left(\widehat{U}\right)$ and $\textrm{Span}\left(\widehat{W}\right)$, respectively. Similarly, $U_{\perp}$ and $W_{\perp}$ are any orthonormal bases of the perpendicular spaces $\textrm{Span}\left(U\right)^{\perp}$ and $\textrm{Span}\left(W\right)^{\perp}$, respectively.
\end{definition}
{\bf Note:} {\em (a)} The distance depends only on the spaces spanned by the columns of $\widehat{U},\widehat{W}$, {\em (b)}
if the ranks of $\widehat{U}$ and $\widehat{W}$ (i.e. the dimensions of their spans) are not equal, then $\dist\left(\widehat{U},\widehat{W}\right) = 1$, and {\em (c)} $\dist\left(\widehat{U},\widehat{W}\right) = 0$ if and only if they span the same subspace of $\mathbb{R}^m$.

We now present a theorem that bounds the distance between the subspaces spanned by $\widehat{V}^t$ and $V^*$ and show that it decreases exponentially with $t$.
\begin{theorem}\label{thm:sensing}
Let $b=\aff(M)$ where  $M$ and $\aff$ satisfy assumptions given in Theorem~\ref{thm:sensing_err}. Then, the $(t+1)$-th iterates $\widehat{U}^{t+1}$, $\widehat{V}^{t+1}$ of \as satisfy:
\begin{align*}
  \dist\left(\widehat{V}^{t+1},V^*\right) &\leq \frac{1}{4}\cdot\dist\left(\widehat{U}^{t}, U^*\right)  \mbox{ , } \\
  \dist\left(\widehat{U}^{t+1},U^*\right) &\leq \frac{1}{4}\cdot\dist\left(\widehat{V}^{t+1}, V^*\right)
\end{align*}
where 
$\dist\left(U, W\right)$ denotes the principal angle based distance (see Definition~\ref{defn:dist}).
\end{theorem}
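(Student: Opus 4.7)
The plan is to view the least-squares update as a perturbed power iteration and use RIP to control the perturbation. Since $\dist(\cdot,\cdot)$ depends only on column spans, I may assume without loss of generality that $\widehat{U}^t$ has orthonormal columns (otherwise replace it by the $Q$-factor of its $QR$-decomposition, which does not change $\widehat{U}^t \widehat{V}^{t+1\dag}$ or the subsequent subspace iterates). Setting the gradient of $V \mapsto \|\calA(\widehat{U}^t V^\dag - M)\|_2^2$ to zero and writing $\calB := \calA^*\calA - I$, the normal equation $(\widehat{U}^t)^\dag \calA^*\calA(\widehat{U}^t \widehat{V}^{t+1\dag} - M) = 0$ rearranges to
\[
  \widehat{V}^{t+1\dag} \;=\; (\widehat{U}^t)^\dag M \;-\; (\widehat{U}^t)^\dag \calB(X), \qquad X := \widehat{U}^t \widehat{V}^{t+1\dag} - M.
\]
After transposing, the first term is $V^*\Sigma^*(U^*)^\dag \widehat{U}^t$, exactly one step of subspace power iteration on $M^\dag$ starting from $\widehat{U}^t$; the second is the error RIP will make small. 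By the symmetry $U \leftrightarrow V$ (swap the matrices and apply the transposed measurement operator), proving the first inequality suffices.

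I bound $\dist(\widehat{V}^{t+1}, V^*)$ through the $QR$-decomposition $\widehat{V}^{t+1} = V^{t+1} R^{t+1}$, giving
\[
  \dist(\widehat{V}^{t+1}, V^*) \;\leq\; \frac{\|(V^*_\perp)^\dag \widehat{V}^{t+1}\|_2}{\sigma_{\min}(\widehat{V}^{t+1})}.
\]
For the numerator, left-multiplying the update by $(V^*_\perp)^\dag$ annihilates the ideal term since $(V^*_\perp)^\dag V^* = 0$; applying the standard RIP polarization identity $|\langle \calB(X), Y\rangle| \leq \delta_{r+s}\|X\|_F\|Y\|_F$ (for $X$ rank $\leq r$, $Y$ rank $\leq s$) with $Y = \widehat{U}^t z (V^*_\perp y)^\dag$ (rank $1$) yields $\|(V^*_\perp)^\dag \widehat{V}^{t+1}\|_2 \leq \delta_{2k+1}\|X\|_F$. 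For the denominator, the ideal term $V^*\Sigma^*(U^*)^\dag \widehat{U}^t$ has smallest singular value $\sigma^*_k \sqrt{1 - \dist(\widehat{U}^t, U^*)^2}$, while the perturbation $\calB(X)^\dag \widehat{U}^t$ has operator norm $\leq \delta_{2k+1}\|X\|_F$. Inductively maintaining $\dist(\widehat{U}^t, U^*) \leq 1/2$ (the base case following from Wedin's $\sin\theta$ theorem applied to the SVD-based initialization, since $\calA^*\calA(M) \approx M$ in operator norm by RIP) yields $\sigma_{\min}(\widehat{V}^{t+1}) \geq \sigma^*_k/3$ once the RIP hypothesis is invoked.

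The main obstacle is the residual $\|X\|_F$, which contains $\widehat{V}^{t+1}$ itself and forbids a naive recursive argument. The key trick is to exploit optimality of $\widehat{V}^{t+1}$ in the least squares: comparing with the feasible competitor $V_{\mathrm{test}}^\dag := (\widehat{U}^t)^\dag M$ and using RIP in both directions,
\[
  (1-\delta_{2k})\|X\|_F^2 \;\leq\; \|\calA(X)\|_2^2 \;\leq\; \|\calA((I - \widehat{U}^t(\widehat{U}^t)^\dag)M)\|_2^2 \;\leq\; (1+\delta_{2k})\,k(\sigma^*_1)^2 \dist(\widehat{U}^t, U^*)^2,
\]
where the last step bounds $\|(\widehat{U}^t_\perp)^\dag U^*\Sigma^* V^{*\dag}\|_F \leq \|(\widehat{U}^t_\perp)^\dag U^*\|_F \cdot \sigma^*_1 \leq \sqrt{k}\,\sigma^*_1\,\dist(\widehat{U}^t, U^*)$. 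This yields $\|X\|_F \leq 2\sqrt{k}\sigma^*_1 \dist(\widehat{U}^t, U^*)$, and combining the ingredients,
\[
  \dist(\widehat{V}^{t+1}, V^*) \;\leq\; \frac{6\,\delta_{2k+1}\sqrt{k}\,\sigma^*_1}{\sigma^*_k}\,\dist(\widehat{U}^t, U^*) \;\leq\; \frac{1}{4}\dist(\widehat{U}^t, U^*),
\]
where the final inequality uses the hypothesis $\delta_{2k} \leq (\sigma^*_k)^2/(100k(\sigma^*_1)^2)$ (much stronger than the $\sigma^*_k/(\sqrt{k}\sigma^*_1)$-scale bound actually needed). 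The second inequality of the theorem is proved identically after swapping the roles of $U$ and $V$.
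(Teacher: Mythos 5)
Your proof is correct and takes a genuinely different route from the paper's. Both arguments start from the same conceptual skeleton: after normalizing $\widehat{U}^t$ (WLOG orthonormal via $QR$), the least-squares update $\widehat{V}^{t+1\dag} = (\widehat{U}^t)^\dag M - (\widehat{U}^t)^\dag\calB(X)$ is one step of subspace power iteration on $M^\dag$ plus a perturbation, and one then bounds the numerator $\twonorm{(V^*_\perp)^\dag\Vht}$ and denominator $\sigma_{\min}(\Vht)$ separately. Where you part ways is in how the perturbation is controlled. The paper unpacks the normal equations into the block matrices $B,C,D,S$ (Lemma~\ref{lemma:sensing-updateeqn1}), writes the error as $F$ with $i$-th column the $i$-th block of $B^{-1}(BD-C)Sv^*$, and bounds $\twonorm{F(\Sigma^*)^{-1}}$ (Lemma~\ref{lemma:sensing-twonorm-F}) directly via RIP polarization applied to the explicitly rank-$k$ pieces $U^tW^\dag$ and $\bigl(U^t(U^t)^\dag - I\bigr)U^*Z^\dag$, using the structural cancellation $\bigl(U^t(U^t)^\dag - I\bigr)U^t = 0$. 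You instead keep $\calB = \calA^*\calA - I$ as a black box and bound the perturbation via $\|\calB(X)\| \lesssim \delta\,\frob{X}$, then close the loop by bounding $\frob{X}$ with the optimality comparison against the feasible point $V_{\mathrm{test}}^\dag = (\widehat{U}^t)^\dag M$. This optimality trick is the idea that does not appear in the paper's proof of this theorem (the paper only uses an optimality comparison of this flavor at the very end, in the $\zeta_2$ step of the proof of Theorem~\ref{thm:sensing_err}), and it buys you a cleaner argument and in fact a slightly sharper contraction factor: you get $O(\delta_{2k}\sqrt{k}\,\sigma_1^*/\sigma_k^*)$ per step, whereas the paper's Frobenius-to-spectral relaxation of $F(\Sigma^*)^{-1}$ gives $O(\delta_{2k}k\,\sigma_1^*/\sigma_k^*)$; both are comfortably below $1/4$ under the hypothesis. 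Your base-case argument (Wedin's $\sin\theta$ applied to $\calA^*\calA(M)$) is also different from the paper's (which cites the SVP Lemma~\ref{lem:jmd}) but equally valid.

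One technical wrinkle you should repair: you pair the rank-$\leq 2k$ residual $X$ with a rank-$1$ test matrix and hence invoke $\delta_{2k+1}$, but the theorem's hypothesis only controls $\delta_{2k}$, and RIP constants are nondecreasing in rank so this does not automatically transfer. The fix is simple and keeps your argument intact: the test matrix $\widehat{U}^t z\,(V^*_\perp y)^\dag$ has column space $\mathrm{Span}(\widehat{U}^t)$, which is already contained in $\mathrm{Span}(\widehat{U}^t)+\mathrm{Span}(U^*)$, the at-most-$2k$-dimensional subspace containing the column space of $X$. Hence $X$ plus the test matrix has rank at most $2k$ (not $2k+1$), and the polarization bound goes through with $\delta_{2k}$ as required. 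The same observation applies to the perturbation in the denominator. Also note the polarization inequality carries a factor of $3$ (as in Lemma~\ref{lem:alt_rip}), which you dropped; this only affects absolute constants and not correctness.
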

Using Theorem \ref{thm:sensing}, we are now ready to prove Theorem \ref{thm:sensing_err}.
\begin{proof}[Proof Of Theorem~\ref{thm:sensing_err}]
Assuming correctness of Theorem~\ref{thm:sensing}, Theorem~\ref{thm:sensing_err} follows by using the following set of inequalities:
\begin{align*}
  \|M-\widehat{U}^T(\widehat{V}^T)^\dag\|_F^2&\stackrel{\zeta_1}{\leq} \frac{1}{1-\delta_{2k}}\|\aff(M-\widehat{U}^T(\widehat{V}^T)^\dag)\|_2^2,\\
&\stackrel{\zeta_2}{\leq} \frac{1}{1-\delta_{2k}}\|\aff(M(I-V^T(V^T)^\dag))\|_2^2,\\
&\stackrel{\zeta_3}{\leq} \frac{1+\delta_{2k}}{1-\delta_{2k}}\|U^*\Sigma^*(V^*)^\dag(I-V^T(V^T)^\dag))\|_F^2,\\
&\stackrel{\zeta_4}{\leq} \frac{1+\delta_{2k}}{1-\delta_{2k}} \|M\|_F^2\dist^2\left(V^T, V^*\right)\stackrel{\zeta_5}{\leq} \epsilon,
\end{align*}
where $V^T$ is an orthonormal basis of $\widehat{V}^T$, $\zeta_1$ and $\zeta_3$ follow by RIP, $\zeta_2$ holds as $\widehat{U}^T$ is the least squares solution,
$\zeta_4$ follows from the definition of $\dist(\cdot,\cdot)$ and finally $\zeta_5$ follows from Theorem~\ref{thm:sensing} and by setting $T$ appropriately.
\end{proof}
To complete the proof of Theorem~\ref{thm:sensing_err}, we now need to prove Theorem~\ref{thm:sensing}. In the next section, we illustrate the main ideas of the proof of Theorem~\ref{thm:sensing} by applying it to a rank-$1$ matrix i.e., when $k = 1$. We then provide a proof of Theorem \ref{thm:sensing} for arbitrary $k$ in Section~\ref{sec:sensing_k}.
\subsection{Rank-$1$ Case}\label{sec:sensing_1}
In this section, we provide a proof of Theorem~\ref{thm:sensing} for the special case of $k=1$. That is, let $M=u^*\sigma^*(v^*)^\dag $ s.t. $u^*\in\mathbb{R}^{m},\ \|u^*\|_2=1$ and $v^*\in\mathbb{R}^{n}, \|v^*\|_2=1$. Also note that when $\widehat{u}$ and $\widehat{w}$ are vectors, $\dist(\widehat{u}, \widehat{w})=1-(u^\dag w)^2$, where $u=\widehat{u}/\|\widehat{u}\|_2$ and $w=\widehat{w}/\|\widehat{w}\|_2$. 

Consider the $t$-th update step in the \as procedure. As $\widehat{v}^{t+1}=$\\
$\argmin_{\widehat{v}}\sum_{i=1}^d\left(\widehat{u}^{t\dag} A_i^\dag \widehat{v}-\so u^{*^\dag} A_i^\dag v^*\right)^2$,
setting the gradient of the above objective function to $0$, we obtain:
\begin{align*}
  \left(\sum_{i=1}^d A_iu^t(u^t)^\dag A_i^\dag \right)\|\widehat{u}^t\|_2\wvto&=\sigma^*\left(\sum_{i=1}^d A_iu^tu^{*^\dag} A_i^\dag \right)\vo,
\end{align*}
where $u^t=\widehat{u}^t/\|\widehat{u}^t\|_2$. 
Now, let $B=\sum_{i=1}^d A_iu^t(u^t)^\dag A_i^\dag$ and $C=\sum_{i=1}^d A_iu^t(u^*)^\dag A_i^\dag$. Then, 
\begin{align}
  \|\widehat{u}^t\|_2\wvto&=\sigma^*B^{-1}C\vo,\nonumber\\
&=\underbrace{\ip{\uo}{\ut}\sigma^*\vo}_{\text{Power Method}}-\underbrace{B^{-1}\left(\ip{\uo}{\ut}B-C\right)\sigma^*\vo}_{\text{Error Term}}. \label{eq:error_1}
\end{align}
Note that the first term in the above expression is the power method iterate (i.e., $M^{\dag} u^t$). The second term is an error term and the goal is to show that it becomes smaller as $\ut$ gets closer to $\uo$. Note that when $\ut=\uo$, the error term is $0$ {\em irrespective} of the measurement operator $\aff$.

Below, we provide a precise bound on the error term:
\begin{lemma}
\label{lem:error_1}
Consider the error term defined in \eqref{eq:error_1} and let $\aff$ satisfy $2$-RIP with constant $\delta_{2}$. Then, 
$$\|B^{-1}\left(\ip{\uo}{\ut}B-C\right)\vo\|\leq \frac{3\delta_{2}}{1-3\delta_{2}}\sqrt{1-\ip{\ut}{\uo}^2}$$
\end{lemma}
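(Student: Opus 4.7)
The plan is to isolate the part of $u^*$ orthogonal to $u^t$ and show that RIP forces the resulting ``cross term'' to be small. Decompose $\uo = \alpha \ut + \beta p$ with $\alpha = \ip{\uo}{\ut}$, $p$ a unit vector perpendicular to $\ut$, and $\beta = \sqrt{1-\ip{\ut}{\uo}^2}$. A direct computation gives $\alpha (u^t)^\dag - (u^*)^\dag = -\beta p^\dag$, so $\ip{\uo}{\ut} B - C = -\beta D$, where $D := \sum_i A_i u^t p^\dag A_i^\dag$. The target quantity therefore factors as $\beta\,\|B^{-1} D \vo\|\leq \beta\,\|B^{-1}\|\cdot\|D\vo\|$, and it remains to bound each of the two factors.

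Controlling $B^{-1}$ is routine. For any $y\in \R^m$, the trace identity gives $y^\dag B y = \sum_i (y^\dag A_i u^t)^2 = \|\aff(y(u^t)^\dag)\|_2^2$. Since $y(u^t)^\dag$ is rank one, applying Definition~\ref{defn:rip} (with $k=2$, which certainly covers rank $1$) yields $(1-\delta_2)\|y\|^2 \leq y^\dag By \leq (1+\delta_2)\|y\|^2$, so $\|B^{-1}\|\leq 1/(1-\delta_2)$.

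The substance of the argument is the bound $\|D\vo\|\leq \delta_2$. Fix a unit $x \in \R^m$ and rewrite the scalar as an inner product of two rank-one measurement vectors: $x^\dag D\vo = \sum_i \ip{A_i}{x(u^t)^\dag}\cdot\ip{A_i}{\vo p^\dag} = \ip{\aff(x(u^t)^\dag)}{\aff(\vo p^\dag)}$. The tool I would invoke is the standard polarization consequence of RIP, namely that for matrices $P,Q$ of ranks $r_1,r_2$, $\left|\ip{\aff(P)}{\aff(Q)} - \ip{P}{Q}\right| \leq \delta_{r_1+r_2}\|P\|_F\|Q\|_F$, obtained by applying \eqref{eqn:sensing-2k-RIP1} to $P\pm Q$ and subtracting (after a trivial rescaling to $\|P\|_F=\|Q\|_F=1$). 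Here both matrices have rank $1$ and unit Frobenius norm, so the slack is exactly $\delta_2$. The crucial payoff is that the underlying exact inner product vanishes: $\ip{x(u^t)^\dag}{\vo p^\dag} = (x^\dag \vo)(p^\dag u^t) = 0$, because $p\perp u^t$ by construction. Thus $|x^\dag D\vo|\leq \delta_2$, and supping over $x$ gives $\|D\vo\|\leq \delta_2$.

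Combining the two bounds and restoring the factor $\beta$ gives $\|B^{-1}(\ip{\uo}{\ut}B - C)\vo\| \leq \frac{\delta_2}{1-\delta_2}\sqrt{1-\ip{\ut}{\uo}^2}$, which is at most $\frac{3\delta_2}{1-3\delta_2}\sqrt{1-\ip{\ut}{\uo}^2}$ for $\delta_2 < 1/3$ (certainly guaranteed under the hypothesis of Theorem~\ref{thm:sensing_err}). The only genuinely non-trivial ingredient is the polarization form of RIP, and once that is available, the orthogonality $p\perp u^t$ does the heavy lifting by cancelling the leading-order term in the approximate inner product and leaving only the $\delta_2$-small RIP slack.
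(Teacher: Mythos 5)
Your proof is correct and follows essentially the same route as the paper's: peel off $\|B^{-1}\|$, and then control the cross term $\ip{\uo}{\ut}B-C$ by writing it in terms of the component of $u^*$ orthogonal to $u^t$ and invoking a ``RIP preserves inner products'' lemma, whose payoff is that the orthogonality $p\perp u^t$ (equivalently, $\ip{\ut}{\ip{\uo}{\ut}\ut-\uo}=0$) annihilates the exact inner product and leaves only the RIP slack. The only difference is cosmetic: you use RIP directly and the genuine polarization identity to obtain constants $1-\delta_2$ and $\delta_2$ (hence the sharper bound $\frac{\delta_2}{1-\delta_2}\sqrt{1-\ip{\ut}{\uo}^2}$, which you correctly relax to the stated form), whereas the paper routes both bounds through its Lemma~\ref{lem:alt_rip}, which carries a factor of $3$.
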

See Appendix~\ref{app:sensing1} for a detailed proof of the above lemma. 

Using the above lemma, we now finish the proof of Theorem~\ref{thm:sensing}: 
\begin{proof}[Proof of Rank-$1$ case of Theorem~\ref{thm:sensing}]
Let $\vt=\wvto/\|\wvto\|_2$. Now, using \eqref{eq:error_1} and Lemma~\ref{lem:error_1}:, 
\begin{align*}
  &\ip{\vto}{\vo}=\frac{\ip{\wvto}{\vo}}{\|\wvto\|}=\frac{\ip{\wvto/\sigma^*}{\vo}}{\|\wvto/\sigma^*\|} \nonumber\\
&\leq \frac{\ip{\uo}{\ut}-\widehat{\delta}_2\sqrt{1-\ip{\uo}{\ut}^2}}{\sqrt{\left(\ip{\uo}{\ut}-\widehat{\delta}_2\sqrt{1-\ip{\uo}{\ut}^2}\right)^2+\widehat{\delta}_2^2\left(1-\ip{\uo}{\ut}^2\right)}},
\end{align*}
where $\widehat{\delta}_2= \frac{3\delta_2}{1-3\delta_2}$. That is, 
\begin{align*}&\dist^2(\vto, \vo) \leq \frac{\widehat{\delta}_2^2(1-\ip{\uo}{\ut}^2)}{(\ip{\uo}{\ut}-\widehat{\delta}_2\sqrt{1-\ip{\uo}{\ut}^2})^2+\widehat{\delta}_2^2(1-\ip{\uo}{\ut}^2)},\end{align*}
Hence, {\em assuming} $\ip{\uo}{\ut}\geq 5\widehat{\delta}_2$, $\dist(\vto, \vo)\leq \frac{1}{4}\dist(\ut, \uo).$
As $\dist(u^{t+1}, \uo)$ and $\dist(v^{t+1}, \vo)$ are decreasing with $t$ (from the above bound), we only need to show that $\ip{u^0}{\ut}\geq 5\widehat{\delta}_2$. 
Recall that $\widehat{u}^0$ is obtained by using one step of SVP algorithm \cite{JainMD10}. Hence, using Lemma 2.1 of \cite{JainMD10} (see Lemma~\ref{lem:jmd}): 
$$\|\sigma_1^*(I-u^0(u^0)^\dag)\uo)\|_2^2\leq \|M-\widehat{u}^0(\widehat{v}^0)^\dag\|_F^2\leq 2\delta_{2}\|M\|_F^2.$$
Therefore, $\ip{u^0}{\uo}\geq \sqrt{1-2\delta_2}\geq 5\widehat{\delta}_2$ assuming $\delta_{2}\leq \frac{1}{100}$. 
\end{proof}
\subsection{Rank-$k$ Case}
\label{sec:sensing_k}
In this section, we present the proof of Theorem \ref{thm:sensing} for arbitrary $k$, i.e., when $M$ is a rank-$k$ matrix (with SVD $U^*\Sigma^* \left(V^*\right)^{\dag})$.

Similar to the analysis for the rank-$1$ case (Section \ref{sec:sensing_1}), we show that even for arbitrary $k$, the updates of AltMinSense are essentially power-method type updates but with a bounded error term whose magnitude decreases with each iteration.

However, directly analyzing iterates of \as is a bit tedious due to non-orthonormality of intermediate iterates $\widehat{U}$. Instead, {\bf for analysis only} we consider the iterates of a modified version of \ase, where we explicitly orthonormalize each iterate using the QR-decomposition\footnote{The QR decomposition factorizes a matrix into an orthonormal matrix (a basis of its column space) and an upper triangular matrix; that is given $\widehat{S}$ it computes $\widehat{S} = SR$ where $S$ has orthonormal columns and $R$ is upper triangular. If $\widehat{S}$ is full-rank, so are $S$ and $R$.}. In particular, suppose we replace steps 4 and 5 of \ase with the following 
\begin{align}
\widehat{U}^{t}&= \, U^{t}R_U^{t}\ \ \text{ (QR decomposition), }\nonumber\\
\widehat{V}^{t+1}&\leftarrow \, \argmin_{V} \  \|\aff(U^t V^\dag )-b\|_2^2,\nonumber\\ 
\widehat{V}^{t+1}&= \, V^{t+1}R_V^{t+1}\ \ \text{ (QR decomposition) } \nonumber  \\
\widehat{U}^{t+1}&\leftarrow \, \argmin_{U} \  \|\aff(U (V^{t+1})^\dag )-b\|_2^2 \label{eq:update_k1}
\end{align}
In our algorithm, in each iterate both $\widehat{U}^{t},\widehat{V}^{t}$ remain full-rank because $\dist\left(U^t,U^*\right)<1$; with this, the following lemma implies that the spaces spanned by the iterates in our \ase ~ algorithm are {\em exactly the same} as the respective ones by the iterates of the above modified version (and hence the distances $\dist(\widehat{U}^{t},U^*)$ and $\dist(\widehat{V}^{t},V^*)$ are also the same for the two algorithms).

\begin{lemma}\label{lemma:QR-irrelevance}
Let $\widehat{U}^t$ be the $t^{th}$ iterate of our AltMinSense algorithm, and $\widetilde{U}^t$ of the modified version stated above. Suppose also that both $\widehat{U}^t,\widetilde{U}^t$ are full-rank, and span the same subspace. Then the same will be true for the subsequent iterates for the two algorithms, i.e. $Span(\widehat{V}^{t+1}) = Span(\widetilde{V}^{t+1})$, $Span(\widehat{U}^{t+1}) = Span(\widetilde{U}^{t+1})$, and all matrices at iterate $t+1$ will be full-rank.
\end{lemma}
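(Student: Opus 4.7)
The plan is to show that whenever $\widehat{U}^t$ and $\widetilde{U}^t$ span a common $k$-dimensional subspace, the $V$-update and then the $U$-update of the two algorithms produce matrices that differ only by right-multiplication by an invertible $k\times k$ matrix; hence the column spans agree and the ranks are preserved. First, because $\widehat{U}^t$ and $\widetilde{U}^t$ both have full column rank and share a column space, there is a unique invertible $R \in \R^{k\times k}$ with $\widehat{U}^t = \widetilde{U}^t R$. Plugging into the original $V$-update objective,
\begin{align*}
\|\aff(\widehat{U}^t V^\dag) - b\|_2^2 \;=\; \|\aff(\widetilde{U}^t R V^\dag) - b\|_2^2 \;=\; \|\aff(\widetilde{U}^t (V R^\dag)^\dag) - b\|_2^2,
\end{align*}
so the original objective in $V$ is the modified objective composed with the invertible substitution $V \mapsto V R^\dag$.

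Second, I would show that each least-squares problem has a unique minimizer, so that this bijection carries optimum to optimum. For any $U \in \R^{m\times k}$ of rank $k$, the linear map $V \mapsto \aff(UV^\dag)$ from $\R^{n\times k}$ to $\R^d$ is injective: $UV^\dag = 0$ forces $V = 0$ by full column rank of $U$, and the $2k$-RIP hypothesis guarantees $\aff$ is injective on rank-$\leq k$ matrices. Hence the quadratic objective is strictly convex and has a unique optimum. Combining with the change of variables above gives $\widehat{V}^{t+1} = \widetilde{V}^{t+1} R^{-\dag}$, so $\widehat{V}^{t+1}$ and $\widetilde{V}^{t+1}$ have identical column spans and identical rank. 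Iterating the same argument with the roles of $U$ and $V$ swapped handles the $U$-update: writing $\widehat{V}^{t+1} = V^{t+1} S$, where $V^{t+1}$ is the orthonormal factor produced by the modified algorithm and $S$ is invertible, the same substitution argument yields $\widehat{U}^{t+1} = \widetilde{U}^{t+1} S^{-\dag}$, so $Span(\widehat{U}^{t+1}) = Span(\widetilde{U}^{t+1})$.

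The main obstacle is the full-rankness claim: we need $\widetilde{V}^{t+1}$ (and hence $\widehat{V}^{t+1}$) to actually attain rank $k$, since only then can the QR decomposition in the modified algorithm proceed and only then are the identities $\widehat{V}^{t+1} = \widetilde{V}^{t+1} R^{-\dag}$ meaningful rank-preserving statements. The excerpt handles this by pointing to the distance analysis of Theorem~\ref{thm:sensing}: as long as $\dist(V^{t+1}, V^\ast) < 1$, Definition~\ref{defn:dist} forces the ranks to match, so $\widetilde{V}^{t+1}$ is rank $k$. I would simply cite this (or inductively assume the bound in tandem with the main theorem's induction). The remainder is routine bookkeeping with the QR factorizations $\widehat{U}^t = U^t R_U^t$ and $\widehat{V}^{t+1} = V^{t+1} R_V^{t+1}$, neither of which perturbs any column span.
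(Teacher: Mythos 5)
Your proposal is correct and matches the paper's argument: both proofs write $\widehat{U}^t = \widetilde{U}^t R$ for an invertible $R$, observe that this induces an invertible right-multiplication change of variables between the two least-squares problems (so the minimizers agree up to an invertible $k\times k$ factor, preserving column span and rank), and defer the full-rank claim to the distance bound $\dist(\widetilde{V}^{t+1},V^*)<1$ established in Theorem~\ref{thm:sensing}. Your explicit justification of uniqueness via injectivity of $V\mapsto\aff(UV^\dag)$ (strict convexity) makes a step the paper leaves implicit; the only cosmetic slip is that the substitution should be $V\mapsto V(R_U^t R)^\dag$ rather than $V\mapsto V R^\dag$ since the modified objective is written in terms of the orthonormalized $U^t$, but this does not affect the conclusion.
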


The proof of the above lemma can be found in Appendix \ref{app:sensingk}. In light of this, we will now prove Theorem~\ref{thm:sensing} {\bf with} the new QR-based iterates \eqref{eq:update_k1}.

\begin{lemma}\label{lemma:sensing-updateeqn1}
  Let $\Uht$ be the $t$-th step iterate of \as and let $\Ut, \widehat{V}^{t+1}$ and $\Vt$ be obtained by Update \eqref{eq:update_k1}. Then, 
\begin{align}
  \widehat{V}^{t+1} = \underbrace{V^*\Sigma^*{U^*}^\dag  U^{t}}_{\substack{\text{Power-method}\\ \text{Update}}} - \underbrace{F}_{\substack{\text{Error}\\ \text{Term}}},\ \ 
  V^{t+1}=\widehat{V}^{t+1}(R^{(t+1)})^{-1},\label{eq:update_k}
\end{align}
where $F$ is an error matrix defined in \eqref{eq:F_k} and $R^{(t+1)}$ is a triangular matrix obtained using QR-decomposition of $\widehat{V}^{t+1}$. 
\end{lemma}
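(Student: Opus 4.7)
The plan is to mimic the rank-$1$ derivation from Section~\ref{sec:sensing_1}, lifted to the level of $n\times k$ matrices. First I would write down the normal equations for the least-squares step defining $\widehat{V}^{t+1}$. Let $\aff_{U^t}:\mathbb{R}^{n\times k}\to\mathbb{R}^d$ denote the linear map $V\mapsto\aff(U^t V^\dag)$, with adjoint $\aff_{U^t}^{*}(y)=\sum_i y_i\, A_i^\dag U^t$. Setting the gradient of $\|\aff_{U^t}(V)-b\|_2^2$ to zero yields
\begin{align*}
\aff_{U^t}^{*}\aff_{U^t}(\widehat{V}^{t+1})\;=\;\aff_{U^t}^{*}\aff(M).
\end{align*}
This is the matrix-valued analog of the rank-$1$ equation $B\|\widehat{u}^t\|_2\widehat{v}^{t+1}=\sigma^{*}Cv^{*}$ from Section~\ref{sec:sensing_1}.

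Second, I would identify the power-method term. If $\aff$ were a perfect isometry (i.e.\ $\delta_{2k}=0$), the normal equations would collapse to $\widehat{V}^{t+1}(U^{t\dag}U^t)=M^\dag U^t$, and since $U^t$ has orthonormal columns thanks to the QR-step, this would force $\widehat{V}^{t+1}=M^\dag U^t=V^{*}\Sigma^{*}U^{*\dag}U^t$. Writing $V^{\mathrm{pm}}:=V^{*}\Sigma^{*}U^{*\dag}U^t$ and noting that $\aff_{U^t}(V^{\mathrm{pm}})=\aff(U^t U^{t\dag}M)$, I would split the RHS of the normal equations as
\begin{align*}
\aff_{U^t}^{*}\aff(M)\;=\;\aff_{U^t}^{*}\aff_{U^t}(V^{\mathrm{pm}})\;-\;\aff_{U^t}^{*}\aff\bigl((U^t U^{t\dag}-I_m)M\bigr).
\end{align*}
Inverting $\aff_{U^t}^{*}\aff_{U^t}$, which is positive definite under $2k$-RIP, yields the claimed identity $\widehat{V}^{t+1}=V^{*}\Sigma^{*}U^{*\dag}U^t-F$ with
\begin{align*}
F\;=\;\bigl(\aff_{U^t}^{*}\aff_{U^t}\bigr)^{-1}\aff_{U^t}^{*}\aff\bigl((U^t U^{t\dag}-I_m)M\bigr),
\end{align*}
which I take to be the definition of the error term in~\eqref{eq:F_k}. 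Observe that $F=0$ whenever $\mathrm{Span}(U^t)=\mathrm{Span}(U^{*})$, which exactly mirrors the vanishing of the rank-$1$ error term in~\eqref{eq:error_1}.

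The identity $V^{t+1}=\widehat{V}^{t+1}(R^{(t+1)})^{-1}$ is then immediate from the QR-decomposition step in~\eqref{eq:update_k1}, using invertibility of $R^{(t+1)}$, which holds whenever $\widehat{V}^{t+1}$ is full-rank (guaranteed by Lemma~\ref{lemma:QR-irrelevance}).

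The main obstacle is notational rather than conceptual: in the rank-$k$ case the operator $\aff_{U^t}^{*}\aff_{U^t}$ acts on the $nk$-dimensional space of $n\times k$ matrices rather than on $\mathbb{R}^n$, so one has to set up its adjoint carefully and repeatedly invoke $U^{t\dag}U^t=I_k$ when isolating the power-method iterate $V^{*}\Sigma^{*}U^{*\dag}U^t$. Beyond this bookkeeping, the derivation is a direct matrix-valued lift of~\eqref{eq:error_1}, with $\aff_{U^t}^{*}\aff_{U^t}$ playing the role of $B$ and $\aff_{U^t}^{*}\aff$ playing the role of the mixed operator $C$ from Section~\ref{sec:sensing_1}.
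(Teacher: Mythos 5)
Your proof is correct and follows essentially the same route as the paper: write the normal equations for the least-squares step, split off the power-method term $V^{*}\Sigma^{*}U^{*\dag}U^{t}$, and identify the residual as $F$; your closed-form $F=(\aff_{U^t}^{*}\aff_{U^t})^{-1}\aff_{U^t}^{*}\aff\bigl((U^tU^{t\dag}-I)M\bigr)$ is exactly what the paper's de-stacked $B^{-1}(BD-C)Sv^{*}$ encodes once one notes that $B$ is the block-matrix representation of $\aff_{U^t}^{*}\aff_{U^t}$ on vectorized $n\times k$ matrices. The only stylistic difference is that you work at the operator level, whereas the paper writes the gradient condition out explicitly and vectorizes via the block matrices $B,C,D,S$.
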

See Appendix~\ref{app:sensing} for a detailed proof of the above lemma. 

Before we give an expression for the error matrix $F$, we define the following notation.
Let $v^* \in \R^{nk}$ be given by: $v^*=vec(V^*)$, i.e., $v^*=\left[v_1^{*\dag} v_2^{*\dag}\dots v_k^{*\dag}\right]^\dag$. 
Define $B$, $C$, $D$, $S$ as follows: {\small 
\begin{align}\label{eq:BCD_k}
& B \eqdef \left[\begin{array}{ccc}
                 B_{11} & \cdots & B_{1k} \\
		 \vdots  	& \ddots &\vdots \\
                 B_{k1} & \cdots & B_{kk} \\
                \end{array} \right]
\mbox{ , }
 C \eqdef \left[\begin{array}{ccc}
                 C_{11} & \cdots & C_{1k} \\
		 \vdots  	& \ddots &\vdots \\
                 C_{k1} & \cdots & C_{kk} \\
                \end{array} \right]\mbox{ , }\nonumber\\
&D\eqdef \left[\begin{array}{ccc}
                 D_{11}  & \cdots & D_{1k} \\
		 \vdots 	& \ddots &\vdots \\
                 D_{k1} & \cdots & D_{kk} \\
                \end{array} \right]\mbox{ , }
S\eqdef \left[\begin{matrix}\sigma_1^*I_{n}&\dots&0_{n}\\\vdots&\ddots&\vdots\\0_n&\dots&\sigma_k^*I_n\end{matrix}\right]. 
\end{align}}
where , for $1\leq p,q\leq k$: $B_{pq} \eqdef \sum_{i=1}^d A_i u_p^{t}{u_q^{t}}^\dag  A_i^\dag $,\\ $C_{pq} \eqdef \sum_{i=1}^d A_i u_p^{t}{u_q^*}^\dag  A_i^\dag, $ and, $D_{pq} \eqdef \iprod{u_p^{t}}{u_q^*}\mathbb{I}_{n\times n}.$ Recall that, $ u_p^{t}$ is the $p$-th column of $U^{t}$ and $u_q^*$ is the $q$-th left singular vector of the underlying matrix $M=U^*\Sigma^* (V^*)^\dag$. 
Finally $F$ is obtained by ``de-stacking'' the vector\\ $B^{-1}\left(BD-C\right)Sv^*$ i.e., the $i^{\textrm{th}}$ column of $F$ is given by:{\small
\begin{align}
  F_i&\eqdef\left[\begin{matrix} \left(B^{-1}\left(BD-C\right)Sv^*\right)_{ni+1} \\ \left(B^{-1}\left(BD-C\right)Sv^*\right)_{ni+2} \\ \vdots \\ \left(B^{-1}\left(BD-C\right)Sv^*\right)_{ni+n}
            \end{matrix}\right], F \eqdef \left[F_1 \; F_2 \;\cdots \;F_k \right]. \label{eq:F_k}
\end{align}}
Note that the notation above should have been $B^t,C^t$ and so on. We suppress the dependence on $t$ for notational simplicity. 
Now, from Update \eqref{eq:update_k}, we have
\begin{align}
  &V^{t+1} = \widehat{V}^{t+1}{R^{(t+1)}}^{-1} = \left(V^*\Sigma^*{U^*}^\dag  U^{t} - F\right){R^{(t+1)}}^{-1} \nonumber\\
  \Rightarrow & {V_{\perp}^*}^\dag V^{t+1} = -{V_{\perp}^*}^\dag  F {R^{(t+1)}}^{-1}.\label{eqn:sensing-orthogonaliter-error}
\end{align}
where $V_{\perp}^*$ is an orthonormal basis of  $\textrm{Span}\left(v_1^*,v_2^*,\cdots,v_k^*\right)^{\perp}$. Therefore, 
\begin{multline*}
dist(V^*, V^{t+1})=\|{V_{\perp}^*}^\dag V^{t+1}\|_2=\|{V_{\perp}^*}^\dag  F {R^{(t+1)}}^{-1}\|_2\leq \|F(\Sigma^*)^{-1}\|_2\|\Sigma^*{R^{(t+1)}}^{-1}\|_2.\end{multline*}
Now, we break down the proof of Theorem \ref{thm:sensing} into the following two steps: 
\begin{itemize}\setlength{\topsep}{0pt}\setlength{\itemsep}{0pt}
  \item show that $\twonorm{F(\Sigma^*)^{-1}}$ is  small (Lemma \ref{lemma:sensing-twonorm-F}) and
  \item show that $\|\Sigma^*{R^{(t+1)}}^{-1}\|_2$ is small(Lemma \ref{lemma:sensing-minsingval-Rt}).
\end{itemize}

We will now state the two corresponding lemmas.
Complete proofs can be found in Appendix \ref{app:sensingk}
The first lemma bounds the spectral norm of $F(\Sigma^*)^{-1}$.
\begin{lemma}\label{lemma:sensing-twonorm-F}
Let linear measurement $\aff$ satisfy RIP for all $2k$-rank matrices and let $b=\aff(M)$ with $M\in \rmn$ being a rank-$k$ matrix. Then, spectral norm of error matrix $F(\Sigma^*)^{-1}$ (see Equation~\ref{eq:update_k}) after $t$-th iteration update satisfy:
\begin{align}\label{eqn:sensing-twonorm-F}
  \twonorm{F(\Sigma^*)^{-1}} \leq \frac{\delta_{2k} k }{1-\delta_{2k}}dist(U^t, U^*).
\end{align}
\end{lemma}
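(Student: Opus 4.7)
The strategy is to read $F$ off the linear system $B\cdot vec(F) = (BD-C)Sv^*$ and separately control (i) the ``invertibility'' of $B$ and (ii) the fact that the right-hand side $(BD-C)Sv^*$ vanishes in the ideal case $\calA^*\calA = I$. RIP will make both quantitative.

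\paragraph{Step 1: a clean form for $(BD-C)_{pq}$.}
Expanding the definitions in~\eqref{eq:BCD_k},
\[
(BD-C)_{pq} \;=\; \sum_{r}\iprod{u_r^t}{u_q^*}\sum_i A_i u_p^t (u_r^t)^\dag A_i^\dag \;-\; \sum_i A_i u_p^t (u_q^*)^\dag A_i^\dag \;=\; -\sum_i A_i u_p^t\, \tilde{u}_q^{\dag} A_i^{\dag},
\]
with $\tilde{u}_q := (I-U^tU^{t\dag})u_q^*$ the component of $u_q^*$ orthogonal to $\textrm{Span}(U^t)$, so $\|\tilde u_q\|_2 \le \dist(U^t,U^*)$.

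\paragraph{Step 2: $\|B^{-1}\|_2 \le (1-\delta_{2k})^{-1}$.}
For any $w = vec(W)$ with $W\in\reals^{n\times k}$, a direct block computation gives
$w^\dag B w = \sum_i [\tr(A_i^{\dag} W U^{t\dag})]^2 = \|\calA(WU^{t\dag})\|_2^2$.
Since $WU^{t\dag}$ has rank at most $k$ and $\|WU^{t\dag}\|_F = \|W\|_F = \|w\|$ (by orthonormality of $U^t$), $2k$-RIP yields $(1-\delta_{2k})\|w\|^2 \le w^\dag B w \le (1+\delta_{2k})\|w\|^2$, giving the claimed bound on $\|B^{-1}\|_2$.

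\paragraph{Step 3: matrix form of the right-hand side.}
De-stacking column by column, the vector $(BD-C)Sv^*\in\reals^{nk}$ is $vec(-\calA^*\calA(G)\,U^t)$ where $G := V^*\Sigma^* \tilde U^{\dag}$ has rank $\le k$. The crucial identity is $\tilde U^{\dag} U^t = U^{*\dag}(I-U^tU^{t\dag})U^t = 0$, so $GU^t = 0$ and hence $\calA^*\calA(G)U^t = E_G U^t$, where $E_G := \calA^*\calA(G) - G$ is purely the RIP error.

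\paragraph{Step 4: RIP + duality to bound $\|F(\Sigma^*)^{-1}\|_2$.}
Combining Steps 2--3, $F = -(E_F+E_G)U^t$ with $E_F := \calA^*\calA(FU^{t\dag}) - FU^{t\dag}$, so for unit vectors $x\in\reals^n$, $z\in\reals^k$,
\[
x^{\dag} F (\Sigma^*)^{-1} z \;=\; -\bigl\langle \calA^*\calA(N) - N,\; W_{xz}\bigr\rangle, \qquad N := FU^{t\dag} + G, \quad W_{xz} := xz^{\dag}(\Sigma^*)^{-1} U^{t\dag}.
\]
Both $FU^{t\dag}$ and each rank-$1$ piece $\sigma_p^* v_p^*\tilde u_p^{\dag}$ of $G$ have rank $\le k$, and $W_{xz}$ has rank $1$, so $2k$-RIP in the form $|\langle \calA(X),\calA(Y)\rangle - \langle X,Y\rangle| \le \delta_{2k}\|X\|_F\|Y\|_F$ applies term-by-term. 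The inner-product $\langle v_p^*\tilde u_p^{\dag},W_{xz}\rangle = (v_p^{*\dag}x)\,\tilde u_p^{\dag}U^t(\Sigma^*)^{-1}z = 0$ by Step 3, so only the RIP perturbation survives from $G$. Summing the $k$ rank-$1$ contributions (paired against the matching $1/\sigma_p^*$ coming from $(\Sigma^*)^{-1}$ in $W_{xz}$) yields the self-bounding inequality $\|F(\Sigma^*)^{-1}\|_2 \le \delta_{2k}\bigl(\text{const}\cdot\|F(\Sigma^*)^{-1}\|_2 + k\,\dist(U^t,U^*)\bigr)$; rearranging, together with $\|B^{-1}\|_2\le(1-\delta_{2k})^{-1}$ from Step 2, gives the stated bound.

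\paragraph{Main obstacle.}
The delicate point is the bookkeeping in Step~4: one has to pair the $\sigma_p^*$ inside $G = \sum_p \sigma_p^* v_p^*\tilde u_p^{\dag}$ against the $1/\sigma_p^*$ arising from $(\Sigma^*)^{-1}$ in the test matrix $W_{xz}$, so that the $\sigma^*$'s cancel and the ratio $\sigma_1^*/\sigma_k^*$ does not appear. Doing this at the level of each rank-$1$ summand of $G$ (using $\tilde u_p^{\dag}U^t = 0$ to kill the zeroth order) produces exactly one factor of $k$ from the sum over $p=1,\dots,k$, giving the ``$\delta_{2k} k$'' in the numerator.
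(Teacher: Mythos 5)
Your Steps 1--3 are correct and give a cleaner matrix-level picture than the paper's: after noting $GU^t=0$ you obtain the identity $F=-(\calA^*\calA(N)-N)U^t$ with $N=FU^{t\dag}+G$, which is an elegant way to expose the structure. The paper's route is genuinely different and more direct: it never eliminates $B$. It bounds $\twonorm{B^{-1}}\le(1-\delta_{2k})^{-1}$ (your Step 2, same as Lemma~\ref{lemma:sensing-minsingval-B}), shows $\twonorm{BD-C}\le \delta_{2k}\sqrt{k}\,\dist(U^t,U^*)$ by testing the $nk\times nk$ matrix $BD-C$ against arbitrary block vectors $w,z$ and invoking Lemma~\ref{lem:alt_rip} together with $(U^t(U^t)^\dag-I)U^t=0$, and then simply multiplies these with $\twonorm{v^*}=\sqrt{k}$. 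No self-bounding or fixed-point step appears there.

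The gap is exactly the ``delicate point'' you flag in Step 4, and it does not resolve. The RIP inner-product estimate $|\langle\calA(X),\calA(Y)\rangle-\langle X,Y\rangle|\le 3\delta_{2k}\frob{X}\frob{Y}$ multiplies two Frobenius norms; it has no mechanism to pair the $p$-th rank-one summand $\sigma_p^*v_p^*\tilde{u}_p^\dag$ of $G$ against the $p$-th diagonal entry of the single factor $(\Sigma^*)^{-1}$ sitting inside $W_{xz}=xz^\dag(\Sigma^*)^{-1}U^{t\dag}$. Concretely, $\frob{W_{xz}}=\twonorm{(\Sigma^*)^{-1}z}\le 1/\sigma_k^*$ for unit $z$, while $\frob{\sigma_p^*v_p^*\tilde{u}_p^\dag}=\sigma_p^*\twonorm{\tilde{u}_p}$ can be as large as $\sigma_1^*\dist(U^t,U^*)$, and the $FU^{t\dag}$ piece of $N$ carries no $\Sigma^*$ to cancel at all. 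Summing the $k$ rank-one contributions therefore yields $\delta_{2k}\,k\,(\sigma_1^*/\sigma_k^*)\dist(U^t,U^*)$ plus a $\delta_{2k}\sqrt{k}(\sigma_1^*/\sigma_k^*)$-multiple of $\twonorm{F(\Sigma^*)^{-1}}$, and the rearranged self-bounding inequality then carries an extra condition-number factor $\sigma_1^*/\sigma_k^*$ that is absent from the lemma's statement. A secondary inconsistency: once $F$ is written as $-(\calA^*\calA(N)-N)U^t$, the matrix $B$ has disappeared, so the $(1-\delta_{2k})^{-1}$ in the target bound would have to emerge from rearranging the self-bound (which requires the ``const'' in your inequality to equal one, exactly what fails); re-invoking Step 2's $\twonorm{B^{-1}}$ bound at that point is not a valid move.
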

The following lemma bounds the spectral norm of $\Sigma^*R^{(t+1)^{-1}}$.
\begin{lemma}\label{lemma:sensing-minsingval-Rt}
Let linear measurement $\aff$ satisfy RIP for all $2k$-rank matrices and let $b=\aff(M)$ with $M\in \rmn$ being a rank-$k$ matrix. Then, 
\begin{equation}\label{eqn:sensing-minsingval-Rt}
  \|\Sigma^*(R^{(t+1)})^{-1}\|_2\leq \frac{\sigma^*_1/\sigma^*_k}{\sqrt{1-\dist^2\left(U^{t},U^*\right)}-\frac{(\sigma^*_1/\sigma^*_k)\delta_{2k}k\dist(U^{t},U^*)}{1-\delta_{2k}}}.
\end{equation}
\end{lemma}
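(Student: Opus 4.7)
\textbf{Proof proposal for Lemma~\ref{lemma:sensing-minsingval-Rt}.} The plan is to reduce the operator-norm bound on $\Sigma^*(R^{(t+1)})^{-1}$ to a lower bound of the form $\|\widehat{V}^{t+1}y\|\geq \alpha\,\|\Sigma^*y\|$ for every $y\in\R^k$. The reason this reduction works is that $\widehat{V}^{t+1}=V^{t+1}R^{(t+1)}$ is a QR decomposition, so $V^{t+1}$ has orthonormal columns and hence $\|R^{(t+1)}y\|=\|V^{t+1}R^{(t+1)}y\|=\|\widehat{V}^{t+1}y\|$. Writing any $x=\Sigma^* y$, the reciprocal of the operator norm we want to bound satisfies
\begin{equation*}
  \sigma_{\min}\!\left(R^{(t+1)}(\Sigma^*)^{-1}\right) \;=\; \min_{y\neq 0}\frac{\|\widehat{V}^{t+1}y\|}{\|\Sigma^* y\|},
\end{equation*}
so $\|\Sigma^*(R^{(t+1)})^{-1}\|_2 = 1/\sigma_{\min}(R^{(t+1)}(\Sigma^*)^{-1})$ is exactly the supremum of $\|\Sigma^* y\|/\|\widehat{V}^{t+1}y\|$, and it suffices to lower-bound the denominator appropriately.

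Next I would plug in the decomposition from Lemma~\ref{lemma:sensing-updateeqn1}, $\widehat{V}^{t+1}=V^*\Sigma^*U^{*\dag}U^t-F$, and apply the triangle inequality to get $\|\widehat{V}^{t+1}y\|\geq \|V^*\Sigma^*U^{*\dag}U^t y\|-\|Fy\|$. For the ``power-method'' term, since $V^*$ has orthonormal columns $\|V^*\Sigma^*U^{*\dag}U^t y\|=\|\Sigma^*U^{*\dag}U^t y\|\geq \sigma_k^*\,\|U^{*\dag}U^t y\|$, and the orthonormality identity $(U^{*\dag}U^t)^\dag(U^{*\dag}U^t)+(U^{*\dag}_\perp U^t)^\dag(U^{*\dag}_\perp U^t)=I_k$ gives $\sigma_{\min}(U^{*\dag}U^t)=\sqrt{1-\dist^2(U^t,U^*)}$, so this term is at least $\sigma_k^*\sqrt{1-\dist^2(U^t,U^*)}\,\|y\|$. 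For the error term, I use $\|F\|_2\leq \|F(\Sigma^*)^{-1}\|_2\,\|\Sigma^*\|_2$ together with Lemma~\ref{lemma:sensing-twonorm-F} to conclude $\|Fy\|\leq \sigma_1^*\,\frac{\delta_{2k}k}{1-\delta_{2k}}\,\dist(U^t,U^*)\,\|y\|$.

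Combining these two bounds yields
\begin{equation*}
\|\widehat{V}^{t+1}y\|\;\geq\;\sigma_k^*\left[\sqrt{1-\dist^2(U^t,U^*)} \;-\; \frac{\sigma_1^*}{\sigma_k^*}\,\frac{\delta_{2k}\,k\,\dist(U^t,U^*)}{1-\delta_{2k}}\right]\|y\|,
\end{equation*}
and then dividing by the trivial upper bound $\|\Sigma^* y\|\leq \sigma_1^*\|y\|$ produces a lower bound on $\|\widehat{V}^{t+1}y\|/\|\Sigma^* y\|$ whose reciprocal is precisely the right-hand side of \eqref{eqn:sensing-minsingval-Rt}. The only real obstacle is that $\Sigma^*$ and $U^{*\dag}U^t$ do not commute, so I cannot just read off $\sigma_{\min}(\Sigma^*U^{*\dag}U^t(\Sigma^*)^{-1})$ as $\sqrt{1-\dist^2}$; instead I sandwich $\|\Sigma^* v\|$ between $\sigma_k^*\|v\|$ and $\sigma_1^*\|v\|$, and it is this crude sandwich that is responsible for the condition-number factor $\sigma_1^*/\sigma_k^*$ appearing in the final bound.
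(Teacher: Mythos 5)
Your proposal is correct and follows essentially the same route as the paper: both reduce to lower-bounding $\sigma_{\min}(R^{(t+1)})$ via $\|\widehat V^{t+1}z\|$ using the QR orthonormality, decompose $\widehat V^{t+1}=V^*\Sigma^*U^{*\dag}U^t-F$, apply the triangle inequality with $\sigma_{\min}(U^{*\dag}U^t)=\sqrt{1-\dist^2(U^t,U^*)}$ and $\|F\|_2\le\sigma_1^*\|F(\Sigma^*)^{-1}\|_2$ together with Lemma~\ref{lemma:sensing-twonorm-F}, and then pull out the factor $\sigma_1^*$ from $\Sigma^*$. The only cosmetic difference is that you phrase the final step as minimizing the ratio $\|\widehat V^{t+1}y\|/\|\Sigma^*y\|$, whereas the paper first bounds $\sigma_{\min}(R^{(t+1)})$ and then uses $\|\Sigma^*(R^{(t+1)})^{-1}\|_2\le\sigma_1^*/\sigma_{\min}(R^{(t+1)})$; these are the same estimate.
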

With the above two lemmas, we now prove Theorem \ref{thm:sensing}.
\begin{proof}[Proof Of Theorem \ref{thm:sensing}]
  Using \eqref{eqn:sensing-orthogonaliter-error}, \eqref{eqn:sensing-twonorm-F} and \eqref{eqn:sensing-minsingval-Rt}, we
obtain the following:
\begin{align}
  \dist\left(V^{t+1},V^*\right) &= \twonorm{{V_{\perp}^*}^\dag V^{t+1}},\nonumber\\
	&\leq \twonorm{{V_{\perp}^*}^\dag F(\Sigma^*)^{-1}\Sigma^*{R^{(t+1)}}^{-1}}, \nonumber\\
	&\leq \twonorm{V_{\perp}^*}\twonorm{F(\Sigma^*)^{-1}}\twonorm{\Sigma^*{R^{(t+1)}}^{-1}} \nonumber\\
	&\leq \frac{(\sigma^*_1/\sigma^*_k)\delta_{2k} k \cdot \dist\left(U^{t},U^*\right)}{(1-\delta_{2k})L},\label{eq:distb_sensing}		
\end{align}
where $L= \sqrt{1-\dist\left(U^{t},U^*\right)^2} - 
			\frac{(\sigma^*_1/\sigma^*_k)\delta_{2k} k \dist\left(U^{t},U^*\right)}{1-\delta_{2k}}$. 
Also, note that $U^{0}$ is obtained using SVD of $\sum_i A_i b_i$. Hence, using Lemma~\ref{lem:jmd}, we have:
\begin{align}
& \|\aff(U^{0}\Sigma^{0}V^{0}-U^*\Sigma^*(V^*)^\dag\|_2^2\leq 4\delta_{2k}\|\aff(U^*\Sigma^*(V^*)^\dag)\|_2^2,\nonumber\\
\Rightarrow& \|U^{0}\Sigma^{0}V^{0}-U^*\Sigma^*(V^*)^\dag\|_F^2\leq 4\delta_{2k}(1+3\delta_{2k})\|\Sigma^*\|_F^2,\nonumber\\
\Rightarrow&\|U^{0}(U^{0})^\dag U^*\Sigma^*(V^*)^\dag-U^*\Sigma^*(V^*)^\dag\|_F^2\leq 6\delta_{2k}\|\Sigma^*\|_F^2,\nonumber\\
\Rightarrow&(\sigma_k^*)^2\|(U^{0}(U^{0})^\dag-I)U^*\|_F^2\leq 6\delta_{2k}k(\sigma_1^*)^2,\nonumber\\
\Rightarrow&\dist(U^{0}, U^*)\leq \sqrt{6\delta_{2k}k}\left(\frac{\sigma_1^*}{\sigma_k^*}\right)<\frac{1}{2}.
\end{align}
Using \eqref{eq:distb_sensing} with  $\dist\left(U^{0},U^*\right) < \frac{1}{2}$ and $\delta_{2k} < \frac{1}
{24(\sigma^*_1/\sigma^*_k)^2k}$, we obtain: $\dist\left(V^{t},V^*\right) < \frac{1}{4} \dist\left(U^{t},U^*\right)$.  
Similarly we can show that $\dist\left(U^{t+1},U^*\right) < \frac{1}{4} \dist\left(V^{t},V^*\right)$. 
\end{proof}

\newcommand{\uht}{\widehat{u}^t}
\section{Matrix Completion}
In this section, we study the Matrix Completion problem \eqref{eq:matcomp} and show that, assuming $k$ and $\frac{\so_1}{\so_k}$ are constant, \amc (Algorithm~\ref{algo:completion-altmin}) recovers the underlying matrix $M$ using only $O(n \log n)$ measurements (i.e., we prove Theorem~\ref{thm:sampling-altmin-main}).

As mentioned, while observing elements in $\Omega$ constitutes a linear map, matrix completion is different from matrix sensing because the map does not satisfy RIP. The (now standard) approach is to assume incoherence of the true matrix $M$, as done in Definition \ref{defn:sampling-incoherentmatrices}. With this, and the random sampling of $\Omega$, matrix completion exhibits similarities to matrix sensing. For our analysis, we can again use the fact that incoherence allows us to view alternating minimization as a perturbed power method, whose error we can control.

However, there are important differences between the two problems, which make the analysis of completion more complicated. Chief among them is the fact that we need to establish {\em the incoherence of each iterate}. For the first initialization $\widehat{U}^0$, this necessitates the ``clipping" procedure (described in step 4 of the algorithm). For the subsequent steps, this requires the partitioning of the observed $\Omega$ into $2T+1$ sets (as described in step 2 of the algorithm).

As in the case of matrix sensing, we prove our main result for matrix completion (Theorem~\ref{thm:sampling-altmin-main}) by first establishing a geometric decay of the distance between the subspaces spanned by $\widehat{U}^{t}, \widehat{V}^{t}$ and $U^*,V^*$ respectively.
\begin{theorem}\label{thm:sampling-altmin-geomconv}
 Under the assumptions of Theorem \ref{thm:sampling-altmin-main}, the $(t+1)^{\textrm{th}}$ iterates $\Uw^{t+1}$ and $\Vw^{t+1}$
satisfy the following property w.h.p.:
\begin{align*}
 \dist\left(\Vw^{t+1}, V^*\right) &\leq \frac{1}{4} \dist\left(\Uw^{t}, U^*\right)  \mbox{ and } \\
 \dist\left(\Uw^{t+1}, U^*\right) &\leq \frac{1}{4} \dist\left(\Vw^{t+1}, V^*\right),\;\;\;\; \forall \;\;1 \leq t \leq T.
\end{align*}
\end{theorem}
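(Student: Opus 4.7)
The plan is to mirror the rank-$k$ matrix sensing argument from Section~\ref{sec:sensing_k}, but with two crucial new ingredients needed because the sampling operator $P_\Omega$ does \emph{not} satisfy RIP: (i) replacing RIP-based deterministic bounds by matrix concentration for uniformly random sampling, and (ii) carrying incoherence of the iterates $\Uw^t$ through the induction, which is exactly what the clipping step (step~4) and the partitioning of $\Omega$ into $2T+1$ fresh blocks (step~2) are designed to enable.

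\textbf{Step 1: Power-method-with-error decomposition.} Just as in Lemma~\ref{lemma:sensing-updateeqn1}, I would write out the normal equations for $\Vw^{t+1}=\argmin_{V}\|P_{\Omega_{t+1}}(\Uw^t V^\dag - M)\|_F^2$ after orthonormalizing $\Uw^t=U^t R_U^t$, and express
\begin{align*}
\Vw^{t+1} = V^*\Sigma^*(U^*)^\dag U^t \;-\; F,
\qquad
V^{t+1}=\Vw^{t+1}(R^{(t+1)})^{-1},
\end{align*}
where $F$ is an error matrix obtained by ``de-stacking'' $B^{-1}(BD-C)Sv^*$ with block matrices $B,C,D$ defined exactly as in \eqref{eq:BCD_k}, except the sensing-matrix sum $\sum_i A_i(\cdot)A_i^\dag$ is replaced by the sampling projection onto $\Omega_{t+1}$: for example $B_{pq}=\sum_{(i,j)\in \Omega_{t+1}} u_p^t(i)\,u_q^t(i)\, e_j e_j^\dag$. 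Identical algebra then gives $\dist(V^{t+1},V^*)\le \twonorm{F(\Sigma^*)^{-1}}\cdot\twonorm{\Sigma^*(R^{(t+1)})^{-1}}$.

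\textbf{Step 2: Probabilistic analogues of Lemmas~\ref{lemma:sensing-twonorm-F} and \ref{lemma:sensing-minsingval-Rt}.} The heart of the proof is to show that, with high probability,
\begin{align*}
\twonorm{F(\Sigma^*)^{-1}} \;\le\; \tfrac{1}{8}\,\dist(U^t,U^*),
\qquad
\twonorm{\Sigma^*(R^{(t+1)})^{-1}} \;\le\; 2\,\tfrac{\so_1}{\so_k}.
\end{align*}
For the first bound I would use matrix Bernstein / Chernoff to argue that $p\cdot B$ concentrates around $I_{nk}$ and $p\cdot C$ concentrates around $(U^t)^\dag U^*\otimes I_n$, so that $BD-C$ is small. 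The per-entry magnitudes needed for Bernstein are controlled by the $\mu$-incoherence of both $U^*$ and $U^t$, and this is precisely where the sample complexity $p\gtrsim (\so_1/\so_k)^2\mu^2 k^{2.5}\log n/(m\delta_{2k}^2)$ enters. The crucial point is that $\Omega_{t+1}$ is freshly sampled and hence independent of $\Uw^t$, so the concentration can be applied conditionally on $\Uw^t$. The second bound follows from the power-method identity applied to an orthonormal basis, combined with the lower bound $\sigma_{\min}(R^{(t+1)})\ge \so_k(\sqrt{1-\dist^2(U^t,U^*)}- \twonorm{F(\Sigma^*)^{-1}})$.

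\textbf{Step 3: Incoherence of iterates.} The preceding concentration step requires the \emph{inductive} hypothesis that $U^t$ is $O(\mu)$-incoherent. For $t=0$, I would first show via a standard spectral-gap/Davis--Kahan argument that $\frac{1}{p}P_{\Omega_0}(M)$ is close to $M$ in spectral norm (using incoherence of $M$ and $|\Omega_0|=\Theta(\mu^2 k n\log n)$), so its top-$k$ left singular vectors are close to $U^*$. Since $U^*$ is $\mu$-incoherent, clipping entries above $2\mu\sqrt{k}/\sqrt{n}$ perturbs $\Uw^0$ only slightly; after orthonormalization this yields a $2\mu$-incoherent matrix with $\dist(\Uw^0,U^*)<\tfrac{1}{2}$. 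For the inductive step, the bound from Step~1 expresses $\Vw^{t+1}$ as a small perturbation of $V^*\Sigma^*(U^*)^\dag U^t$; because $V^*$ is incoherent and $U^t$ is incoherent, each row of this ``power-method'' term has small norm, and a row-wise concentration bound on $F$ (again leveraging the fresh independent $\Omega_{t+1}$) transfers incoherence to $\Vw^{t+1}$.

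\textbf{Step 4: Induction and initialization.} Combining Steps 1--3 with the initialization from Step~3 gives $\dist(\Vw^{t+1},V^*)\le \tfrac14 \dist(\Uw^t, U^*)$ with high probability at each step; a symmetric argument using $\Omega_{T+t+1}$ gives $\dist(\Uw^{t+1},U^*)\le \tfrac14 \dist(\Vw^{t+1}, V^*)$. Taking a union bound over the $2T = O(\log(\|M\|_F/\eps))$ iterations completes the proof, with the failure probabilities absorbed into the constant $C$ in the sample-complexity assumption.

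\textbf{Main obstacle.} The hard part is not the algebraic template (which is identical to the sensing case) but the combined probabilistic bookkeeping: bounding $F$ via matrix Bernstein needs small row/column norms, which in turn needs incoherence of $U^t$; but incoherence of $U^{t+1}$ itself must be \emph{derived} from an analogous row-wise concentration bound. Threading this joint induction on (closeness to $U^*$)$\;+\;$(incoherence) while keeping the independence between $\Omega_{t+1}$ and $\Uw^t$ intact is the main technical challenge and is exactly what forces both the clipping step and the disjoint partitioning of $\Omega$ across iterations.
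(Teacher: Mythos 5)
Your overall template (power-method-plus-error decomposition, bound $F(\Sigma^*)^{-1}$ and $\Sigma^*(R^{(t+1)})^{-1}$, thread incoherence through an induction using the fresh sample sets) is exactly the paper's plan, and Steps 1, 3, and 4 match Lemmas~\ref{lemma:sensing-updateeqn1}, \ref{lemma:svd-clipping-jointguarantee}, \ref{lemma:first-step-svd}, \ref{lemma:clipping-incoherence-distance}, and \ref{lemma:completion-incoherence-Vt}. The problem is in Step~2, and it is a real gap, not a cosmetic one.

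You propose to bound $B$ and $C$ separately: show $pB$ concentrates around $I_{nk}$ and $pC$ around $(U^t)^\dag U^*\otimes I_n$, ``so that $BD-C$ is small.'' Because $D = (U^t)^\dag U^*\otimes I_n$ exactly, this argument gives $\twonorm{BD - C} \le \twonorm{B - I}\twonorm{D} + \twonorm{C - \mathbb{E}C} = O(\delta_{2k})$ — a \emph{constant}, not something that shrinks with $\dist(U^t, U^*)$. But the bound you then assert, $\twonorm{F(\Sigma^*)^{-1}} \le \frac{1}{8}\dist(U^t, U^*)$, requires the $\dist(U^t, U^*)$ factor, and without it the recursion $\dist(V^{t+1},V^*) \le \twonorm{F(\Sigma^*)^{-1}}\twonorm{\Sigma^*(R^{(t+1)})^{-1}}$ does not contract: it only shows the iterates fall into an $O(\delta_{2k}\,\so_1/\so_k)$ neighborhood and then stall, which is not exact recovery. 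The paper extracts the $\dist(U^t,U^*)$ factor by treating $BD-C$ as a single object: it writes $B^jD-C^j=\frac1p\sum_{(i,j)\in\Omega}H_i^j$ with $H_i^j = u^i(u^i)^\dag(U^t)^\dag U^* - u^i(u^{*(i)})^\dag$, observes the crucial cancellation $\sum_i H_i^j = 0$ (so the conditional mean vanishes), and bounds $\sum_i(H_i^j)_{pq}^2 \le \frac{\mu_1^2 k}{m}\dist(U^t,U^*)^2$ — a conditional variance that visibly carries the distance — before applying the KOM discrepancy lemma (Lemma~\ref{lem:komlem}) to $x^\dag(BD-C)v^*$. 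Any concentration tool (Bernstein included) can be made to work \emph{provided} it is applied to $BD-C$ directly with this variance proxy; it cannot be recovered from separate bounds on $B$ and $C$.

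A secondary point: even taking your Step~2 bounds at face value, they multiply to $\frac18\cdot 2\so_1/\so_k = \frac{\so_1}{4\so_k}$, which exceeds $\frac14$ whenever the condition number is above one. To get the $\frac14$ contraction the bound on $\twonorm{F(\Sigma^*)^{-1}}$ must itself carry a $\so_k/\so_1$ factor, which in the paper is supplied by the choice $\delta_{2k}<\frac{\so_k}{12k\so_1}$ in the sample-complexity assumption together with Lemma~\ref{lemma:Fbound_comp}'s bound $\frac{\delta_{2k}k}{1-\delta_{2k}}\dist(U^t,U^*)$.
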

We use the above result along with incoherence of $M$ to prove Theorem~\ref{thm:sampling-altmin-main}. See Appendix~\ref{app:mc} for a detailed proof. 

Now, similar to the matrix sensing case, alternating minimization needs an initial iterate that is close enough to $U^*$ and $V^*$, from
where it will then converge. To this end, Steps $3-4$ of Algorithm \ref{algo:completion-altmin}  use SVD of $P_\Omega(M)$ followed by clipping to initialize $\widehat{U}^0$. While the SVD step guarantees that  $\widehat{U}^0$ is close enough to $U^*$, it might not remain incoherent. To maintain incoherence, we introduce an extra clipping step which guarantees incoherence of $\widehat{U}^0$ while also ensuring that $\widehat{U}^0$ is close enough to $U^*$ (see Lemma~\ref{lemma:svd-clipping-jointguarantee})
\begin{lemma}\label{lemma:svd-clipping-jointguarantee}
Let $M,\Omega,p$ be as defined in Theorem~\ref{thm:sampling-altmin-main}. Also, let $U^0$ be the initial iterate obtained by  step $4$ of Algorithm \ref{algo:completion-altmin}. Then, w.h.p. we have
\begin{itemize}\setlength{\topsep}{-5pt}\setlength{\itemsep}{-5pt}
  \item	$\dist\left(U^{0},U^*\right) \leq \frac{1}{2}$ and
  \item	$U^{0}$ is incoherent with parameter $4\mu \sqrt{k}$.
\end{itemize}
\end{lemma}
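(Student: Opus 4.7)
I would prove the two claims in sequence: first establish the subspace closeness of the pre-clipping SVD iterate, and then show that clipping preserves subspace closeness while enforcing the required incoherence.

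\textbf{Step 1: SVD gives a subspace close to $U^*$.} Let $\widetilde{U}^{0}$ denote the top-$k$ left singular vectors of $\tfrac{1}{p}P_{\Omega_{0}}(M)$, i.e., $\widehat{U}^{0}$ before the clipping step. The first task is a matrix-Bernstein concentration argument showing that
\begin{equation*}
\left\|\tfrac{1}{p}P_{\Omega_{0}}(M)-M\right\|_{2} \;\leq\; \tfrac{\sigma^{*}_{k}}{20}
\end{equation*}
with high probability. One writes $\tfrac{1}{p}P_{\Omega_{0}}(M)-M$ as a sum of independent, mean-zero, rank-one matrices $\tfrac{1}{p}(\mathbf{1}_{(i,j)\in\Omega_{0}}-p)M_{ij}e_{i}e_{j}^{\dag}$; the incoherence of $M$ gives $|M_{ij}|\leq \mu^{2}k\sigma_{1}^{*}/\sqrt{mn}$, which controls both the summand bound and the variance proxy. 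The sample-complexity lower bound in Theorem~\ref{thm:sampling-altmin-main} is exactly what is needed to make the resulting Bernstein tail bound small. Once this spectral error bound holds, Wedin's sine-$\Theta$ theorem applied to the spectral gap $\sigma^{*}_{k}$ of $M$ yields $\dist(\widetilde{U}^{0}, U^{*})\leq 1/4$.

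\textbf{Step 2: Clipping preserves closeness.} Let $R$ be the $k\times k$ orthogonal matrix minimizing $\|\widetilde{U}^{0}-U^{*}R\|_{F}$, so that $\|\widetilde{U}^{0}-U^{*}R\|_{F}\leq \sqrt{2k}\,\dist(\widetilde{U}^{0},U^{*})$. Since $U^{*}$ is $\mu$-incoherent, each entry of $U^{*}R$ is bounded by the row-norm $\mu\sqrt{k}/\sqrt{m}$, so an entry of $\widetilde{U}^{0}$ can only exceed the clipping threshold $\tau:=2\mu\sqrt{k}/\sqrt{n}$ at a position $(i,j)$ where the perturbation $|(\widetilde{U}^{0}-U^{*}R)_{ij}|$ is already of order $\tau$. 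A counting argument based on $\|\widetilde{U}^{0}-U^{*}R\|_{F}^{2}\leq 2k\,\dist^{2}(\widetilde{U}^{0},U^{*})$ bounds the total Frobenius mass of these clipped entries by a constant multiple of $\dist(\widetilde{U}^{0},U^{*})$. Hence, if $\widehat{U}^{0,\mathrm{pre}}$ denotes the clipped (but not yet orthonormalized) matrix, $\|\widehat{U}^{0,\mathrm{pre}}-\widetilde{U}^{0}\|_{F}$ is at most a small constant, and the column span of $\widehat{U}^{0,\mathrm{pre}}$ remains at distance at most $1/2$ from $U^{*}$.

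\textbf{Step 3: Orthonormalization and incoherence.} After clipping, every entry of $\widehat{U}^{0,\mathrm{pre}}$ has magnitude at most $\tau=2\mu\sqrt{k}/\sqrt{n}$, so each row has Euclidean norm at most $\sqrt{k}\cdot\tau=2\mu k/\sqrt{n}\leq 2\mu k/\sqrt{m}$. Because the span of $\widehat{U}^{0,\mathrm{pre}}$ is at distance $\leq 1/2$ from the orthonormal $U^{*}$, the minimum singular value of $\widehat{U}^{0,\mathrm{pre}}$ is at least $1/2$, so the upper-triangular factor in its QR decomposition has inverse of spectral norm at most $2$. Orthonormalizing multiplies the rows by this inverse, so each row of $U^{0}$ has norm at most $4\mu k/\sqrt{m}$, i.e., $U^{0}$ is $4\mu\sqrt{k}$-incoherent. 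The subspace distance bound $\dist(U^{0},U^{*})\leq 1/2$ is unchanged by orthonormalization.

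\textbf{Main obstacle.} The most delicate point is the combined row-wise/spectral estimate in Step~2: turning the global Frobenius bound $\|\widetilde{U}^{0}-U^{*}R\|_{F}=O(\sqrt{k}\,\dist(\widetilde{U}^{0},U^{*}))$ into a guarantee that zeroing out all entries exceeding $\tau$ moves $\widetilde{U}^{0}$ by only $O(\dist(\widetilde{U}^{0},U^{*}))$ in Frobenius norm. A naive per-entry bound loses factors of $k$ or $n$; the right argument uses that a clipped entry must account for perturbation at least $\tau-\mu\sqrt{k}/\sqrt{m}$, so squaring and summing over clipped positions and invoking $\|\widetilde{U}^{0}-U^{*}R\|_{F}^{2}=O(k)\dist^{2}(\widetilde{U}^{0},U^{*})$ gives the desired bound.
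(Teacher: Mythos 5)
There is a critical quantitative gap in Step 1 that breaks the rest of the argument: you establish only $\dist(\widetilde{U}^{0},U^{*}) \leq 1/4$, but the clipping-and-orthonormalize step amplifies the subspace distance by a factor that grows with $k$, so a constant initial bound is not enough. Concretely, running your own Step-2 calculation to the end gives $\|\widehat{U}^{0,\mathrm{pre}}-\widetilde{U}^{0}\|_F^2 \leq 4\|\widetilde{U}^{0}-U^{*}R\|_F^2 = O(k)\,\dist^2(\widetilde{U}^0,U^*)$, i.e.\ a removed Frobenius mass of order $\sqrt{k}\,\dist(\widetilde{U}^0,U^*)$, not the ``constant multiple of $\dist$'' you assert. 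With $\dist = 1/4$ this is $\Theta(\sqrt{k})$ --- comparable to $\|\widetilde{U}^{0}\|_F=\sqrt{k}$ itself --- so no useful conclusion survives the clipping. The paper's Lemma~\ref{lemma:first-step-svd} gives the much stronger $\dist(\widetilde{U}^{0},U^{*}) \leq \frac{1}{10^{4}k}$ (and the sampling rate $p$ in Theorem~\ref{thm:sampling-altmin-main} is chosen precisely to make that possible); your matrix-Bernstein-plus-Wedin route could in principle reach the same $O(1/k)$ bound, but you must actually carry that $1/k$ factor through rather than stopping at a constant.

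Step 3 has a second, independent error: ``the span of $\widehat{U}^{0,\mathrm{pre}}$ is at distance $\leq 1/2$ from $U^{*}$, so $\sigma_{\mathrm{min}}(\widehat{U}^{0,\mathrm{pre}}) \geq 1/2$'' does not follow, because subspace distance is scale-invariant while $\sigma_{\mathrm{min}}$ is not (e.g.\ $\epsilon U^{*}$ has distance zero from $U^*$ but $\sigma_{\mathrm{min}}=\epsilon$). What you actually need is $\sigma_{\mathrm{min}}(\widehat{U}^{0,\mathrm{pre}}) \geq 1 - \|\widehat{U}^{0,\mathrm{pre}}-\widetilde{U}^{0}\|_2$, which again hinges on the clipped Frobenius mass (hence the pre-clipping distance) being small. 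Once the pre-clipping distance is $O(1/k)$, both issues disappear. The paper's Lemma~\ref{lemma:clipping-incoherence-distance} carries the clipping step out with a per-column argument (each column $u_i$ is compared to its own nearest unit vector $\breve{u}_i \in \textrm{Span}(U^*)$, rather than to a single Procrustes rotate $U^{*}R$), arriving at $\dist(U^{0},U^{*}) \leq 4\sqrt{k\cdot d}$ and $4\mu\sqrt{k}$-incoherence when $d=\dist(\widetilde{U}^0,U^*)\leq \frac{1}{64k}$. Your global-$R$ decomposition would also work and in fact yields a slightly tighter $O(\sqrt{k}\,d)$ instead of $O(\sqrt{kd})$; the choice of decomposition is not the problem --- the missing $O(1/k)$ on the initial distance is.
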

The above lemma guarantees a ``good'' starting point for alternating minimization. Using this, we now present a proof of Theorem~\ref{thm:sampling-altmin-geomconv}.  
Similar to the sensing section, we first explain key ideas of our proof using rank-$1$ example. Then in Section~\ref{sec:comp_general} we extend our proof to  general rank-$k$ matrices. 
\subsection{Rank-$1$ Case}
Consider the rank-$1$ matrix completion problem where $M=\so\uo(\vo)^\dag$.
Now, the $t$-th step iterates $\vht$ of Algorithm \ref{algo:completion-altmin} are given by: 
\begin{align*}
  \vht=\argmin_{\widehat{v}} \sum_{(i,j)\in \Omega}(M_{ij}-\widehat{u}^t_i\widehat{v}_j)^2.
\end{align*}
Let $\ut=\uht/\|\uht\|_2$. Then, $\forall j$:{\small
\begin{align}
\hspace*{-10pt}&\|\uht\|_2\sum_{i:(i,j)\in \Omega}(\ut_i)^2\vht_j =\so\sum_{i:(i,j)\in\Omega}\ut_i\uo_i\vo_j \nonumber\\
&\Rightarrow \|\uht\|_2\vht_j = \frac{\so}{\sum_{i:(i,j)\in \Omega}(\ut_i)^2}\sum_{i:(i,j)\in\Omega}\ut_i\uo_i\vo_j\nonumber\\
&=\sigma^*\ip{\ut}{\uo}\vo_j-\frac{\sigma^*(\ip{\ut}{\uo}\sum_{i:(i,j)\in\Omega}(\ut_i)^2\vo_j-\sum_{i:(i,j)\in\Omega}\ut_i\uo_i\vo_j)}{\sum_{i:(i,j)\in\Omega}(\ut_i)^2}.\label{eq:r1compupdate}
\end{align}}
Hence,
\begin{align}
\hspace*{-5pt}  \|\uht\|_2\vht=\underbrace{\ip{\uo}{\ut}\sigma^*\vo}_{\text{Power Method}}-\underbrace{\sigma^*B^{-1}\left(\ip{\ut}{\uo}B-C\right)\vo}_{\text{Error Term}},
\label{eq:compvtupdate}
\end{align}
where $B, C\in \mathbb{R}^{n\times n}$ are diagonal matrices, such that, 
\begin{equation}
  \label{eq:bc_comp}
  B_{jj}=\frac{\sum_{i:(i,j)\in \Omega}(\ut_i)^2}{p}, \ \ \ C_{jj}=\frac{\sum_{i:(i,j)\in\Omega}\ut_i\uo_i}{p}. 
\end{equation}
Note the similarities between the update \eqref{eq:compvtupdate} and  the rank-$1$ update \eqref{eq:error_1} for the sensing case. Here again, it is essentially a power-method update (first term) along with a bounded error term (see Lemma~\ref{lem:errb_comp1}). Using this insight, we now prove Theorem~\ref{thm:sampling-altmin-geomconv} for the special case of rank-$1$ matrices. 
Our proof can be divided in three major steps:
\begin{itemize}\setlength{\topsep}{-5pt}\setlength{\itemsep}{-5pt}
\item {\em Base Case}: Show that $u^0=\widehat{u}^0/\|\widehat{u}^0\|_2$ is incoherent and have small distance to $\uo$ (see Lemma~\ref{lemma:svd-clipping-jointguarantee}).  
\item {\em Induction Step (distance)}: Assuming $\ut=\widehat{u}^t/\|\uht\|_2$ to be incoherent and that $\ut$ has a small distance to $\uo$, $\vt$ decreases distances to $\vo$ by at least a constant factor. 
\item {\em Induction Step (incoherence)}: Show incoherence of $\vt$, while assuming incoherence of $\ut$ (see Lemma~\ref{lem:incoherence_vt})
\end{itemize}
We first prove the second step of our proof. To this end, we provide the following lemma that bounds the error term. See Appendix~\ref{app:rank1_comp} for a proof of the below given lemma. 
\begin{lemma}\label{lem:errb_comp1}
Let $M$, $p$, $\Omega$, $\ut$ be as defined in Theorem~\ref{thm:sampling-altmin-main}. Also, let $\ut$ be a unit vector with incoherence parameter $\mu_1=\frac{6(1+\delta_2)\mu}{1-\delta_2}$.
Then, w.p. at least $1-\frac{1}{n^3}$: 
$$\|B^{-1}\left(\ip{\uo}{\ut}B-C\right)\vo\|_2\leq \frac{\delta_{2}}{1-\delta_{2}}\sqrt{1-\ip{\ut}{\uo}^2}.$$
\end{lemma}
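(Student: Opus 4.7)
The plan is to exploit the fact that for rank-$1$ matrix completion both $B$ and $C$ are \emph{diagonal}, so $B^{-1}(\langle u^*,u^t\rangle B-C)$ is diagonal with $j$-th entry $\langle u^*,u^t\rangle - C_{jj}/B_{jj}$. Since $\|v^*\|_2=1$, it suffices to show the uniform bound
\[
\max_{j\in[n]}\Big|\langle u^*,u^t\rangle-\tfrac{C_{jj}}{B_{jj}}\Big|\;\leq\;\frac{\delta_2}{1-\delta_2}\sqrt{1-\langle u^t,u^*\rangle^2}
\]
with probability at least $1-1/n^3$, and then take a union bound over $j$.

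Fix $j$, write $\eta=\langle u^t,u^*\rangle$, and split the expression as
\[
\langle u^*,u^t\rangle-\tfrac{C_{jj}}{B_{jj}}=\frac{\eta B_{jj}-C_{jj}}{B_{jj}}
=\frac{\frac{1}{p}\sum_{i:(i,j)\in\Omega}u^t_i\,(\eta u^t_i-u^*_i)}{B_{jj}}.
\]
The key algebraic trick is to introduce $w_i\eqdef \eta u^t_i-u^*_i$, for which $\langle u^t,w\rangle=\eta-\eta=0$ and $\|w\|_2^2=1-\eta^2$. Thus the numerator is a sum of independent mean-zero random variables $X_i=\tfrac{1}{p}\mathbf{1}_{(i,j)\in\Omega}\,u^t_i w_i$ whose aggregate expectation is $\langle u^t,w\rangle=0$.

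I would then apply Bernstein's inequality to this sum. The incoherence hypotheses $|u^t_i|\leq \mu_1/\sqrt{n}$ and $|u^*_i|\leq \mu/\sqrt{n}$ give a uniform bound $|u^t_i w_i|\lesssim \mu_1(\mu_1+\mu)/n$, while the variance is controlled by $\sum_i (u^t_i)^2 w_i^2\leq (\mu_1^2/n)(1-\eta^2)$. Choosing $p\gtrsim \mu_1^2\mu^2\log n/n$ (which holds under the hypothesis of Theorem~\ref{thm:sampling-altmin-main}) yields, with probability $\geq 1-1/n^4$, a numerator bound of the form $\delta_2\sqrt{1-\eta^2}$, where $\delta_2$ captures the sampling/incoherence-induced error. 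A parallel Bernstein argument applied to $B_{jj}=\tfrac{1}{p}\sum_{i:(i,j)\in\Omega}(u^t_i)^2$ (mean $\|u^t\|_2^2=1$) shows $B_{jj}\geq 1-\delta_2$ with the same probability. Dividing gives the desired per-$j$ bound; a union bound over $j\in[n]$ upgrades the failure probability to at most $1/n^3$.

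The main obstacle is obtaining the $\sqrt{1-\eta^2}$ factor in the numerator's concentration. A naive bound would only give $O(\delta_2)$ rather than $O(\delta_2\sqrt{1-\eta^2})$, and this tighter scaling is essential for the induction in Theorem~\ref{thm:sampling-altmin-geomconv} that drives the $\tfrac{1}{4}$ contraction of $\dist(\widehat{V}^{t+1},V^*)$. The orthogonality identity $\langle u^t,w\rangle=0$ combined with the variance-sensitive tail of Bernstein is exactly what supplies this factor, so the proof hinges on carrying out the concentration argument in terms of the shifted vector $w=\eta u^t-u^*$ rather than $u^*$ itself.
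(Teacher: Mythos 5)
Your high-level plan correctly identifies that $B$ and $C$ are diagonal in the rank-$1$ completion setting, and you correctly spot the crucial orthogonality $\langle u^t, w\rangle = 0$ for $w = \eta u^t - u^*$ with $\|w\|_2 = \sqrt{1-\eta^2}$. But the concentration step has a genuine gap: the per-$j$ Bernstein bound does not deliver the claimed $\sqrt{1-\eta^2}$ scaling at the sample rate assumed in Theorem~\ref{thm:sampling-altmin-main}.

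Write $s = \sqrt{1-\eta^2}$. For fixed $j$, your numerator is a sum of $\approx mp$ independent terms with aggregate variance $\lesssim \mu_1^2 s^2/(mp)$, which is exactly the right scaling. However Bernstein's inequality (Lemma~\ref{lemma:bernstein}) also involves the almost-sure bound $L = \max_i |\tfrac{1}{p}u^t_i w_i|$, and incoherence only gives $L \lesssim \mu_1(\mu_1+\mu)/(mp)$ — or, using $\|w\|_\infty \le \|w\|_2 = s$, at best $L \lesssim \mu_1 s/(p\sqrt{m})$. Neither version decays fast enough: when the target deviation is $t = \delta_2 s$, the linear term $Lt/3$ in the Bernstein denominator dominates the variance term once $s \lesssim \delta_2$ (for $\sqrt{m} \gtrsim \mu_1/\delta_2$, i.e., for all reasonably large $m$), and the resulting exponent becomes $\approx \tfrac{3\delta_2 p\sqrt{m}}{2\mu_1}$. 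Getting this $\gtrsim \log n$ requires $p \gtrsim \mu_1\log n/(\delta_2\sqrt{m})$, which scales as $n^{-1/2}$ rather than $n^{-1}$ — a polynomially worse sampling requirement than the $p \gtrsim \mu_1^2\log n/(m\delta_2^2)$ assumed. Since the iterates drive $s\to 0$ geometrically, this is precisely the regime the lemma must handle, so the argument cannot be salvaged by restricting attention to large $s$.

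The underlying reason is that you are asking for entrywise control of $|\eta - C_{jj}/B_{jj}|$, which is strictly stronger than what the lemma requires and is not achievable at this sample rate from $\sim mp$ observations per $j$. The paper instead bounds $\|(\eta B - C)v^*\|_2$ directly via its variational characterization $\max_{\|x\|_2=1} x^\dag(\eta B - C)v^*$ and then invokes a bipartite-graph discrepancy bound (Lemma~\ref{lem:komlem}, a modified Lemma~6.1 of \cite{KeshavanOM2009}): with high probability, $\sum_{(i,j)\in\Omega}\tilde x_i y_j \le C\sqrt{\sqrt{mn}p}\,\|\tilde x\|_2\|y\|_2$ simultaneously for \emph{all} mean-zero $\tilde x$ and all $y$. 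Applying this with $\tilde x_i = u^t_i w_i$ (mean-zero by the same orthogonality you use) and $y_j = x_j v^*_j$, the factor $\|\tilde x\|_2 \le \mu_1 s/\sqrt{m}$ supplies the $\sqrt{1-\eta^2}$ scaling for free, with no $s$-dependent failure in the concentration event itself. The $\|B^{-1}\|_2 \le 1/(1-\delta_2)$ piece is then handled separately by the entrywise Bernstein argument you describe (Lemma~\ref{lem:errb_comp2}) — that part of your plan is fine because the target deviation there is the constant $\delta_2$, not a vanishing $\delta_2 s$. So the missing ingredient is the uniform spectral discrepancy bound; the zero-mean observation must be combined with a matrix-level concentration statement about $\Omega$, not with per-coordinate Bernstein.
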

Multiplying \eqref{eq:compvtupdate} with $\vo$ and using Lemma~\ref{lem:errb_comp1}, we get:
\begin{equation}
  \label{eq:ipvtvo_comp}
  \|\uht\|_2\ip{\widehat{v}^{t+1}}{\vo}\geq \so\ip{\ut}{\uo}-2\so\delta_2\sqrt{1-\ip{\ut}{\uo}^2},
\end{equation}
where $\delta_2<\frac{1}{12}$ is a constant defined in the Theorem statement and is similar to the RIP constant in Section \ref{sec:sensing}.

Similarly, by multiplying \eqref{eq:compvtupdate} with $v_{\perp}$ (where $\ip{v_{\perp}^*}{\vo}=0$ and $\|v_{\perp}^*\|_2=1$) and using Lemma~\ref{lem:errb_comp1}:
$$\|\uht\|_2\ip{\widehat{v}^{t+1}}{v_{\perp}^*}\leq 2\so\delta_2\sqrt{1-\ip{\ut}{\uo}^2}.$$
Using the above two equations: 
\begin{equation*}1-\ip{\vt}{\vo}^2\leq \frac{4\delta_2^2( 1-\ip{\ut}{\uo}^2)}{(\ip{\ut}{\uo}-2\delta_2\sqrt{1-\ip{\ut}{\uo}^2})^2+(2\delta_2\sqrt{1-\ip{\ut}{\uo}^2})^2}.\end{equation*}
Assuming, $\ip{\vt}{\vo}\geq 6\delta_2$, $$\dist(\vt, \vo)=\sqrt{1-\ip{\vt}{\vo}^2}\leq \frac{1}{4}\sqrt{1-\ip{\ut}{\uo}^2}.$$
Using same arguments, we can show that, $\dist(u^{t+1}, \uo)\leq \dist(\vt, \vo)/4$. Hence, after $O(\log(1/\epsilon))$ iterations, $\dist(\ut, \uo)\leq \epsilon $ and $\dist(\vt, \vo)\leq \epsilon$. This proves our second step. 

We now provide the following lemma to prove the third step. We stress that $\vt$ does not increase the incoherence parameter ($\mu_1$) when compared to that of $\ut$. 
\begin{lemma}  \label{lem:incoherence_vt}
Let $M$, $p$, $\Omega$ be as defined in Theorem~\ref{thm:sampling-altmin-main}. Also, let $\ut$ be a unit vector with incoherence parameter $\mu_1=\frac{6(1+\delta_2)\mu}{1-\delta_2}$. 
 Then, w.p. at least $1-\frac{1}{n^3}$, $\vt$ is also $\mu_1$ incoherent. 
\end{lemma}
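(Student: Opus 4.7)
The plan is to bound $|v^{t+1}_j|$ coordinate by coordinate and then union bound over $j \in [n]$. Since the matrices $B$ and $C$ in \eqref{eq:bc_comp} are diagonal, the update rule \eqref{eq:compvtupdate} simplifies entry-wise to
\[
\widehat{v}^{t+1}_j \;=\; \frac{\sigma^* v^*_j \, C_{jj}}{\|\widehat{u}^t\|_2 \, B_{jj}},
\]
so the task reduces to controlling three quantities: (i) a lower bound on $B_{jj}$, (ii) an upper bound on $|C_{jj}|$, and (iii) a lower bound on $\|\widehat{u}^t\|_2\|\widehat{v}^{t+1}\|_2$, which together bound $|v^{t+1}_j|=|\widehat{v}^{t+1}_j|/\|\widehat{v}^{t+1}\|_2$.

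For (iii) I will reuse \eqref{eq:ipvtvo_comp} from the preceding distance argument, which gives $\|\widehat{u}^t\|_2\|\widehat{v}^{t+1}\|_2 \geq \sigma^*(\langle u^t,u^*\rangle - 2\delta_2\sqrt{1-\langle u^t,u^*\rangle^2})$. This quantity is comfortably positive because $\langle u^t,u^*\rangle \geq 6\delta_2$ throughout the iterations: the base case follows from $\dist(u^0,u^*) \leq 1/2$ (Lemma~\ref{lemma:svd-clipping-jointguarantee}), and the geometric decay already proved in this subsection ensures $\langle u^t,u^*\rangle$ stays close to $1$ thereafter.

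For (i) and (ii), observe that $B_{jj}$ and $C_{jj}$ are normalized sums of $m$ independent random terms: the $i$-th term is $\mathbf{1}[(i,j)\in\Omega]/p$ times $(u^t_i)^2$ or $u^t_i u^*_i$, with expectations $\|u^t\|_2^2=1$ and $\langle u^t,u^*\rangle$, respectively. The incoherence hypotheses $|u^t_i|\leq\mu_1/\sqrt{m}$ and $|u^*_i|\leq\mu/\sqrt{m}$ yield per-term magnitude bounds of order $\mu_1^2/(mp)$ and $\mu\mu_1/(mp)$ with comparable variance proxies. Applying Bernstein's inequality, combined with the lower bound on $p$ from Theorem~\ref{thm:sampling-altmin-main}, I expect to obtain, for each fixed $j$ with probability at least $1-n^{-4}$,
\[
|B_{jj}-1| \leq \delta_2 \quad\text{and}\quad |C_{jj}-\langle u^t,u^*\rangle| \leq \delta_2 \sqrt{1-\langle u^t,u^*\rangle^2};
\]
a union bound over $j\in[n]$ then produces the stated $1-n^{-3}$ success probability.

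Plugging everything together with $|v^*_j|\leq\mu/\sqrt{n}$ yields
\[
|v^{t+1}_j| \;\leq\; \frac{\mu/\sqrt{n}}{1-\delta_2}\cdot\frac{\langle u^t,u^*\rangle + \delta_2\sqrt{1-\langle u^t,u^*\rangle^2}}{\langle u^t,u^*\rangle - 2\delta_2\sqrt{1-\langle u^t,u^*\rangle^2}},
\]
and under $\langle u^t,u^*\rangle\geq 6\delta_2$ the second factor is bounded by a small absolute constant, so the right-hand side collapses to at most $6(1+\delta_2)\mu/((1-\delta_2)\sqrt{n}) = \mu_1/\sqrt{n}$, as required. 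The delicate step I expect is the $C_{jj}$ concentration: getting a deviation that scales as $\sqrt{1-\langle u^t,u^*\rangle^2}$ rather than the trivial $\|u^t\|_2\|u^*\|_2=1$ likely requires splitting $u^t=\langle u^t,u^*\rangle\,u^* + u^t_\perp$ and applying Bernstein only to the $u^t_\perp$ piece, whose norm is exactly $\sqrt{1-\langle u^t,u^*\rangle^2}$; the component along $u^*$ merely rescales what is essentially a $B$-type quantity and contributes no extra deviation.
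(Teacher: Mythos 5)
Your proposal is correct and matches the paper's proof essentially step for step: the same entry-wise rewrite $\widehat{v}^{t+1}_j = \sigma^* v^*_j C_{jj} / (\|\widehat{u}^t\|_2 B_{jj})$, the same Bernstein-based concentration for the diagonal entries $B_{jj}$, $C_{jj}$ (the paper's Lemma~\ref{lem:errb_comp2}), the same lower bound on $\|\widehat{v}^{t+1}\|_2$ via \eqref{eq:ipvtvo_comp}, and a union bound over $j$. The one place you go beyond the paper -- the ``delicate'' variance-refined deviation $|C_{jj}-\langle u^t,u^*\rangle|\lesssim\delta_2\sqrt{1-\langle u^t,u^*\rangle^2}$ -- is unnecessary for this lemma: the paper's cruder $|C_{jj}-\langle u^t,u^*\rangle|\le\delta_2$ already closes the argument given the lower bound on $\langle u^t,u^*\rangle$, so you can drop that refinement (it is only needed for the separate error-term bound in Lemma~\ref{lem:errb_comp1}).
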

See Appendix~\ref{app:rank1_comp} for a detailed proof of the lemma. 

Finally, for the base case we need that $u^0$ is $\mu_1$ incoherent and also $\ip{u^0}{\uo}\geq 6\delta_2$. This follows directly by using Lemma~\ref{lemma:svd-clipping-jointguarantee} and the fact that $\delta_2\leq 1/12$. 

Note that, to obtain an error of $\epsilon$, \amc needs to run for $O\left(\log \frac{\|M\|_F}{\epsilon}\right)$ iterations. Also, we need to sample a fresh $\Omega$ at each iteration of \amce. Hence, the total number of samples needed by \amc is $O\left(\log \frac{\|M\|_F}{\epsilon}\right)$ larger than the  number of samples required per step.
\subsection{Rank-$k$ case}\label{sec:comp_general}
We now extend our proof of Theorem~\ref{thm:sampling-altmin-geomconv}  to matrices with arbitrary rank. Here again, we show that the \amc algorithm reduces to power method with bounded perturbation at each step. 

Similar to the matrix sensing case, we analyze the following QR decomposition based update instead of directly analyzing the updates of Algorithm~\ref{algo:completion-altmin}:
\begin{align}
\widehat{U}^{t}&= \, U^{t}R_U^{t}\ \ \text{ (QR decomposition), }\nonumber\\
\widehat{V}^{t+1}&=\argmin_{\widehat{V}} \  \|P_{\Omega}(U^t\widehat{V}^\dag)-P_{\Omega}(M)\|_F^2,\nonumber\\ 
\widehat{V}^{t+1}&=V^{t+1}R_V^{t+1}.\ \ \text{ (QR decomposition) },\nonumber\\ 
\widehat{U}^{t+1}&=\argmin_{\widehat{U}} \  \|P_{\Omega}(\widehat{U}(V^{t+1})^\dag)-P_{\Omega}(M)\|_F^2.\label{eq:updatecomp_k1}
\end{align}
Here again, we would stress that the updates output exactly the same matrices at the end of each iteration and we prefer QR-based updates due to notational ease. 

Now, as matrix completion is a special case of matrix sensing, Lemma~\ref{lemma:sensing-updateeqn1} characterizes the updates of the \amc algorithm (see Algorithm~\ref{algo:completion-altmin}). That is, 
\begin{align}
  \widehat{V}^{t+1} &= \underbrace{V^*\Sigma^*{U^*}^\dag  U^{t}}_{\text{Power-method Update}} - \underbrace{F}_{\text{Error Term}}, \nonumber\\
  V^{t+1}&=\widehat{V}^{t+1}(R^{(t+1)})^{-1},\label{eq:updatecomp_k}
\end{align}
where $F$ is the error matrix defined in \eqref{eq:F_k} and $R^{(t+1)}$ is a upper-triangular matrix obtained using QR-decomposition of $\widehat{V}^{t+1}$. See \eqref{eq:BCD_k} for the definition of  $B, C$, $D$, and $S$. 

Also, note that for the special case of matrix completion, $B_{pq}, C_{pq}, 1\leq p,q\leq k$ are {\em diagonal matrices} with $$(B_{pq})_{jj}=\frac{1}{p}\sum_{i:(i,j)\in \Omega}\Ut_{ip}\Ut_{iq},\ \ \ (C_{pq})_{jj}=\frac{1}{p}\sum_{i:(i,j)\in \Omega}\Ut_{ip}\Uo_{iq}.$$ We use this structure to further simplify the update equation. We first define matrices $B^j, C^j, D^j\in \mathbb{R}^{k\times k}, 1\leq i\leq n$: {\small $$B^j=\frac{1}{p}\sum_{i:(i,j)\in \Omega} (\Ut)^{(i)}{(\Ut)^{(i)}}^\dag , \ \ C^j=\frac{1}{p}\sum_{i:(i,j)\in\Omega} (\Ut)^{(i)}{(\Uo)^{(i)}}^\dag,$$} and $D^j=(\Ut)^\dag \Uo$. 
 Using the above notation, \eqref{eq:updatecomp_k} decouples into $n$ equations of the form ($1\leq j\leq n$):
 \begin{align}\label{eqn:sampling-updateeqn-decoupled}
    (\Vt)^{(j)} = (\Vo)^{(j)} (D^j-(B^j)^{-1}(B^jD^j-C^j)) (R^{(t+1)})^{-1},
 \end{align}
 where $(\Vt)^{(j)}$ and $(\Vo)^{(j)}$ denote the $j^{\textrm{th}}$ rows of $\Vt$ and $\Vo$ respectively.

Using the above notation, we now provide a proof of Theorem~\ref{thm:sampling-altmin-geomconv} for the general rank-$k$ case. 
\begin{proof}[Proof of Theorem~\ref{thm:sampling-altmin-geomconv}]
Multiplying the update equation \eqref{eq:updatecomp_k} on the left by $\left(\Vo_{\perp}\right)^{\dag}$, we get:\\
$(\Vo_{\perp})^{\dag}\Vht=-(\Vo_{\perp})^{\dag}F (\Rt)^{-1}.$ That is, 
\begin{align*}\dist(V^*, V^{t+1})&=\|{V_{\perp}^*}^\dag V^{(t+1)}\|_2=\|{V_{\perp}^*}^\dag  F {R^{(t+1)}}^{-1}\|_2\\&\leq \|F(\Sigma^*)^{-1}\|_2\|\Sigma^*{R^{(t+1)}}^{-1}\|_2.\end{align*}
Now, similar to the sensing case (see Section~\ref{sec:sensing_k}) we break down our proof into the following two steps:
\begin{itemize}
  \item Bound $\twonorm{F(\Sigma^*)^{-1}}$  (Lemma \ref{lemma:Fbound_comp}) and
  \item Bound $\|\Sigma^*{R^{(t+1)}}^{-1}\|_2$, i.e., the minimum singular value of $\left(\Sigma^*\right)^{-1}R^{(t+1)}$ (Lemma \ref{lemma:Rbound_comp}).
\end{itemize}
Using Lemma~\ref{lemma:Fbound_comp} and Lemma~\ref{lemma:Rbound_comp}, w.p. at least $1-1/n^3$, 
\begin{equation*}\dist(V^*, V^{t+1})\leq \|F(\Sigma^*)^{-1}\|_2\|\Sigma^*{R^{(t+1)}}^{-1}\|_2 \leq  \frac{(\sigma^*_1/\sigma^*_k)k(\delta_{2k}/(1-\delta_{2k}))  \cdot \dist\left(U^{(t)},U^*\right)}{\sqrt{1-\dist\left(U^{(t)},U^*\right)^2} - 
			\frac{(\sigma^*_1/\sigma^*_k)k\delta_{2k}  \dist\left(U^{(t)},U^*\right)}{1-\delta_{2k}}}.\end{equation*}
Now, using Lemma~\ref{lemma:svd-clipping-jointguarantee} we get: $\dist(U^t, U^*)\leq \dist(U^0, U^*)\leq \frac{1}{2}$. By selecting $\delta_{2k}<\frac{\so_k}{12k\so_1}$, i.e., $p\geq \frac{C(\so_1)^2k^4\log n}{m(\so_k)^2}$ and using above two inequalities:
$$\dist(\Vt, \Vo)\leq \frac{1}{4}\dist(\Ut, \Uo).$$
Furthermore, using Lemma~\ref{lemma:completion-incoherence-Vt} we get that $\Vt$ is $\mu_1$ incoherent. 
Hence, using similar arguments as above, we also get:
$\dist(U^{t+1}, \Uo)\leq \left(\frac{1}{4}\right)\dist(\Vt, \Vo).$ 
\end{proof}
We now provide lemmas required by our above given proof. See Appendix~\ref{app:compk} for detailed proof of each of the lemmas.

We first provide a lemma to bound incoherence of $\Vt$, assuming incoherence of $\Ut$. 
\begin{lemma}\label{lemma:completion-incoherence-Vt}
Let $M, \Omega, p$ be as defined in Theorem~\ref{thm:sampling-altmin-main}. Also, let $U^t$ be the $t$-th step iterate obtained by \eqref{eq:updatecomp_k1}. Let $U^t$  be $\mu_1=\frac{16\sigma_1^*\mu\sqrt{k}}{\sigma_k^*}$ incoherent. Then, w.p. at least $1-1/n^3$, iterate $V^{(t+1)}$ is also $\mu_1$ incoherent.
\end{lemma}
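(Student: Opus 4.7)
The approach is to bound each row of $V^{t+1}$ via the per-row decoupled update \eqref{eqn:sampling-updateeqn-decoupled} together with a union bound over $j \in [n]$. For a fixed $j$, that equation writes $(V^{t+1})^{(j)}$ as $(V^*)^{(j)}$ right-multiplied by a $k \times k$ matrix built from $D = (U^t)^\dag U^*$, $B^j$, $C^j$, $\Sigma^*$ and $(R^{(t+1)})^{-1}$. Since $M$ is $\mu$-incoherent, $\|(V^*)^{(j)}\|_2 \le \mu\sqrt{k}/\sqrt{n}$, so it suffices to bound the operator norm of this multiplier by $\mu_1/\mu = 16\sigma_1^*\sqrt{k}/\sigma_k^*$ uniformly in $j$, and then union-bound.

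The bulk of the work is the concentration of $B^j$ and $C^j$. Because the QR step in \eqref{eq:updatecomp_k1} makes $U^t$ orthonormal, one checks $\mathbb{E}[B^j] = (U^t)^\dag U^t = I_k$ and $\mathbb{E}[C^j] = (U^t)^\dag U^* = D$. Each of $B^j, C^j$ is a rescaled sum of $|\{i : (i,j)\in\Omega_{t+1}\}|$ independent rank-one outer products, and the $\mu_1$-incoherence hypothesis on $U^t$ combined with the $\mu$-incoherence of $U^*$ bounds both the per-summand spectral norm and the matrix variance. Applying a Bernstein inequality entrywise, with a union bound over the $k^2$ entries of each matrix and a further union bound over $j \in [n]$, the sampling rate $p$ prescribed in Theorem \ref{thm:sampling-altmin-main} forces
\[
\|B^j - I_k\|_2 \le \tfrac{1}{4} \quad \text{and} \quad \|C^j - D\|_2 \le \delta_{2k}, \quad \forall\, j \in [n],
\]
with probability at least $1 - 1/n^3$. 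Algebraically $D - (B^j)^{-1}(B^j D - C^j) = (B^j)^{-1}C^j$, and the above bounds give $\|(B^j)^{-1}\|_2 \le 2$ and $\|(B^j)^{-1}C^j\|_2 \le 3$. Hence the multiplier telescopes to an $O(1)$ factor times $\|\Sigma^*(R^{(t+1)})^{-1}\|_2$.

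The remaining factor $\|\Sigma^*(R^{(t+1)})^{-1}\|_2$ is supplied by Lemma \ref{lemma:Rbound_comp}, the matrix-completion analog of Lemma \ref{lemma:sensing-minsingval-Rt}: under the induction hypothesis $\dist(U^t, U^*) \le 1/2$, it is $O(\sigma_1^*/\sigma_k^*)$. Chaining the three bounds yields
\[
\|(V^{t+1})^{(j)}\|_2 \;\le\; C \,\frac{\sigma_1^*}{\sigma_k^*}\,\mu\,\frac{\sqrt{k}}{\sqrt{n}},
\]
which after tracking constants is at most $\mu_1\sqrt{k}/\sqrt{n}$ with $\mu_1 = 16\sigma_1^*\mu\sqrt{k}/\sigma_k^*$, closing the induction.

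The main obstacle is the Bernstein concentration for $B^j, C^j$: the $\mu_1$-incoherence of $U^t$ must be used \emph{twice}, once to bound each rank-one summand in operator norm and once to bound the matrix variance, and the tail probability must be small enough to absorb a union bound over $n$ values of $j$ as well as the $k^2$ entries of each matrix. A secondary subtlety is the circular dependence: $\mu_1$ appears in both the hypothesis and the conclusion, and the argument only closes because the partitioning in step~2 of Algorithm~\ref{algo:completion-altmin} ensures that $\Omega_{t+1}$ is drawn independently of $U^t$, so the Bernstein constants do not compound across iterations.
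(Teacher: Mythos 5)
Your proposal is correct and follows essentially the same route as the paper: expand via the decoupled per-row update \eqref{eqn:sampling-updateeqn-decoupled}, use incoherence of $V^*$ for the row norm, concentrate $B^j$ and $C^j$ via Bernstein with a union bound over $j$, and control $\|\Sigma^*(R^{(t+1)})^{-1}\|_2$ via Lemma~\ref{lemma:Rbound_comp}. The one place you diverge is a small algebraic shortcut: you collapse $D - (B^j)^{-1}(B^jD - C^j)$ to $(B^j)^{-1}C^j$ before bounding, whereas the paper leaves the two terms separate and bounds $\|D^j\|_2 + \|(B^j)^{-1}\|_2(\|B^jD^j\|_2 + \|C^j\|_2)$; both give an $O(1)$ multiplier, and your version is marginally cleaner since it avoids needing $\|B^jD^j\|_2$. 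Your observation about the re-sampling of $\Omega_{t+1}$ breaking the circularity (so that the fresh-sample Bernstein bound is legitimate conditioned on $U^t$) is exactly why the paper's partitioning scheme in step~2 of Algorithm~\ref{algo:completion-altmin} is there, and is worth making explicit.
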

We now bound the error term ($F$) in  AltMin update \eqref{eq:updatecomp_k}. 
\begin{lemma}\label{lemma:Fbound_comp}
  Let $F$ be the error matrix defined by \eqref{eq:F_k} (also see \eqref{eq:updatecomp_k}) and let $\Ut$ be a $\mu_1$-incoherent orthonormal matrix obtained after $(t-1)^{\textrm{th}}$ update. Also, let $M$, $\Omega$, and $p$ satisfy assumptions of Theorem~\ref{thm:sampling-altmin-main}. Then, w.p. at least $1-1/n^3$: 
$$\twonorm{F(\Sigma^*)^{-1}} \leq \frac{\delta_{2k} k }{1-\delta_{2k}}dist(U^t, U^*).$$
\end{lemma}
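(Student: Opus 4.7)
The strategy mirrors the proof of the matrix-sensing analog (Lemma~\ref{lemma:sensing-twonorm-F}), with a Bernstein concentration inequality playing the role of the RIP bound. The goal is to algebraically isolate $\dist(U^t,U^*)$ inside the error $F$ and then control the remaining sample-induced randomness via the incoherence of $U^t$.

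The first step is to rewrite each row of $F$ so that the factor $\dist(U^t,U^*)$ appears explicitly. Using the orthogonal decomposition $U^* = U^t D + U^t_\perp E$, where $D=(U^t)^\dag U^*$ and $E=(U^t_\perp)^\dag U^*$, Definition~\ref{defn:dist} gives $\|E\|_2 = \dist(U^t,U^*)$. Substituting this decomposition into the definitions of $B^j$ and $C^j$ in~\eqref{eq:BCD_k} and simplifying yields the identity
\[
B^j D - C^j \;=\; -\,\widetilde{C}^j\,E,\qquad \widetilde{C}^j \;:=\; \frac{1}{p}\sum_{i:(i,j)\in\Omega}(U^t)^{(i)}\bigl((U^t_\perp)^{(i)}\bigr)^\dag ,
\]
so that, invoking~\eqref{eqn:sampling-updateeqn-decoupled}, the $j$-th row of $F$ takes the form $(V^*)^{(j)\dag}\Sigma^* E^\dag (\widetilde C^j)^\dag (B^j)^{-\dag}$. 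In particular, $\|E\|_2=\dist(U^t,U^*)$ enters as a \emph{multiplicative} factor in every row.

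Next, I would apply matrix Bernstein row-by-row to control $B^j$ and $\widetilde C^j$. Since $U^t$ is $\mu_1$-incoherent by hypothesis, $\|(U^t)^{(i)}\|_2 \leq \mu_1\sqrt{k/m}$ for every $i$, and the identity $\|(U^t)^{(i)}\|_2^2 + \|(U^t_\perp)^{(i)}\|_2^2 = 1$ (because $[U^t\;U^t_\perp]$ is orthogonal) bounds $\|(U^t_\perp)^{(i)}\|_2\leq 1$. Consequently the rank-one summands defining $B^j-\mathbb{E}[B^j]=B^j-I_k$ and $\widetilde C^j-\mathbb{E}[\widetilde C^j]=\widetilde C^j$ have spectral norm at most $O(\mu_1^2 k/m)$, with matching variance bounds. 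Matrix Bernstein combined with a union bound over $j\in[n]$ therefore yields, with probability at least $1-1/n^3$, $\max_j\|B^j-I_k\|_2\leq \delta_{2k}$ and $\max_j\|\widetilde C^j\|_2\leq \delta_{2k}/k$, provided $p \geq C\mu_1^2 k\log n/(m\delta_{2k}^2)$; this requirement is implied by the sample-complexity assumption of Theorem~\ref{thm:sampling-altmin-main}. Finally I combine: $\|(B^j)^{-1}\|_2\leq 1/(1-\delta_{2k})$ and the row bound $\|(F)^{(j)}\|_2\leq \sigma_1^*\|(V^*)^{(j)}\|_2\cdot\|E\|_2\cdot\|\widetilde C^j\|_2\cdot\|(B^j)^{-1}\|_2$, use $\mu$-incoherence of $V^*$ together with $\sum_j\|(V^*)^{(j)}\|_2^2=k$ to convert the row-wise estimates into a Frobenius bound on $F(\Sigma^*)^{-1}$, and deduce the spectral-norm bound $\|F(\Sigma^*)^{-1}\|_2\leq \frac{\delta_{2k}k}{1-\delta_{2k}}\dist(U^t,U^*)$ via $\|\cdot\|_2\leq\|\cdot\|_F$.

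The main obstacle is Step~2: establishing the matrix-Bernstein concentration with exactly the right dependence on $k$, $\mu_1$, $m$, $p$, and in particular obtaining the $\delta_{2k}/k$ (rather than $\delta_{2k}$) bound on $\|\widetilde C^j\|_2$ so that the factor of $k$ that ultimately appears in the lemma does \emph{not} blow up to $k^{3/2}$ after the Frobenius-to-spectral conversion. A secondary subtlety is that the diagonal matrix $\Sigma^*$ appears on the left of each row of $F$ while $(\Sigma^*)^{-1}$ acts from the right, so one must arrange the row-wise estimates carefully (routing through $\|(V^*)^{(j)\dag}\Sigma^*\|_2\leq \sigma_1^*\|(V^*)^{(j)}\|_2$ and absorbing $\sigma_1^*$ into the $k$-dependent factor, exactly as is done in the sensing analog) to avoid introducing the condition number $\sigma_1^*/\sigma_k^*$ into the final bound.
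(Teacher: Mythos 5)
Your algebraic reformulation $B^jD - C^j = -\widetilde{C}^j E$ (with $D = (U^t)^\dag U^*$, $E = (U_\perp^t)^\dag U^*$, $\|E\|_2 = \dist(U^t,U^*)$) is correct and is a tidy way to exhibit $\dist(U^t,U^*)$ as a multiplicative factor; the paper leaves this implicit, working instead with $H^j = B^jD-C^j$ and the zero-sum property $\sum_i H^j_i = 0$. The overall skeleton of the argument — split off $\|B^{-1}\|_2 \leq 1/(1-\delta_{2k})$, then control the remaining random piece and aggregate over $j$ — is also the paper's. But the concentration step is where your route breaks, and the break is not the one you flag. You identify the obstacle as getting $\|\widetilde{C}^j\|_2 \leq \delta_{2k}/k$ rather than $\delta_{2k}$ (an unnecessary worry, since $\delta_{2k}$ would already suffice after the Frobenius aggregation), when the actual problem is that matrix Bernstein cannot deliver $\|\widetilde{C}^j\|_2 \leq \delta_{2k}$ at all at the assumed sampling rate. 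The matrix $\widetilde{C}^j = \frac{1}{p}\sum_{i:(i,j)\in\Omega}(U^t)^{(i)}\bigl((U_\perp^t)^{(i)}\bigr)^\dag$ is $k\times(m-k)$, and the ``fat'' direction of its variance is
\begin{align*}
\Bigl\|\textstyle\sum_i \mathbb{E}\bigl[Y_iY_i^\dag\bigr]\Bigr\|_2
= \frac{1-p}{p}\Bigl\|\textstyle\sum_i \|(U_\perp^t)^{(i)}\|_2^2\,(U^t)^{(i)}\bigl((U^t)^{(i)}\bigr)^\dag\Bigr\|_2 \asymp \frac{1}{p},
\end{align*}
because $\|(U_\perp^t)^{(i)}\|_2$ can be $\Theta(1)$. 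Matrix Bernstein then gives only $\|\widetilde{C}^j\|_2 \lesssim \sqrt{\log n / p}$, which is $O(1)$ unless $p$ is itself $\Omega(\log n / \delta_{2k}^2)$, i.e.\ a constant fraction of entries. So the bound you posit is simply false; the product $\widetilde{C}^j E$ (equivalently $H^j = B^jD - C^j$) is controllable, but $\widetilde{C}^j$ in isolation is not, and splitting the product into $\|\widetilde{C}^j\|_2 \cdot \|E\|_2$ discards the cancellation that makes the argument work.

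The paper avoids this by never decoupling the factor $E$: it bounds the scalar quantity $\|(BD-C)v^*\|_2$ directly (Lemma~\ref{lemma:sampling-minsingval-Cl}), using the modified Lemma~6.1 of Keshavan--Oh--Montanari (Lemma~\ref{lem:komlem}), a bipartite spectral-gap concentration bound that exploits the zero-sum structure $\sum_i H^j_i = 0$ \emph{and} the aggregation over $j\in[n]$ simultaneously. That joint treatment is what produces a factor $\dist(U^t,U^*)$ times $\delta_{2k}$ at the correct sample complexity. If you wish to pursue a Bernstein-style argument, you would have to apply it to the bilinear form $x^\dag(BD-C)v^*$ (or directly to the $k\times k$ matrices $\widetilde{C}^jE$ folded together with the incoherent weights from $V^*$), not to $\widetilde{C}^j$ alone; at that point you are essentially re-deriving a version of Lemma~\ref{lem:komlem}. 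Your secondary remark about keeping $\Sigma^*$ from injecting a condition number is a valid point and is handled in the paper exactly as you suggest.
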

Next, we present a lemma to bound $\|(\Rt)^{-1}\|_2$. 
\begin{lemma}\label{lemma:Rbound_comp}
  Let $\Rt$ be the lower-triangular matrix obtained by QR decomposition of $\Vht$ ( see \eqref{eq:updatecomp_k}) and let $\Ut$ be a $\mu_1$-incoherent orthonormal matrix obtained after $(t-1)^\textrm{th}$ update. Also, let $M$ and $\Omega$ satisfy assumptions of Theorem~\ref{thm:sampling-altmin-main}. Then,
\begin{align}\label{eqn:completion-minsingval-Rt}
  &\|\Sigma^*(R^{(t+1)})^{-1}\|_2\leq \frac{\sigma^*_1/\sigma^*_k}{\sqrt{1-\dist^2\left(U^{(t)},U^*\right)}-\frac{(\sigma^*_1/\sigma^*_k)\delta_{2k}k\dist(U^{(t)},U^*)}{1-\delta_{2k}}}
\end{align}
\end{lemma}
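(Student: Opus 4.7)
The plan is to reduce the claim to the same structural estimate used in the sensing case (Lemma~\ref{lemma:sensing-minsingval-Rt}), feeding in the matrix-completion error bound on $F$ from Lemma~\ref{lemma:Fbound_comp}. The only fact I would use about $R^{(t+1)}$ is that it is the triangular factor of the QR decomposition of $\widehat{V}^{t+1}$, so that $\widehat{V}^{t+1}=V^{t+1}R^{(t+1)}$ with $V^{t+1}$ having orthonormal columns; consequently $\|R^{(t+1)}x\|=\|\widehat{V}^{t+1}x\|$ for every $x\in\mathbb{R}^k$. Substituting $y=R^{(t+1)}x$ in the definition of the spectral norm therefore gives
\[
\|\Sigma^*(R^{(t+1)})^{-1}\|_2 \;=\; \max_{x\neq 0}\frac{\|\Sigma^*x\|}{\|\widehat{V}^{t+1}x\|} \;\leq\; \frac{\sigma_1^*}{\sigma_{\min}(\widehat{V}^{t+1})},
\]
so it suffices to lower-bound the smallest singular value of $\widehat{V}^{t+1}$.

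For that I would invoke the update identity $\widehat{V}^{t+1}=V^*\Sigma^*(U^*)^\dag U^{t}-F$ from \eqref{eq:updatecomp_k} and the triangle inequality $\|\widehat{V}^{t+1}x\|\geq\|V^*\Sigma^*(U^*)^\dag U^{t}x\|-\|Fx\|$. Since $V^*$ has orthonormal columns and $\Sigma^*,(U^*)^\dag U^t$ are both $k\times k$ with $\Sigma^*$ invertible, the first term is at least $\sigma_k^*\,\sigma_{\min}((U^*)^\dag U^t)\|x\|$. A short principal-angle identity (from Definition~\ref{defn:dist}, using $\|U^{*\dag}U^tx\|^2+\|U_\perp^{*\dag}U^tx\|^2=\|U^tx\|^2=1$ for unit $x$) then gives $\sigma_{\min}((U^*)^\dag U^t)=\sqrt{1-\dist^2(U^t,U^*)}$. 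The perturbation term obeys $\|Fx\|\leq\|F\|_2\|x\|\leq\sigma_1^*\,\|F(\Sigma^*)^{-1}\|_2\|x\|$, and Lemma~\ref{lemma:Fbound_comp}, applied to the incoherent iterate $U^t$, bounds the last factor by $\tfrac{\delta_{2k}k}{1-\delta_{2k}}\dist(U^t,U^*)$ with high probability.

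Combining these estimates yields
\[
\sigma_{\min}(\widehat{V}^{t+1})\;\geq\; \sigma_k^*\sqrt{1-\dist^2(U^t,U^*)}-\sigma_1^*\tfrac{\delta_{2k}k}{1-\delta_{2k}}\dist(U^t,U^*),
\]
and dividing through by $\sigma_k^*$ produces exactly the advertised bound in \eqref{eqn:completion-minsingval-Rt}. The one point that needs checking is that the resulting denominator stays positive, and I expect this to be the only mildly delicate step: I would argue it inductively, using Lemma~\ref{lemma:svd-clipping-jointguarantee} to start with $\dist(U^0,U^*)\leq 1/2$, the geometric decay proved in Theorem~\ref{thm:sampling-altmin-geomconv} to keep $\dist(U^t,U^*)\leq 1/2$ thereafter, and the hypothesis $\delta_{2k}\leq\sigma_k^*/(12k\sigma_1^*)$ to make the subtracted term at most a constant fraction of $\sqrt{1-\dist^2(U^t,U^*)}\geq\sqrt{3}/2$. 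Once this positivity margin is in hand, the remaining algebra is essentially identical to that of Lemma~\ref{lemma:sensing-minsingval-Rt}.
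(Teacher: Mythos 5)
Your proof is correct and follows essentially the same route the paper takes: the paper's proof of this lemma is simply a pointer to the proof of Lemma~\ref{lemma:sensing-minsingval-Rt}, which proceeds by writing $\sigma_{\min}(R^{(t+1)})=\sigma_{\min}(\widehat{V}^{t+1})$, substituting the update identity $\widehat{V}^{t+1}=V^*\Sigma^*(U^*)^\dag U^t - F$, applying the triangle inequality, the bound $\sigma_{\min}((U^*)^\dag U^t)=\sqrt{1-\dist^2(U^t,U^*)}$, and the estimate $\|F\|_2\leq\sigma_1^*\|F(\Sigma^*)^{-1}\|_2$, then closing with the completion-specific bound on $\|F(\Sigma^*)^{-1}\|_2$ (here Lemma~\ref{lemma:Fbound_comp} rather than Lemma~\ref{lemma:sensing-twonorm-F}). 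Your observation about positivity of the denominator is sound and worth flagging, but the paper also leaves that to the point of application in Theorem~\ref{thm:sampling-altmin-geomconv}, where $\dist(U^t,U^*)\leq 1/2$ and the choice of $\delta_{2k}$ guarantee the needed margin.
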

\begin{proof}
Lemma follows by exactly the same proof as that of Lemma~\ref{lemma:sensing-minsingval-Rt} for the matrix sensing case. 
\end{proof}
\newcommand{\soj}{(\so_j)^2}
\newcommand{\soi}{(\so_i)^2}
\newcommand{\soii}{(\so_{i+1})^2}
\newcommand{\Up}{U_{\perp}}
\begin{algorithm}[t]
\caption{{\bf Stage-AltMin}: Stagewise Alternating Minimization for Matrix Sensing }
\begin{algorithmic}[1]
\STATE  Input: $b,\aff$
\STATE $\widehat{U}^T \leftarrow []$, $\widehat{V}^T \leftarrow []$
\FOR{$i=1,\cdots,k$}
{\STATE {$[\widehat{U}^0_{1:i}\ \widehat{V}^0_{1:i}]=\mbox{top i-singular vectors of }$ $\left(\widehat{U}^T_{1:i-1} (\widehat{V}_{1:i-1}^T)^\dag-\frac{3}{4}\aff^T(\aff(\widehat{U}_{1:i-1}^T(\widehat{V}^T_{1:i-1})^\dag)-b)\right)$ i.e., one step of SVP \cite{JainMD10}\label{stp:sam_svp}}}
\FOR{$t=0,\cdots,T-1$}
	\STATE {$\widehat{V}_{1:i}^{t+1}\leftarrow \argmin_{V\in \R^{n\times i}} \  \|\aff(\widehat{U}^t_{1:i}V^\dag )-b\|_2^2$\label{stp:vt}}
	\STATE {$\widehat{U}_{1:i}^{t+1}\leftarrow \argmin_{U\in \mathbb{R}^{m\times i}} \  \|\aff(U_{1:i}(\widehat{V}_{1:i}^{t+1})^\dag )-b\|_2^2$\label{stp:ut}}
\ENDFOR
\ENDFOR
\STATE Output: $X=\widehat{U}_{1:i}^T(\widehat{V}_{1:i}^T)^\dag $
\end{algorithmic}\label{algo:sensing-altmin-improved}
\end{algorithm}
\section{Stagewise AltMin Algorithm}\label{sec:sam}
In Section~\ref{sec:sensing}, we showed that if $\delta_{2k}\leq \frac{(\so_k)^2}{(\so_1)^2k}$ then \as (Algorithm~\ref{algo:sensing-altmin}) recovers the underlying matrix. This means that, $d=\frac{(\so_1)^4}{(\so_k)^4}k^2n\log  n$ random Gaussian measurements (assume $m\leq n$) are required to recover $M$. For matrices with large condition number ($\so_1/ \so_k$), this would be significantly larger than the information theoretic bound of $O(k n \log n/k)$ measurements. 

To alleviate this problem, we present a modified version of \as called Stage-AltMin. Stage-AltMin proceeds in $k$ stages where in the $i$-th stage, a rank-$i$ problem is solved. The goal of the $i$-th stage is to recover top $i$-singular vectors of $M$, up to $O(\so_{i+1})$  error. 

Specifically, we initialize the $i$-th stage of our algorithm using one step of the SVP algorithm \cite{JainMD10} (see Step~\ref{stp:sam_svp} of Algorithm~\ref{algo:sensing-altmin-improved}).  We then show that, if $\delta_{2k}\leq \frac{1}{10k}$, then Stage-AltMin (Steps~\ref{stp:vt}, \ref{stp:ut} of Algorithm~\ref{algo:sensing-altmin-improved}) decreases the error $\|M-\widehat{U}^T_{1:i}(\widehat{V}^T_{1:i})^\dag\|_F$ to $O(\so_{i+1})$. Hence, after $k$ steps, the error decreases to $O(\so_{k+1})=0$. Note that, $\widehat{U}^t_{1:i}\in \mathbb{R}^{m\times i}$ represents the $t$-th step iterate ($U$) in the $i$-th stage; $\widehat{V}^t_{1:i}\in \mathbb{R}^{n\times i}$ is also defined similarly.

Recall that, the main problem with our analysis of \as is that if $\sigma_i\gg \sigma_{i+1}$ (for some $i$) then $\delta_{2k}\leq \frac{(\so_{i+1})^2}{(\so_i)^2k}$ would need to be  small. However, in such a scenario, 
 the $i$-th stage of Algorithm~\ref{algo:sensing-altmin-improved} can be thought of as solving a noisy sensing problem where the goal is to recover $M_i\eqdef\Uo_{1:i}\So_{1:i}(\Vo_{1:i})^\dag$ using noisy measurements $b=\aff(\Uo_{1:i}\So_{1:i}(\Vo_{1:i})^\dag+N)$ where noise matrix $N\eqdef\Uo_{i+1:k}\So_{i+1:k}(\Vo_{i+1:k})^\dag$.
Here $M_i$ and $N$ represent the top $i$ singular components and last $k-i$ singular components of $M$ respectively.
Hence, using noisy-case type analysis (see Section~\ref{app:noisy_sensing}) we show that the error $\|M-\widehat{U}^t(\widehat{V}^t)^\dag\|_F$ decreases to $O(\so_{i+1})$. 

We now formally present the proof of our main result (see Theorem~\ref{thm:sam_err}). 
\begin{proof}[Proof Of Theorem~\ref{thm:sam_err}]
We prove the theorem using mathematical induction. 

\noindent {\bf Base Case}: After the $0$-th step, error is: $\|M\|_F^2\leq \sum_{j=1}^k \sigma_j^2\leq k\sigma_1^2$. Hence, base case holds. 

\noindent {\bf Induction Step}: Here, assuming that the error bound holds for $(i-1)$-th stage, we prove the error bound for the $i$-th stage. 

Our proof proceeds in two steps. First, we show that the initial point $\Uw^0_{1:i}$, $\Vw^0_{1:i}$ of the $i$-th stage, obtained using Step~\ref{stp:sam_svp}, has $c(\so_{i})^2+O\left(k(\so_{i+1})^2\right)$ error, with $c<1$. In the second step,  we show that using the initial points  $\Uw^0_{1:i}$, $\Vw^0_{1:i}$, the AltMin algorithm iterations in the $i$-th stage (Steps~\ref{stp:vt}, \ref{stp:ut}) reduces the error to $\max(\epsilon,16k\sigma_{i+1}^2)$. 

We formalize the above mentioned first step in Lemma~\ref{lemma:sam_svp_err} and then prove the second step in Lemma~\ref{lemma:sam_altmin}. 
\end{proof}
We now present two lemmas used by the above given proof. See Appendix~\ref{app:stagewise_sensing} for a proof of each of the lemmas. 
\begin{lemma}\label{lemma:sam_svp_err}
Let assumptions of Theorem~\ref{thm:sam_err} be satisfied. Also, let $\Uw^0_{1:i}$, $\Vw^0_{1:i}$ be the output of Step~\ref{stp:sam_svp} of Algorithm~\ref{algo:sensing-altmin-improved}. Then, assuming that $\|M-\Uw^T_{1:i-1}\Vw^T_{1:i-1}\|_F^2\leq 16k(\so_{i})^2$, we obtain:
\begin{align*}
  \left\|M - \widehat{U}^0_{1:i}(\widehat{V}_{1:i}^0)^{\dag}\right\|_F^2 \leq \sum_{j=i+1}^k (\so_j)^2 + \frac{1}{100} (\so_{i})^2. 
\end{align*}
\end{lemma}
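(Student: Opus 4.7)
The plan is to recognize step~\ref{stp:sam_svp} of Algorithm~\ref{algo:sensing-altmin-improved} as one iteration of SVP applied to a rank-$i$ recovery problem, where we deliberately split $M = M_i + N$ with $M_i \eqdef U^*_{1:i}\Sigma^*_{1:i}(V^*_{1:i})^\dag$ the rank-$i$ truncation of $M$ and $N \eqdef M - M_i$, so that $\|N\|_F^2 = \sum_{j=i+1}^k (\so_j)^2$. Setting $X^{prev} = \widehat{U}^T_{1:i-1}(\widehat{V}^T_{1:i-1})^\dag$ (rank $\leq i-1$), the output is $\widehat{X}^0 = P_i(Y)$ with $Y = X^{prev} - \tfrac{3}{4}\aff^T\aff(X^{prev} - M)$. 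I would view $M_i$ as the ``true'' rank-$i$ target and treat $\aff(N)$ as adversarial but bounded measurement noise.

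First I would estimate the starting error relative to $M_i$: using the induction hypothesis $\|X^{prev} - M\|_F \leq 4\sqrt{k}\so_i$ and $\|N\|_F \leq \sqrt{k}\so_i$, the triangle inequality yields $\|X^{prev} - M_i\|_F \leq 5\sqrt{k}\so_i$. Next, the SVP-with-noise step would be analyzed via the standard route used in the proof of Theorem~\ref{thm:sensing_err}: expand $Y - M_i = (I - \tfrac{3}{4}\aff^T\aff)(X^{prev} - M_i) + \tfrac{3}{4}\aff^T\aff(N)$, then use the optimality $\|Y - \widehat{X}^0\|_F \leq \|Y - M_i\|_F$ to conclude $\|\widehat{X}^0 - M_i\|_F^2 \leq 2\langle Y - M_i,\widehat{X}^0 - M_i\rangle$. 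Since $\widehat{X}^0 - M_i$ has rank at most $2i\leq 2k$, the inner product against $(I-\tfrac{3}{4}\aff^T\aff)(X^{prev}-M_i)$ and $\aff^T\aff(N)$ can be controlled through $2k$-RIP in essentially the same polarization argument as in Lemma~\ref{lem:jmd}; the $\tfrac{3}{4}$ step-size is chosen precisely so that the ``contracting'' coefficients work out when $\delta_{2k}$ is of order $1/k^2$.

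Plugging $\delta_{2k} \leq 1/(3200k^2)$ into the resulting estimate, the contribution of the rank-$i$ piece $\|X^{prev}-M_i\|_F$ is driven below $(\so_i)^2/200$, and the $\aff^T\aff(N)$ piece contributes at most $\|N\|_F^2 + (\so_i)^2/200$. To pass from $\|\widehat{X}^0 - M_i\|_F$ to the target $\|\widehat{X}^0 - M\|_F$ without losing a factor of $2$ through the crude triangle inequality, I would instead start directly from $\|Y - \widehat{X}^0\|_F^2 \leq \|Y - M_i\|_F^2$ and expand around $M$ (not $M_i$): a short calculation shows
$
\|\widehat{X}^0 - M\|_F^2 \leq \|N\|_F^2 + 2\langle Y - M,\, \widehat{X}^0 - M_i\rangle.
$
Since $Y - M = (I - \tfrac{3}{4}\aff^T\aff)(X^{prev}-M)$ and $\widehat{X}^0 - M_i$ is rank $\leq 2k$, the inner product can again be bounded via $2k$-RIP and yields exactly the $\frac{1}{100}(\so_i)^2$ slack claimed in the lemma.

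The main obstacle is the clean quantitative bookkeeping in Step~2: specifically, controlling $\langle (I - \tfrac{3}{4}\aff^T\aff)\Delta,\, W\rangle$ when the ranks of $\Delta = X^{prev} - M$ and $W = \widehat{X}^0 - M_i$ sum to more than $2k$, which is the constraint needed to invoke RIP-based polarization directly. The standard workaround is to split $\Delta$ into its rank-$i$ truncation $\Delta_i = X^{prev} - M_i$ (where the rank sum stays inside $4k$ and the bound is absorbed by the $k$-dependence of $\delta_{2k}$) and its complementary $N$-piece (handled separately as the structured ``measurement noise''); the tiny RIP constant $1/(3200k^2)$ is tight enough to absorb the $k^2$ blowups from this decomposition while still leaving room for the $\frac{1}{100}(\so_i)^2$ budget.
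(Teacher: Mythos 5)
Your plan is conceptually aligned with the paper's: both view step~\ref{stp:sam_svp} as one iteration of noisy SVP, with $M_i \eqdef U^*_{1:i}\Sigma^*_{1:i}(V^*_{1:i})^\dag$ as the rank-$i$ target and $e=\aff(N)$, $N = M-M_i$, as structured measurement noise. The algebra you carry out — expanding $\|Y-\widehat X^0\|_F^2 \le \|Y-M_i\|_F^2$ around $M$ to get $\|\widehat X^0 - M\|_F^2 \le \|N\|_F^2 + 2\langle Y-M, \widehat X^0 - M_i\rangle$ — is correct, and the triangle-inequality bound $\|X^{\mathrm{prev}}-M_i\|_F \le 5\sqrt{k}\so_i$ is fine.

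However, where the paper simply invokes Lemma~\ref{lem:jmd} (Lemma~2.1 of \cite{JainMD10}) as a black box and then uses RIP to pass to Frobenius norms, you attempt to re-derive that lemma's content, and the cross-term bound has a real, unresolved gap. Writing $\langle Y-M, W\rangle = \langle \Delta, W\rangle - \tfrac34\langle \aff(\Delta),\aff(W)\rangle$ with $\Delta = X^{\mathrm{prev}}-M$, $W = \widehat X^0 - M_i$, you need RIP-based polarization (Lemma~\ref{lem:alt_rip}) for $\langle\aff(\Delta),\aff(W)\rangle$, but $\mathrm{rank}(\Delta)+\mathrm{rank}(W)$ can reach about $4k$, whereas only $2k$-RIP is assumed. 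Your proposed fix — split $\Delta$ into $\Delta_i = X^{\mathrm{prev}}-M_i$ plus $N$ — does not close the gap, since $\mathrm{rank}(\Delta_i)+\mathrm{rank}(W)$ can itself be $4i-1 > 2k$ for large $i$; and if you split further (e.g.\ into four inner products each pairing rank-$\le k$ pieces), the Lemma~\ref{lem:alt_rip} error terms involve $\|M\|_F$, $\|M_i\|_F$ etc.\ individually, which are not small relative to $\|\Delta\|_F\|W\|_F$, so the polarization bound degrades. You would need an explicit intermediate step such as $\delta_{4k}\le 4\delta_{2k}$ (via an orthogonal SVD split into two rank-$\le 2k$ pieces) before the claim "the bound is absorbed by the $k$-dependence of $\delta_{2k}$" becomes legitimate; as written this is asserted, not established.

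The paper sidesteps all of this: its proof is two lines, directly citing Lemma~\ref{lem:jmd} with $M_i$ playing the role of the lemma's rank-$k$ target and $e=\aff(N)$, which yields $\|M-\widehat X^0\|_F^2 \le \sum_{j>i}(\so_j)^2 + 2\delta_{2k}\|M-X^{\mathrm{prev}}\|_F^2$ (up to suppressed $(1\pm\delta_{2k})$ RIP-conversion factors), and then plugs in the inductive bound $\|M-X^{\mathrm{prev}}\|_F^2\le 16k(\so_i)^2$ and $\delta_{2k}\le 1/(3200k)$. I would recommend taking the same shortcut: cite Lemma~\ref{lem:jmd} rather than re-deriving the SVP step, which both shortens the proof and avoids the rank-bookkeeping trap.
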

\begin{lemma}\label{lemma:sam_altmin}
Let assumptions of Theorem~\ref{thm:sam_err} be satisfied. Also, let $\Uw^T_{1:i}$, $\Vw^T_{1:i}$ be the $T$-th step iterates of the $i$-th stage of Algorithm~\ref{algo:sensing-altmin-improved}. Then, assuming that $\|M-\Uw^0_{1:i}V^0_{1:i}\|_F^2\leq \sum_{j=i+1}^k (\so_j)^2 + \frac{1}{100} (\so_{i})^2$, we obtain:
\begin{align*}
  \left\|M - \widehat{U}^T_{1:i}(\widehat{V}_{1:i}^T)^{\dag}\right\|_F^2 \leq \max(\epsilon, 16k(\so_{i+1})^2),
\end{align*}
where $T=\Omega(\log(\|M\|_F/\epsilon))$. 
\end{lemma}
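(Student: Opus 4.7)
The plan is to treat the $i$-th stage as a noisy rank-$i$ matrix sensing problem and port the analysis of Section~\ref{sec:sensing_k} over to this setting. Writing $M = M_i + N$, where $M_i = U^*_{1:i}\Sigma^*_{1:i}(V^*_{1:i})^\dag$ and $N = U^*_{i+1:k}\Sigma^*_{i+1:k}(V^*_{i+1:k})^\dag$, the measurements become $b = \aff(M_i) + \aff(N)$ with $\|N\|_F^2 = \sum_{j>i}(\so_j)^2 \le k(\so_{i+1})^2$. The least-squares updates in Steps~\ref{stp:vt}--\ref{stp:ut} then exactly solve the same normal equations as in the noiseless rank-$i$ problem, except that the right-hand side picks up a noise contribution. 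Thus the analog of \eqref{eq:update_k} reads
\begin{align*}
\widehat{V}^{t+1}_{1:i} \;=\; V^*_{1:i}\Sigma^*_{1:i}(U^*_{1:i})^\dag U^{t}_{1:i} \;-\; F_{\mathrm{sig}} \;-\; F_{\mathrm{noise}},
\end{align*}
where $F_{\mathrm{sig}}$ is defined exactly as in \eqref{eq:F_k} (so it vanishes when $U^{t}_{1:i}=U^*_{1:i}$) and $F_{\mathrm{noise}}$ is a new term coming from $B^{-1}\sum_\ell A_\ell u^{t}_p A_\ell^\dag N$-type expressions.

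First I would adapt Lemma~\ref{lemma:sensing-twonorm-F} and Lemma~\ref{lemma:sensing-minsingval-Rt} to bound the signal part. Using $\delta_{2k} < 1/(3200k^2)$ together with the inductive assumption on the initial error, one gets $\dist(U^0_{1:i}, U^*_{1:i}) \le \tfrac{1}{2}$ (via the SVP step and Lemma~\ref{lemma:sam_svp_err}), and the same chain of inequalities as in the proof of Theorem~\ref{thm:sensing} yields a contraction factor of $\tfrac14$ on the signal piece of $\dist$. Next I would bound $\|F_{\mathrm{noise}}(\Sigma^*_{1:i})^{-1}\|_2$ by viewing each entry of $F_{\mathrm{noise}}$ as $\aff$ applied to the rank-$(i+k)$ matrix $U^{t}_{1:i}(U^*_{i+1:k}\Sigma^*_{i+1:k}(V^*_{i+1:k})^\dag)$ composed with appropriate projections; $2k$-RIP then gives a bound of order $\delta_{2k}\|N\|_F/(1-\delta_{2k})$, which scaled by $1/\so_i$ contributes at most $c\,\sqrt{k}\,(\so_{i+1}/\so_i)$ for some small $c$. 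Combining these two bounds yields the perturbed recursion
\begin{align*}
\dist\!\left(V^{t+1}_{1:i},\,V^*_{1:i}\right) \;\le\; \tfrac{1}{4}\dist\!\left(U^{t}_{1:i},\,U^*_{1:i}\right) \;+\; c_1\,\sqrt{k}\,\frac{\so_{i+1}}{\so_i},
\end{align*}
and the analogous bound with $U,V$ swapped.

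Unrolling the recursion for $T = \Omega(\log(\|M\|_F/\epsilon))$ iterations gives $\dist(V^{T}_{1:i}, V^*_{1:i})^2 \le \epsilon/(k(\so_1)^2) + O(k (\so_{i+1})^2/(\so_i)^2)$. To convert this to a Frobenius-norm error on the full matrix, I would decompose
\begin{align*}
\|M - \widehat{U}^T_{1:i}(\widehat{V}^T_{1:i})^\dag\|_F^2 \;\le\; 2\|M_i - \widehat{U}^T_{1:i}(\widehat{V}^T_{1:i})^\dag\|_F^2 \;+\; 2\|N\|_F^2,
\end{align*}
bound the first term via RIP (as in $\zeta_1$--$\zeta_4$ of the proof of Theorem~\ref{thm:sensing_err}) by $\tfrac{1+\delta_{2k}}{1-\delta_{2k}}\|M_i\|_F^2\,\dist^2(V^T_{1:i},V^*_{1:i})$, and bound $\|N\|_F^2 \le k(\so_{i+1})^2$. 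The two contributions give $O(\epsilon) + O(k(\so_{i+1})^2)$, which with appropriate constants is at most $\max(\epsilon, 16k(\so_{i+1})^2)$, closing the induction. At stage $k$ the noise term $N$ vanishes, so $(\so_{k+1})^2 = 0$ and the final error is $\le \epsilon$.

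The main obstacle will be the bound on $\|F_{\mathrm{noise}}(\Sigma^*_{1:i})^{-1}\|_2$: unlike the noiseless case, $N$ has rank up to $k$, so the relevant intermediate matrices live in rank-$2k$ subspaces and the argument requires $2k$-RIP applied to carefully chosen rank-$2k$ products of $U^t_{1:i}$ with the tail singular components of $M$. This is precisely why the required $\delta_{2k}$ in Theorem~\ref{thm:sam_err} scales as $1/k^2$ rather than $1/k$, and why the extra factor of $k$ appears in the final error. The rest of the argument is a routine adaptation of the noiseless rank-$k$ analysis plus a geometric-series bookkeeping step.
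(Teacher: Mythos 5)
Your plan captures the right high-level idea---treat stage $i$ as noisy rank-$i$ sensing, use the noisy update decomposition, and close by induction---but it has two concrete gaps, both of which the paper's actual proof of Lemma~\ref{lemma:sam_altmin} is specifically designed to work around.

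First, you run a single uniform recursion without a case split, but the recursion you derive,
$\dist(V^{t+1}_{1:i},V^*_{1:i}) \le \tfrac14\dist(U^t_{1:i},U^*_{1:i}) + c_1\sqrt{k}\,\so_{i+1}/\so_i$,
gives no information when the spectral gap $\so_i/\so_{i+1}$ is small: in that regime the additive term is $\Theta(\sqrt{k})\gg 1$, while $\dist$ is always trivially $\le 1$, so the recursion never ``contracts'' to anything useful. The paper handles this via an explicit case split: when $\so_i/\so_{i+1}<5\sqrt{k}$ it abandons the distance recursion entirely and just invokes monotonicity of the alternating-minimization objective (plus RIP) to bound the final Frobenius error by the initial one, which the inductive hypothesis already bounds by $16k(\so_{i+1})^2$.

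Second, and more fundamentally, you track the \emph{unweighted} principal-angle distance and then convert to Frobenius error via $\|M_i\|_F^2\,\dist^2(V^T_{1:i},V^*_{1:i})$. That conversion loses a factor of the within-stage condition number: $\|M_i\|_F^2\lesssim k(\so_1)^2$, while your terminal distance is $\Theta(\sqrt{k}\,\so_{i+1}/\so_i)$, so the product is $\Theta\bigl(k^2(\so_1)^2(\so_{i+1})^2/(\so_i)^2\bigr)$, which is $O(k(\so_{i+1})^2)$ only if $\so_1/\so_i=O(1/\sqrt{k})$---false in general. But avoiding exactly this $(\so_1/\so_i)^2$ blow-up is the whole point of the stagewise scheme, so you cannot afford to reintroduce it. The paper's Lemma~\ref{lemma:cond-num-bad-altmin-performance} avoids it by running the recursion directly on the \emph{weighted} quantity $\|V_\perp^\dag V^*_{1:i}\Sigma^*_{1:i}\|_F$ (and its $U$-analogue), which contracts to $O(\sqrt{k}\,\so_{i+1})$ and plugs into the Frobenius decomposition without ever multiplying by $\|M_i\|_F$. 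To make your argument correct you would need to (i) add the small-gap monotonicity case, and (ii) replace the $\dist$-based recursion by the $\Sigma^*_{1:i}$-weighted distance and bound the error matrices $F,G$ against this weighted quantity rather than against $\dist$ alone.
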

\section{Summary and Discussion}








Alternating minimization provides an empirically appealing and popular approach to solving several different low-rank matrix recovery problems. The main motivation, and result, of this paper was to provide the {\bf first theoretical guarantees on the global optimality of alternating minimization}, for matrix completion and the related problem of matrix sensing. We would like to note the following aspects of our results and proofs:
\begin{itemize}\setlength{\itemindent}{0pt}\setlength{\topsep}{-5pt}\setlength{\itemsep}{-3pt}
\item For both the problems, we show that alternating minimization recovers the true matrix under {\em similar problem conditions} (RIP, incoherence) to those used by existing algorithms (based on convex optimization or iterated SVDs); computationally, our results show faster convergence to the global optima, but with possibly higher statistical (i.e. sample) complexity.
\item We develop a new framework for analyzing alternating minimization for low-rank problems. Key observation of our framework is that for some problems (under standard problem conditions) alternating minimization can be viewed as a perturbed version of the power method. In our case, we can control the perturbation error based on the extent of RIP / incoherence demonstrated by the problem. This idea is likely to have applications to other similar problems where  trace-norm based convex relaxation techniques have rigorous theoretical results but  alternating minimization has enjoyed more empirical success. For example, robust PCA \cite{ChandrasekaranSPW11,CandesLMW11}, spectral clustering \cite{JalaliCSX11} etc. 
\item Our analysis also sheds light on two key aspects of the  alternating minimization approach:\\
{\bf Initialization}: Due to its connection  to power method, it is now easy to see  that for alternating minimization to succeed, the initial iterate should not be orthogonal to the target vector. Our results indeed show that alternating minimization  succeeds if the
initial iterate is not ``almost orthogonal'' to the target subspace. This suggests that, selecting initial iterate smartly is preferable to random initialization.\\ 
{\bf Dependence on the condition number}: Our results for the alternating minimization algorithm depend on the condition
number. However, using a stagewise adaptation of alternating minimization, we can remove this dependence
for the matrix sensing problem. This suggests that (problem specific) modifications of the basic alternating minimization
algorithm may in fact perform better than the original one, while (mostly) retaining the computational / implementational
simplicity of the underlying method.
\end{itemize}

\onecolumn
\appendix
\section{Preliminaries}
\begin{lemma}[Lemma 2.1 of \cite{JainMD10}]\label{lem:jmd}
  Let $b=\aff(M)+e$, where $e$ is a bounded error vector, $M$ is a rank-$k$ matrix and $\aff$ is a linear measurement operator that satisfies $2k$-RIP with constant $\delta_{2k}$ (assume $\delta_{2k}<1/3$). Let $X^{t+1}$ be the $t+1$-th step iterate of SVP, then the following holds:
$$\|\aff(X^{t+1})-b\|_2^2\leq \|\aff(M)-b\|_2^2 + 2\delta_{2k}\|\aff(X^t)-b\|_2^2.$$
\end{lemma}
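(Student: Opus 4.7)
The plan is to exploit the defining optimality property of the SVP update---namely that $X^{t+1}$ is the best rank-$k$ Frobenius approximation of the gradient-descent point $Y^{t+1} := X^t - \eta\,\aff^T r^t$, where $r^t := \aff(X^t) - b$ and $\eta$ is the SVP step size---and convert the resulting inequality into a statement about $\|r^{t+1}\|_2^2$ by means of the $2k$-RIP. Since both $X^{t+1}$ (rank $k$) and $M$ (rank $k$) are candidate rank-$k$ approximations of $Y^{t+1}$, optimality of $X^{t+1}$ gives
\begin{equation*}
\|X^{t+1} - Y^{t+1}\|_F^2 \;\leq\; \|M - Y^{t+1}\|_F^2.
\end{equation*}
Expanding both sides around $X^t$ and cancelling the common $\|\aff^T r^t\|_F^2$ term yields the ``three-point'' inequality
\begin{equation*}
\|X^{t+1} - X^t\|_F^2 + 2\eta\,\langle \aff^T r^t,\; X^{t+1} - X^t\rangle \;\leq\; \|M - X^t\|_F^2 + 2\eta\,\langle \aff^T r^t,\; M - X^t\rangle.
\end{equation*}

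Next I would translate this into a bound on $\|r^{t+1}\|_2^2$ using the exact quadratic identity
\begin{equation*}
\|r^{t+1}\|_2^2 \;=\; \|r^t\|_2^2 + 2\langle \aff^T r^t,\; X^{t+1} - X^t\rangle + \|\aff(X^{t+1} - X^t)\|_2^2.
\end{equation*}
Using the three-point inequality to replace the cross term, and rewriting $\langle \aff^T r^t, M - X^t\rangle = \langle r^t, \aff(M - X^t)\rangle = \langle r^t, \tilde e - r^t\rangle$ with $\tilde e := \aff(M) - b$, a short calculation reduces the mixed $r^t,\tilde e$ terms via the identity $-\|r^t\|_2^2 + 2\langle r^t,\tilde e\rangle = \|\tilde e\|_2^2 - \|\aff(X^t - M)\|_2^2$. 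Taking the natural step size ($\eta = 1$) this collapses to
\begin{equation*}
\|r^{t+1}\|_2^2 \;\leq\; \|\tilde e\|_2^2 \;+\; \bigl(\|M-X^t\|_F^2 - \|\aff(X^t - M)\|_2^2\bigr) \;+\; \bigl(\|\aff(X^{t+1}-X^t)\|_2^2 - \|X^{t+1}-X^t\|_F^2\bigr).
\end{equation*}

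Now the $2k$-RIP enters: both $X^t - M$ and $X^{t+1}-X^t$ have rank at most $2k$, so each of the two bracketed differences above is controlled by $\delta_{2k}$ times the corresponding squared Frobenius norm. This yields
\begin{equation*}
\|r^{t+1}\|_2^2 \;\leq\; \|\tilde e\|_2^2 + \delta_{2k}\bigl(\|M-X^t\|_F^2 + \|X^{t+1}-X^t\|_F^2\bigr).
\end{equation*}
The last step is to bound the parenthesized Frobenius quantity by $\tfrac{2}{1-\delta_{2k}}\|r^t\|_2^2$ (up to absorbed constants), which absorbs the factor $(1-\delta_{2k})^{-1}$ into the prefactor $2\delta_{2k}$ in the statement. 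I would do this by applying the lower $2k$-RIP bound $\|\aff(X^t-M)\|_2^2 \geq (1-\delta_{2k})\|X^t-M\|_F^2$ to the first summand, and controlling $\|X^{t+1}-X^t\|_F$ by the triangle inequality $\|X^{t+1}-X^t\|_F \leq \|X^{t+1}-Y^{t+1}\|_F + \|\aff^T r^t\|_F \leq \|M-Y^{t+1}\|_F + \|\aff^T r^t\|_F$ and then re-expressing everything as residuals.

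The main obstacle is this last consolidation step: the naive RIP bound $\|M-X^t\|_F^2 \leq (1-\delta_{2k})^{-1}\|\aff(X^t-M)\|_2^2 = (1-\delta_{2k})^{-1}\|r^t - \tilde e\|_2^2$ introduces an unwanted $\|\tilde e\|_2^2$ cross-term, and likewise $\|X^{t+1}-X^t\|_F$ does not a priori reduce to a pure function of $r^t$. A clean recovery of the exact constant $2\delta_{2k}$ in the lemma statement likely requires either (a) choosing the step size $\eta$ more carefully (e.g., $\eta = 1/(1+\delta_{2k})$ as in the original SVP paper, which balances the two RIP slacks above and produces the target prefactor directly), or (b) re-grouping the $\|\tilde e\|_2^2$ cross-terms so that they cancel against the leading $\|\tilde e\|_2^2$ on the right. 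I would first pursue (a), since with that step size the upper and lower RIP bounds enter symmetrically and the algebra closes; only if that fails to give the precise constant would I return to the $\eta=1$ derivation and absorb residual noise terms.
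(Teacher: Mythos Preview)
The paper does not prove this lemma at all: it is stated in the Preliminaries appendix purely as a citation of Lemma~2.1 from \cite{JainMD10}, with no accompanying argument. So there is no ``paper's own proof'' to compare your attempt against.

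That said, your outline is the standard SVP convergence argument and is essentially correct. The derivation through the displayed inequality
\[
\|r^{t+1}\|_2^2 \;\leq\; \|\tilde e\|_2^2 + \bigl(\tfrac{1}{\eta}\|M-X^t\|_F^2 - \|\aff(X^t-M)\|_2^2\bigr) + \bigl(\|\aff(X^{t+1}-X^t)\|_2^2 - \tfrac{1}{\eta}\|X^{t+1}-X^t\|_F^2\bigr)
\]
is clean, and your diagnosis of the obstacle is accurate. Your option~(a) is the right fix: with $\eta = 1/(1+\delta_{2k})$ the second bracket is nonpositive by upper RIP, and the first bracket is exactly $2\delta_{2k}\|M-X^t\|_F^2$. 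One then applies lower RIP, $\|M-X^t\|_F^2 \le (1-\delta_{2k})^{-1}\|\aff(X^t-M)\|_2^2 = (1-\delta_{2k})^{-1}\|r^t-\tilde e\|_2^2$, which in the noiseless case gives the factor $2\delta_{2k}/(1-\delta_{2k})$ rather than the bare $2\delta_{2k}$; the paper (and \cite{JainMD10}) treat this discrepancy loosely, absorbing the $(1-\delta_{2k})^{-1}$ into the constant since $\delta_{2k}<1/3$ is assumed throughout. Your option~(b) with $\eta=1$ will not recover the exact constant and is not worth pursuing.
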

In our analysis, we heavily use the following two results. The first result is the well-known  Bernstein's inequality.
\begin{lemma}\label{lemma:bernstein}[Bernstein's inequality]
  Let $X_1,X_2,\cdots,X_n$ be independent random variables. Also, let $\left|X_i\right| \leq L \in \R\; \forall \; i$ w.p. $1$. Then, we have the following inequality:{\small
\begin{align}\label{eqn:bernstein}
  \prob{\left|\sum_{i=1}^n X_i - \sum_{i=1}^n \expec{X_i}\right| > t} \leq 2\exp\left(\frac{-t^2/2}{\sum_{i=1}^n \Var\left(X_i\right) + Lt / 3}\right).
\end{align}}
\end{lemma}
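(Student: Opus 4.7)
I would follow the classical Chernoff--Cram\'er approach. Set $Y_i := X_i - \expec{X_i}$, so $S := \sum_i Y_i$ is mean zero, $|Y_i|\leq 2L$, and $V := \sum_i \Var(X_i)$. For any admissible $\lambda > 0$, Markov's inequality applied to $e^{\lambda S}$, combined with independence of the $Y_i$'s, yields
\[ \prob{S > t} \leq e^{-\lambda t}\, \prod_i \expec{e^{\lambda Y_i}}. \]
A symmetric bound on $\prob{-S > t}$ then contributes the factor $2$ in the final statement after a union bound.

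The next step is the core moment-generating-function estimate: for mean-zero $Y$ with $|Y|\leq c$ and $0 < \lambda < 3/c$,
\[ \expec{e^{\lambda Y}} \leq \exp\!\left(\frac{\lambda^2 \Var(Y)}{2(1 - \lambda c/3)}\right). \]
To derive this, expand $e^{\lambda y} = 1 + \lambda y + \sum_{k\geq 2} (\lambda y)^k/k!$ and use $k! \geq 2\cdot 3^{k-2}$ (easy induction) to bound the tail sum, pointwise in $y$ with $|y|\leq c$, by a geometric series with ratio $\lambda|y|/3 < 1$. This gives $e^{\lambda y} \leq 1 + \lambda y + \frac{(\lambda y)^2/2}{1 - \lambda c/3}$. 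Taking expectations, using $\expec{Y}=0$, and then applying $1+x \leq e^x$ produces the MGF bound.

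Combining the per-index MGF bounds yields
\[ \prob{S > t} \leq \exp\!\left(-\lambda t + \frac{\lambda^2 V}{2(1 - \lambda c/3)}\right), \]
where $c$ is the effective sup-norm bound on the centered variables (matching the form stated in the lemma after the standard absorption of the factor of $2$ into $L$ that is conventional in this setting). The final step is to optimize over $\lambda$: choosing $\lambda = t/(V + ct/3)$ places $\lambda$ in the admissible interval $(0, 3/c)$ and, after simplification, makes the exponent equal to $-t^2/\bigl(2(V + ct/3)\bigr)$. A two-sided union bound yields the claimed Bernstein inequality.

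The main obstacle is the pointwise Taylor bound that produces exactly the constant $1/3$ in the denominator: one must verify $k! \geq 2\cdot 3^{k-2}$ for $k\geq 2$ in order to sum the remainder as a geometric series, and then carefully track the effective bound $c$ through the optimization. Beyond this, the MGF bound, its multiplicativity under independence, and the single-variable calculus that produces the optimal $\lambda$ are all routine.
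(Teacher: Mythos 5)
The paper does not prove this lemma; it is cited as the classical Bernstein inequality in the preliminaries, so there is no internal proof to compare against. Your Chernoff--Cram\'er argument (MGF bound via the $k!\ge 2\cdot 3^{k-2}$ geometric-series estimate, multiplicativity over independent summands, optimization of $\lambda$, two-sided union bound) is the standard proof and is sound.

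One point you flag but wave off deserves to be made precise rather than absorbed ``conventionally.'' After centering, $|Y_i| = |X_i - \expec{X_i}| \le 2L$ under the hypothesis $|X_i|\le L$, so your argument, carried through honestly with $c = 2L$, yields
\begin{align*}
\prob{\left|\sum_i X_i - \sum_i\expec{X_i}\right|>t} \;\le\; 2\exp\!\left(\frac{-t^2/2}{\sum_i\Var(X_i)+2Lt/3}\right),
\end{align*}
which is weaker than the displayed inequality by a factor of $2$ in the linear term. The form with $Lt/3$ requires either that $L$ bound the \emph{centered} variables $|X_i-\expec{X_i}|$, or a one-sided hypothesis $X_i - \expec{X_i}\le L$ together with the Bennett-style MGF bound $\expec{e^{\lambda Y}}\le\exp\!\bigl(\sigma^2(e^{\lambda c}-1-\lambda c)/c^2\bigr)$, which uses monotonicity of $(e^y-1-y)/y^2$ and needs only an upper bound on $Y$, not $|Y|$. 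As written, the paper's statement is the common slightly-imprecise citation of Bernstein; your proof strategy is correct, but to match the stated constant you should either strengthen the hypothesis to $|X_i-\expec{X_i}|\le L$ or route through the Bennett MGF estimate and note that only the one-sided tail needs $c=L$ rather than $2L$.
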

The second result is a restatement of Theorem 3.1 from \cite{KeshavanOM2009}.
\begin{theorem}\label{thm:KOM-firststepSVD}(Restatement of Theorem 3.1 from \cite{KeshavanOM2009})
  Suppose $M$ is an incoherent rank-$k$ matrix and let $p,\Omega$ be as in Theorem~\ref{thm:sampling-altmin-main}. Further, let $M_k$ be the best rank-$k$ approximation  of $\frac{1}{p}P_{\Omega}\left(M\right)$. Then, w.h.p. we have:
\begin{align}\label{eqn:KOM-firststepSVD}
  \twonorm{M - M_k} \leq C \sqrt{\frac{k}{p\sqrt{mn}}} \frob{M}.
\end{align}
\end{theorem}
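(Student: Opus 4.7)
My plan is to reduce the claim to a concentration bound on the spectral norm of $M - \frac{1}{p}P_\Omega(M)$, which is a mean-zero random matrix with independent entries (under the Bernoulli sampling model). The first step uses only SVD optimality: since $M$ itself has rank at most $k$, the Eckart-Young theorem gives
\begin{equation*}
\twonorm{M_k - \tfrac{1}{p}P_\Omega(M)} \leq \twonorm{M - \tfrac{1}{p}P_\Omega(M)},
\end{equation*}
because $M_k$ is the best rank-$k$ spectral-norm approximation of $\frac{1}{p}P_\Omega(M)$. Combined with the triangle inequality this yields
\begin{equation*}
\twonorm{M - M_k} \leq 2\twonorm{M - \tfrac{1}{p}P_\Omega(M)},
\end{equation*}
so it suffices to control the right-hand side by $\tfrac{C}{2}\sqrt{k/(p\sqrt{mn})}\,\frob{M}$.

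Next I would set $E := M - \frac{1}{p}P_\Omega(M)$ and write $E = \sum_{(i,j)}\xi_{ij}\,e_i e_j^\dag$, where the $\xi_{ij}$ are independent, mean-zero, with $\ex[\xi_{ij}^2] = M_{ij}^2(1-p)/p$ and $|\xi_{ij}| \leq |M_{ij}|/p$. The incoherence of $M = U^*\Sigma^*(V^*)^\dag$ then gives the two key bounds: entrywise $|M_{ij}| \leq \mu^2 k\,\sigma_1^*/\sqrt{mn}$, and the row/column squared-norms satisfy $\sum_j M_{ij}^2 \leq \mu^2 k(\sigma_1^*)^2/m$ and $\sum_i M_{ij}^2 \leq \mu^2 k(\sigma_1^*)^2/n$. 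These translate directly into bounds on $\twonorm{\ex[EE^\dag]}$ and $\twonorm{\ex[E^\dag E]}$ of order $\mu^2 k(\sigma_1^*)^2/(pm)$ and $\mu^2 k(\sigma_1^*)^2/(pn)$ respectively.

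With these ingredients I would invoke the non-commutative matrix Bernstein inequality (the matrix analogue of Lemma~\ref{lemma:bernstein}) applied to the independent sum $E = \sum_{(i,j)} \xi_{ij} e_i e_j^\dag$. This yields, with high probability,
\begin{equation*}
\twonorm{E} \lesssim \sqrt{\frac{\mu^2 k(\sigma_1^*)^2 \log n}{p\,m}} + \frac{\mu^2 k\,\sigma_1^* \log n}{p\sqrt{mn}}.
\end{equation*}
Since $m \leq n$ we have $1/m \leq 1/\sqrt{mn}\cdot\sqrt{n/m}$, and using $(\sigma_1^*)^2 \leq \frob{M}^2$ one can rewrite the first (dominant) term as $O\!\bigl(\sqrt{k/(p\sqrt{mn})}\,\frob{M}\bigr)$ after absorbing $\mu^2$ and the $\sqrt{\log n}$ factor into the constant $C$ (the second term is lower order whenever $p$ is at least the sampling threshold assumed in Theorem~\ref{thm:sampling-altmin-main}).

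The main obstacle I anticipate is obtaining this bound without the row/column trimming step used in the original Keshavan-Oh-Montanari argument: plain matrix Bernstein is tight up to logarithmic factors for sparse random matrices, but a naive application would leave behind a spurious $\sqrt{\log n}$ which must be absorbed into the high-probability constant $C$, and one must verify that the upper-tail heavy contributions from rows/columns with many sampled entries don't spoil the variance estimate in the regime $p = \Omega(\mathrm{poly}\log n/m)$ used here. Since the theorem is explicitly cited as a restatement of \cite{KeshavanOM2009}, the cleanest way to finalize the argument is to invoke their trimmed-SVD bound directly; the derivation sketched above shows how the spectral norm bound on $E$ yields the stated spectral bound on $M - M_k$ after the simple Eckart-Young reduction.
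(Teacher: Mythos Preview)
The paper does not actually prove this theorem: it is stated as a restatement of Theorem~3.1 from \cite{KeshavanOM2009}, and the only accompanying content is the remark that the trimming operator $T_r$ in the original statement is unnecessary here because, for the sampling rate $p$ assumed in Theorem~\ref{thm:sampling-altmin-main}, a Chernoff bound guarantees that no row or column of $P_\Omega(M)$ has too many observed entries, so $T_r(P_\Omega(M))=P_\Omega(M)$ with high probability. Your concluding suggestion---invoke the KOM trimmed-SVD bound directly---is precisely what the paper does, and your Eckart--Young reduction $\twonorm{M-M_k}\leq 2\twonorm{M-\tfrac1p P_\Omega(M)}$ is the standard first step in deriving such bounds.

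Where your sketch goes beyond the paper is in attempting a self-contained matrix-Bernstein proof, and here there is a genuine quantitative gap you should be aware of. Matrix Bernstein controls $\twonorm{E}$ via $\sigma^2=\max\bigl(\twonorm{\ex[EE^\dag]},\twonorm{\ex[E^\dag E]}\bigr)$, and with your row/column variance estimates this maximum is of order $\mu^2 k(\sigma_1^*)^2/(pm)$ (since $m\leq n$), not $\mu^2 k(\sigma_1^*)^2/(p\sqrt{mn})$. Consequently the bound you obtain is larger than the stated one by a factor $(n/m)^{1/4}$, in addition to the $\sqrt{\log n}$ you already flagged. You cannot simply absorb $(n/m)^{1/4}$ into a global constant $C$; this is exactly the loss that the trimming step in \cite{KeshavanOM2009} (or the more refined combinatorial spectral bounds of Feige--Ofek / Friedman--Kahn--Szemer\'edi type used there) is designed to remove. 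For the downstream application in the paper (Lemma~\ref{lemma:first-step-svd}) the looser Bernstein bound would still suffice after adjusting the constant in the sampling requirement on $p$, but it does not recover the theorem exactly as stated.
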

\textbf{Remark}: Note that Theorem 3.1 from \cite{KeshavanOM2009} holds only for $T_r\left(P_{\Omega}(M)\right)$  where
$T_r\left(P_{\Omega}(M)\right)$ is a trimmed version of $P_{\Omega}(M)$ obtained by setting all rows and columns of $P_{\Omega}(M)$ with too many observed entries  to zero.
However, using standard Chernoff bound we can argue that for our choice of $p$, none of the rows and columns of $P_{\Omega}(M)$ have too many observed entries and hence $T_r\left(P_{\Omega}(M)\right)=P_{\Omega}(M)$, whp.
\section{Matrix Sensing: Proofs}\label{app:sensing}
The following is an alternate characterization of RIP that we use heavily in our proofs.
At a conceptual level, it says that if $\aff$ satisfies RIP, then it also preserves inner-product between any two rank-$k$ matrices (upto some additive error).

\begin{lemma}\label{lem:alt_rip}
Suppose $\calA(\cdot)$ satisfies $2k$-RIP with constant $\delta_{2k}$. Then, for any $U_1,U_2\in \R^{m\times k}$ and
$V_1,V_2\in \R^{n\times k}$, we have the following:
\begin{align}\label{eqn:sensing-2k-RIP2}
  \left| \left\langle \calA\left(U_1V_1^\dag \right), \calA\left(U_2V_2^\dag \right)\right\rangle  - \Tr \left(U_2^\dag U_1V_1^\dag V_2\right) \right|
    \leq 3\delta_{2k} \left\|U_1V_1^\dag \right\|_F \left\|U_2V_2^\dag \right\|_F
\end{align}
\end{lemma}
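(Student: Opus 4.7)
The natural route is via the polarization identity. Set $X = U_1V_1^\dag$ and $Y = U_2V_2^\dag$; these have rank at most $k$ each, so $X+Y$ and $X-Y$ both have rank at most $2k$. A short computation identifies the Frobenius inner product with the trace on the right hand side: $\Tr(U_2^\dag U_1 V_1^\dag V_2) = \Tr((U_2V_2^\dag)^\dag U_1 V_1^\dag) = \langle X, Y \rangle_F$. Hence the quantity to bound is $|\langle \calA(X),\calA(Y)\rangle - \langle X, Y\rangle_F|$, which by polarization equals
\begin{equation*}
\tfrac{1}{4}\bigl|\,\|\calA(X+Y)\|_2^2 - \|X+Y\|_F^2 \;-\; (\|\calA(X-Y)\|_2^2 - \|X-Y\|_F^2)\,\bigr|.
\end{equation*}

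Next I would apply the hypothesis that $\calA$ satisfies $2k$-RIP separately to the rank-$2k$ matrices $X+Y$ and $X-Y$. Each of the two parenthesized differences is, in absolute value, at most $\delta_{2k}\|X\pm Y\|_F^2$. A triangle inequality then gives
\begin{equation*}
|\langle \calA(X),\calA(Y)\rangle - \langle X, Y\rangle_F| \;\leq\; \tfrac{\delta_{2k}}{4}\left(\|X+Y\|_F^2 + \|X-Y\|_F^2\right) \;=\; \tfrac{\delta_{2k}}{2}\left(\|X\|_F^2 + \|Y\|_F^2\right),
\end{equation*}
using the parallelogram law in the last step. This already proves the result up to a constant, but the right hand side is in sum-of-squares form while the lemma asks for the product $\|X\|_F\|Y\|_F$.

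To convert to the product form I would use a homogeneity/scaling argument: the inequality is invariant under $X \mapsto \alpha X$, $Y \mapsto \beta Y$ provided we rescale both sides accordingly, so we may assume without loss of generality that $\|X\|_F = \|Y\|_F$ (or equivalently pick the optimal $\alpha,\beta$ with $\alpha\beta=1$ and $\alpha^2\|X\|_F^2 = \beta^2\|Y\|_F^2$). Then $\|X\|_F^2 + \|Y\|_F^2 = 2\|X\|_F\|Y\|_F$, yielding a bound of $\delta_{2k}\|X\|_F\|Y\|_F$; the slightly looser constant $3\delta_{2k}$ in the statement provides comfortable slack if one prefers to skip the optimal scaling and bound $\tfrac{1}{2}(\|X\|_F^2+\|Y\|_F^2)$ crudely through $\|X+Y\|_F \leq \|X\|_F+\|Y\|_F$.

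There is no real obstacle here — the only substantive inputs are the polarization identity, the rank bound $\mathrm{rank}(X \pm Y) \leq 2k$, and the RIP hypothesis itself. The mild point to be careful about is the identification of the Frobenius inner product with the stated trace (a cyclic permutation), and keeping track that the rank of a sum of two rank-$k$ matrices can indeed be $2k$ so that $2k$-RIP (rather than $k$-RIP) is what is needed.
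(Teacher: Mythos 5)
Your proof is correct, and it takes a cleaner route than the paper. The paper applies the RIP upper bound to the single rank-$2k$ matrix $X_1+X_2$, expands $\|\calA(X_1+X_2)\|_2^2$, uses the RIP \emph{lower} bound on each of $X_1$ and $X_2$ individually, rearranges to isolate the cross term, and then invokes the same $\lambda$-scaling trick you use; a separate, symmetric argument handles the other side of the absolute value. Your polarization version instead applies RIP to the two rank-$2k$ matrices $X+Y$ and $X-Y$, and the parallelogram law immediately collapses the bound to $\frac{\delta_{2k}}{2}(\|X\|_F^2 + \|Y\|_F^2)$, giving both signs of the inequality at once. With the scaling normalization $\|X\|_F = \|Y\|_F$, you get the sharper constant $\delta_{2k}$ rather than $3\delta_{2k}$. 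One small nit: the closing remark about ``skipping the optimal scaling'' and instead using $\|X+Y\|_F \le \|X\|_F + \|Y\|_F$ does not actually rescue the product form, since $\|X\|_F^2 + \|Y\|_F^2$ cannot be dominated by any constant times $\|X\|_F\|Y\|_F$ when the two norms are badly unbalanced — the scaling/homogeneity step is essential, not optional. But since you do carry out the scaling argument correctly, the proof as a whole is sound.
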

\begin{proof}
  Consider the matrices $X_1\eqdef U_1V_1^T$, $X_2\eqdef U_2V_2^T$ and $X=X_1+X_2$.
Since the rank of $X$ is at most $2k$, we obtain the following using the RIP of $\calA$:
\begin{align*}
  (1-\delta) \left\|U_1V_1^T + U_2V_2^T\right\|_F^2 \leq \left\|\calA(X)\right\|_2^2
	\leq (1+\delta) \left\|U_1V_1^T+U_2V_2^T\right\|_F^2.
\end{align*}
Concentrating on the second inequality, we obtain
\begin{align}
  &\sum_i \left(\Tr\left(A_i U_1V_1^T\right)+\Tr\left(A_i U_2V_2^T\right)\right)^2
	\leq (1+\delta) \left(\left\|U_1V_1^T\right\|_F^2 + \left\|U_2V_2^T\right\|_F^2 +
		\Tr\left(U_1V_1^TV_2U_2^T\right)\right) \nonumber \\
  \stackrel{(\zeta_1)}{\Rightarrow} &\sum_i \Tr\left(A_i U_1V_1^T\right)\Tr\left(A_i U_2V_2^T\right) - \Tr\left(U_1V_1^TV_2U_2^T\right)
	\leq \delta\left(\left\|U_1V_1^T\right\|_F^2 + \left\|U_2V_2^T\right\|_F^2 +
		\Tr\left(U_1V_1^TV_2U_2^T\right)\right) \nonumber \\
  \stackrel{(\zeta_2)}{\Rightarrow} &\sum_i \Tr\left(A_i U_1V_1^T\right)\Tr\left(A_i U_2V_2^T\right) - \Tr\left(U_1V_1^TV_2U_2^T\right)
	\leq \delta\left(\left\|U_1V_1^T\right\|_F^2 + \left\|U_2V_2^T\right\|_F^2 +
		\left\|U_1V_1^T\right\|_F \left\|U_2V_2^T\right\|_F \right) \label{eqn:sensing-RIP-step1}
\end{align}
where $(\zeta_1)$ follows from the fact that $X_1$ and $X_2$ are rank-$k$ matrices and hence $\calA(\cdot)$ satisfies RIP w.r.t.
those matrices and $(\zeta_2)$ follows from the fact that $\Tr\left(U_1V_1^TV_2U_2^T\right) \leq 
\left\|U_1V_1^T\right\|_F \left\|U_2V_2^T\right\|_F$. Note that if we replace $U_1V_1^T$ by $\lambda U_1V_1^T$
and $U_2V_2^T$ by $\frac{1}{\lambda} U_2V_2^T$
in \eqref{eqn:sensing-RIP-step1} for some non-zero $\lambda \in \R$, the LHS of \eqref{eqn:sensing-RIP-step1} does not
change where as the RHS of \eqref{eqn:sensing-RIP-step1} changes. Optimizing the RHS w.r.t. $\lambda$, we obtain
\begin{align*}
  \sum_i \Tr\left(A_i U_1V_1^T\right)\Tr\left(A_i U_2V_2^T\right) - \Tr\left(U_2^TU_1V_1^TV_2\right)
	\leq 3\delta \left\|U_1V_1^T\right\|_F \left\|U_2V_2^T\right\|_F.
\end{align*}
A similar argument proves the other side of the inequality. This proves the lemma.
\end{proof}

\begin{proof}[Proof of Lemma~\ref{lemma:sensing-updateeqn1}]
We first show that the update \eqref{eq:update_k} reduces to:
\begin{align}\label{eqn:sensing-updateeqn}
  \sum_{q=1}^k \left(\sum_{i=1}^s A_i u_p^{(t)} u_q^{(t)^\dag } A_i^\dag \right)\widehat{v}_q^{(t+1)}
	= \sum_{q=1}^k \left(\sum_{i=1}^s A_i u_p^{(t)} u_q^{*^\dag } A_i^\dag \right) v_q^* \;\;\;\;\; \forall \; p\in[k].
\end{align}
  Let $Err(V)\eqdef \sum_i \left(\Tr\left(A_i M\right) - \Tr\left(A_i U^{(t)}V^\dag \right)\right)^2$. Since $\widehat{V}^{(t+1)}$
minimizes $E(V)$, we have $\nabla_{V} E(\widehat{V}^{(t+1)})=0$.
\begin{align*}
  &\nabla_{v_p} Err(\widehat{V}^{(t+1)})=0 \\
  \Rightarrow &\sum_{i=1}^s \left(\sum_{l=1}^k {v_q^{(t)}}^\dag  A_i u_q^{(t)} - \sum_{l=1}^k \sigma_q^*{v_q^*}^\dag  A_i u_q^* \right)
	A_i u_p =0 \\
  \Rightarrow &\sum_{l=1}^k \sum_{i=1}^s A_i u_p \left({v_q^{(t+1)}}^\dag  A_i u_q^{(t)}\right)
	= \sum_{l=1}^k \sum_{i=1}^s A_i u_p \left(\sigma_q^*{v_q^*}^\dag  A_i u_q^*\right) \\
  \Rightarrow &\sum_{l=1}^k \sum_{i=1}^s A_i u_p \left({u_q^{(t)}}^\dag  A_i^\dag  v_q^{(t+1)}\right)
	= \sum_{l=1}^k \sum_{i=1}^s A_i u_p \left({u_q^*}^\dag  A_i^\dag  \sigma_q^*v_q^*\right) \\
  \Rightarrow &\sum_{l=1}^k \left(\sum_{i=1}^s A_i u_p {u_q^{(t)}}^\dag  A_i^\dag \right) v_q^{(t+1)}
	= \sum_{l=1}^k \left(\sum_{i=1}^s A_i u_p {u_q^*}^\dag  A_i^\dag \right) \sigma_q^*v_q^*
\end{align*}
Define $$S=\left[\begin{matrix}\sigma_1^*I_{n}&\dots&0_{n}\\\vdots&\vdots&\vdots\\0_n&\dots&\sigma_k^*I_n\end{matrix}\right],\ \ v^*=\left[\begin{matrix}v_1^*\\\vdots\\v_k^*\end{matrix}\right],\mbox{ and }\ \  \widehat{v}_1^{(t+1)}=\left[\begin{matrix}\widehat{v}_1^{(t+1)}\\\vdots\\\widehat{v}_k^{(t+1)}\end{matrix}\right].$$ 
Then, 
\begin{align*}
   \widehat{v}_1^{(t+1)}&= B^{-1}CSv^* \\
	&= DSv^* - B^{-1}\left(BD-C\right)Sv^*
\end{align*}
where inverting $B$ is valid since the minimum singular value of $B$ is strictly positive (please refer Lemma
\ref{lemma:sensing-minsingval-B}). Considering the $p^{\textrm{th}}$ block of $\widehat{v}^{(t)}$, we obtain
\begin{align*}
  \widehat{v}_p^{(t+1)} &= \left(\sum_q \iprod{u_p^{(t)}}{u_q^*} \sigma_q^*v_q^*\right) - \left(B^{-1}\left(BD-C\right)Sv^*\right)_p \\
	&= \left(\sum_q \sigma_q^*v_q^* {u_q^*}^\dag  \right)u_p^{(t)} - \left(B^{-1}\left(BD-C\right)Sv^*\right)_p.
\end{align*}
This gives us the following equation for $\widehat{V}^{(t)}$:
\begin{align*}
  \widehat{V}^{(t+1)} = V^*\Sigma^*{U^*}^\dag  U^{(t)} - F
\end{align*}
where $F =\left[ \begin{array}{cccc}
\left(B^{-1}\left(BD-C\right)Sv^*\right)_1 & \left(B^{-1}\left(BD-C\right)Sv^*\right)_2
& \cdots & \left(B^{-1}\left(BD-C\right)Sv^*\right)_k
\end{array} \right].$
 
Hence Proved.
\end{proof}

\subsection{Rank-$1$ Matrix Sensing: Proofs}\label{app:sensing1}
\begin{proof}[Proof of Lemma~\ref{lem:error_1}]
Using definition of the spectral norm: 
\begin{equation}\label{eq:error1_l2}\|B^{-1}\left(\ip{\uo}{\ut}B-C\right)\vo\|\leq \|B^{-1}\|_2\cdot \|\ip{\uo}{\ut}B-C\|_2\cdot \|\vo\|_2.\end{equation}
Consider $B=\sum_i A_i \ut \utt \Ait$. Now, smallest eigenvalue of $B$, i.e., $\lambda_{min}(B)$ is given by: 
\begin{align}
\lambda_{min}(B)&=\min_{\|z\|=1}z^\dag B z=\min_{\|z\|=1} \sum_i z^\dag A_i \ut \utt \Ait z=\min_{\|z\|=1} \sum_i \Tr(A_i \ut z^\dag) \Tr(A_i \ut z^\dag),\nonumber\\
&=\min_{\|z\|=1} \ip{\aff(\ut z^\dag)}{\aff(\ut z^\dag)}\geq 1-3\delta_{2},\label{eq:bmin}
\end{align}
where the last inequality follows using Lemma~\ref{lem:alt_rip}. 
Using \eqref{eq:bmin}, \begin{equation}\|B^{-1}\|_2\leq \frac{1}{1-3\delta_2}.\label{eq:bimax}\end{equation}
\noindent Now, consider $G=\ip{\uo}{\ut}B-C=\sum_i A_i \left(\ip{\uo}{\ut}\ut\utt-\ut\uot\right)\Ait=\sum_i A_i \ut\left(\ip{\uo}{\ut}\ut-\uo\right)^\dag\Ait$. Using definition of the spectral norm: 
\begin{align}
  \|G\|_2&=\max_{\|z\|=1, \|y\|=1} z^\dag G y, \nonumber\\
&=\max_{\|z\|=1, \|y\|=1} \sum_i z^\dag A_i \ut \left(\ip{\uo}{\ut}\ut-\uo\right)^\dag\Ait y, \nonumber\\
&=\max_{\|z\|=1, \|y\|=1} \ip{\aff(\ut z^\dag)}{\aff\left( \left(\ip{\uo}{\ut}\ut-\uo\right) y^\dag\right)}, \nonumber\\
&\leq 3\delta_2\sqrt{1-\ip{\ut}{\uo}^2},\label{eq:gmax}
\end{align}
where the last inequality follows by using Lemma~\ref{lem:alt_rip} and the fact that $\ip{\ut}{\left(\ip{\uo}{\ut}\ut-\uo\right)}=0$. 

Lemma now follows using \eqref{eq:error1_l2}, \eqref{eq:bimax}, \eqref{eq:gmax}. 
\end{proof}

\subsection{Rank-$k$ Matrix Sensing}\label{app:sensingk}
\begin{proof}[Proof of Lemma \ref{lemma:QR-irrelevance}]
Since $\Uht$ and $\widetilde{U}^t$ have full rank and span the same subspace, there exists a $k\times k$, full rank matrix $R$ such that $\Uht = \widetilde{U}^t R = U^t R_U^t R$.
  We have:
  \begin{align*}
    \twonorm{\aff\left(\Uht V^{\dag}\right)-b} &= \twonorm{\aff\left(U^t \left(V\left(R_U^tR\right)^{\dag}\right)^{\dag}\right)-b}
    \geq \twonorm{\aff\left(U^t \left(\widetilde{V}^{t+1}\right)^{\dag}\right)-b}
  \end{align*}
with equality holding in the last step for $V=\widetilde{V}^{t+1}\left(\left(R_U^t R\right)^{\dag}\right)^{-1}$.
The proof of Theorem \ref{thm:sensing} shows that $\widetilde{V}^{t+1}$ is unique and has full rank (since $\dist\left(\widetilde{V}^{t+1},V^*\right)<1$).
This means that $\Vht$ is also unique and is equal to $\widetilde{V}^{t+1}\left(\left(R_U^t R\right)^{\dag}\right)^{-1}$.
This shows that $Span\left(\Vht\right) = Span\left(\widetilde{V}^{t+1}\right)$ and that both $\Vht$ and $\widetilde{V}^{t+1}$ have full rank.
\end{proof}

\begin{lemma}\label{lemma:sensing-minsingval-B}
Let linear measurement $\aff$ satisfy RIP for all $2k$-rank matrices and let $b=\aff(M)$ with $M\in \rmn$ being a rank-$k$ matrix.  Let $\delta_{2k}$ be the RIP constant for rank $2k$-matrices. Then, we have the following bound on the minimum singular value of $B$:
\begin{align}\label{eqn:sensing-minsingval-B}
\sigma_{\textrm{min}}(B) \geq 1-\delta_{2k}.
\end{align}
\end{lemma}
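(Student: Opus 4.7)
The plan is to express the quadratic form $z^\dag B z$ for an arbitrary unit vector $z \in \mathbb{R}^{nk}$ as the squared norm of $\aff$ applied to a rank-at-most-$k$ matrix, and then invoke RIP. Since $B$ is symmetric positive semidefinite (it is a sum of PSD matrices of the form $A_i U^t(U^t)^\dag A_i^\dag$ interpreted blockwise), its minimum singular value equals $\min_{\|z\|_2=1} z^\dag B z$, so it suffices to lower bound this quadratic form.

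First, I would partition $z = [z_1^\dag \; z_2^\dag \; \cdots \; z_k^\dag]^\dag$ with $z_q \in \mathbb{R}^n$ and assemble these into the matrix $Z \eqdef [z_1 \; z_2 \; \cdots \; z_k] \in \mathbb{R}^{n \times k}$, so that $\|Z\|_F^2 = \|z\|_2^2 = 1$. Using the definition of $B_{pq}$ and the fact that $z_p^\dag A_i u_p^t$ is a scalar equal to $\Tr(A_i u_p^t z_p^\dag)$, a direct calculation gives
\begin{align*}
z^\dag B z
&= \sum_{p,q=1}^k z_p^\dag \Bigl(\sum_i A_i u_p^t (u_q^t)^\dag A_i^\dag\Bigr) z_q \\
&= \sum_i \Bigl(\sum_p z_p^\dag A_i u_p^t\Bigr)\Bigl(\sum_q (u_q^t)^\dag A_i^\dag z_q\Bigr) \\
&= \sum_i \Tr\!\bigl(A_i\, U^t Z^\dag\bigr)^2 = \bigl\|\aff(U^t Z^\dag)\bigr\|_2^2.
\end{align*}

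Next, I would observe that $U^t Z^\dag \in \mathbb{R}^{m \times n}$ has rank at most $k$, so $2k$-RIP (which implies $k$-RIP with the same or smaller constant) yields $\|\aff(U^t Z^\dag)\|_2^2 \geq (1-\delta_{2k})\|U^t Z^\dag\|_F^2$. Because $U^t$ has orthonormal columns (from the QR step in the modified iteration used for analysis), $\|U^t Z^\dag\|_F^2 = \Tr(Z (U^t)^\dag U^t Z^\dag) = \Tr(Z Z^\dag) = \|Z\|_F^2 = 1$. Combining these gives $z^\dag B z \geq 1-\delta_{2k}$, and taking the minimum over unit $z$ finishes the proof.

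There is no real obstacle here; the only subtlety is making sure to (a) rewrite the quadratic form as $\|\aff(U^t Z^\dag)\|_2^2$ via the block structure of $B$, and (b) use the orthonormality of $U^t$ (justified by the QR-based iteration introduced for the analysis, as in Lemma~\ref{lemma:QR-irrelevance}) to cleanly conclude $\|U^t Z^\dag\|_F = \|Z\|_F = \|z\|_2$. Everything else is a direct invocation of the RIP definition.
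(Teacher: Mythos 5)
Your proof is correct and follows essentially the same route as the paper: you vectorize $z$ into a matrix $Z$, rewrite $z^\dag B z = \sum_i \Tr(A_i U^t Z^\dag)^2 = \|\aff(U^t Z^\dag)\|_2^2$, and apply RIP together with orthonormality of $U^t$ to conclude. The only difference is that you make explicit (the PSD-ness of $B$ and the use of $U^t$'s orthonormality) a couple of steps the paper leaves implicit.
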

\begin{proof}
  Select any $w\in \R^{nk}$ such that $\twonorm{w}=1$. Let
\begin{align*}
w = \left[ \begin{array}{c}
            w_1 \\
	    w_2 \\
	    \vdots \\
	    w_k
           \end{array}
    \right]
\end{align*}
where each $w_p\in \rn$. Also denote $W \eqdef \left[w_1 w_2 \cdots w_k\right] \in \R^{n\times k}$, i.e., $w=vec(W)$. 

We have,
\begin{align*}
  w^\dag  B w &= \sum_{p,q=1}^k w_p^\dag  B_{pq} w_q
	= \sum_{p,q=1}^k w_p^\dag  \left(\sum_{i=1}^d A_i u_p^t (u_q^t)^\dag  A_i^\dag  \right)w_q 
	= \sum_{i=1}^d \sum_{p,q=1}^k w_p^\dag  A_i u_p^t (u_q^t)^\dag  A_i^\dag  w_q \\
	&= \sum_{i=1}^d \left(\sum_{p=1}^k w_p^\dag  A_i u_p^t\right) \left(\sum_{q=1}^k w_q^\dag  A_i u_q^t\right)=\sum_{i=1}^d \Tr\left(A_iU^tW^\dag \right)^2.
\end{align*}
Now, using RIP (see Definition~\ref{defn:rip}) along with the above equation, we get:
\begin{align*}
	w^\dag  B w&=\sum_{i=1}^d \Tr\left(A_iU^tW^\dag \right)^2\geq \left(1-\delta\right) \frob{U^tW^\dag }^2=\left(1-\delta_{2k}\right) \|W\|_F^2=(1-\delta_{2k})\|w\|^2=(1-\delta_{2k}). 
\end{align*}
Since $w$ was arbitrary, this proves the lemma.
\end{proof}

\begin{proof}[Proof of Lemma~\ref{lemma:sensing-twonorm-F}]
Note that,
\begin{align}
  \twonorm{F(\Sigma^*)^{-1}} \leq \frob{F(\Sigma^*)^{-1}} &= \twonorm{B^{-1} \left(BD-C\right)v^*} \nonumber\\
	&\leq \twonorm{B^{-1}}\twonorm{(BD-C)} \twonorm{v^*} \nonumber\\
	&\leq \frac{\sqrt{k}}{1-\delta_{2k}}\twonorm{(BD-C)} \label{eqn:sensing-twonorm-F-1}
\end{align}
where the last step follows from Lemma~\ref{lemma:sensing-minsingval-B}. 
Now we need to bound $\twonorm{(BD-C)}$. Choose any $w,z \in \R^{nk}$ such that $\twonorm{w}=\twonorm{z}=1$.
As in Lemma \ref{lemma:sensing-minsingval-B}, define the following components of $w$ and $z$:
\begin{align*}
w = \left[ \begin{array}{c}
            w_1 \\
	    w_2 \\
	    \vdots \\
	    w_k
           \end{array}
    \right] \mbox{ and }
z = \left[ \begin{array}{c}
            z_1 \\
	    z_2 \\
	    \vdots \\
	    z_k
           \end{array}
    \right]
\end{align*}
where each $w_p,z_p\in \R^n$ and $W \eqdef \left[w_1 w_2 \cdots w_k\right]$ and $Z \eqdef \left[z_1 z_2
\cdots z_k\right] \in \R^{n\times k}$. We have,
\begin{align*}
  w^\dag \left(BD-C\right)z = \sum_{p,q=1}^k w_p^\dag  \left(BD-C\right)_{pq} z_q
\end{align*}
We calculate $\left(BD-C\right)_{pq}$ as follows:
\begin{align*}
  \left(BD-C\right)_{pq} &= \sum_{l=1}^k B_{pl} D_{lq} - C_{pq} = \left(\sum_{l=1}^k B_{pl} \iprod{u_l^t}{u_q^*}\mathbb{I}_{n\times n}\right) - C_{pq} 
	= \left(\sum_{l=1}^k {u_q^*}^\dag u_l^t \sum_{i=1}^d A_iu_p^t(u_l^t)^\dag A_i^\dag \right) - C_{pq} \\
	&= \sum_{i=1}^d A_iu_p^t {u_q^*}^\dag  \sum_{l=1}^k u_l^t(u_l^t)^\dag  A_i^\dag  -  \sum_{i=1}^d A_iu_p^t(u_q^*)^\dag A_i^\dag  
	= \sum_{i=1}^d A_iu_p^t {u_q^*}^\dag  \left(U^t(U^t)^\dag -\mathbb{I}_{n\times n}\right) A_i^\dag.
\end{align*}
So we have,
\begin{align*}
  w^\dag \left(BD-C\right)z &= \sum_{p,q=1}^k
    w_p^\dag  \sum_{i=1}^d A_iu_p^t {u_q^*}^\dag  \left(U^t(U^t)^\dag -\mathbb{I}_{n\times n}\right) A_i^\dag  z_q = \sum_{i=1}^d \sum_{p,q=1}^k w_p^\dag  A_iu_p^t {u_q^*}^\dag  \left(U^t(U^t)^\dag -\mathbb{I}_{n\times n}\right) A_i^\dag  z_q \\
  &= \sum_{i=1}^d \Tr\left(A_iU^tW^\dag \right) \Tr\left(A_i\left(U^t(U^t)^\dag -\mathbb{I}_{n\times n}\right) U^* Z^\dag \right) \\
  &\stackrel{(\zeta_1)}{\leq} \Tr\left({U^*}^\dag \left(U^t(U^t)^\dag -\mathbb{I}_{n\times n}\right)U^tW^\dag Z\right)
	+ \delta_{2k} \frob{U^tW^\dag } \frob{\left(U^t(U^t)^\dag -\mathbb{I}_{n\times n}\right)U^*Z^\dag } \\
  &\stackrel{(\zeta_2)}{\leq} \delta_{2k} \|W\|_F \sqrt{\frob{(U^*)^\dag \left(U^t(U^t)^\dag -\mathbb{I}_{n\times n}\right)^2U^*}\frob{Z^\dag Z}}\\ 
&\stackrel{(\zeta_3)}{\leq} \delta_{2k}\sqrt{k}\cdot dist(U^t, U^*),
\end{align*}
where $(\zeta_1)$ follows from the fact that $\calA$ satisfies $2k$-RIP and Lemma \ref{lem:alt_rip}, $(\zeta_2)$ follows from the fact that
$\left(U^t(U^t)^\dag -\mathbb{I}_{n\times n}\right)U^t=0$, $(\zeta_3)$ follows from the following: $\|W\|_F=\|w\|_2=1$, $\|Z^\dag Z\|_F\leq \|Z\|_F^2=1$ and 
and finally : $\frob{\left(U^t(U^t)^\dag -\mathbb{I}_{n\times n}\right)U^*}\leq \sqrt{k}\twonorm{\left(U^t(U^t)^\dag -\mathbb{I}_{n\times n}\right)U^*}$. 

Since $w$ and $z$ were arbitrary unit vectors, we can conclude that $\twonorm{BD-C} \leq \delta_{2k} \sqrt{k}\cdot dist(U^t, U^*)$. 
Plugging this bound in \eqref{eqn:sensing-twonorm-F-1} proves the lemma.
\end{proof}

\begin{proof}[Proof of Lemma~\ref{lemma:sensing-minsingval-Rt}]
Note that $\|\Sigma^*(R^{(t+1)})^{-1}\|_2\leq \frac{\sigma^*_1}{\sigma_{\text{min}}(R^{(t+1)})}$. Now,
\begin{align}
  \sigma_{\text{min}}(R^{(t+1)})&=\min_{z, \|z\|_2=1}\|R^{(t+1)}z\|_2=\min_{z, \|z\|_2=1}\|V^{(t+1)}R^{(t+1)}z\|_2,\nonumber\\
&=\min_{z, \|z\|_2=1}\|V^*\Sigma^*{U^*}^\dag  U^{(t)}z-Fz\|_2, \nonumber\\
&\geq \min_{z, \|z\|_2=1}\|V^*\Sigma^*{U^*}^\dag  U^{(t)}z\|_2-\|Fz\|_2,\nonumber\\
&\geq \min_{z, \|z\|_2=1}\|V^*\Sigma^*{U^*}^\dag  U^{(t)}z\|_2-\|F\|_2,\nonumber\\
&\geq \sigma_{k}^*\sigma_{\text{min}}({U^*}^\dag  U^{(t)})-\|F\|_2,\nonumber\\
&\geq \sigma_{k}^*\sqrt{1-\twonorm{{U_{\perp}^*}^\dag U^{(t)}}^2}-\sigma^*_{1}\|F(\Sigma^*)^{-1}\|_2,\nonumber\\
&=\sigma_{k}^*\sqrt{1-\dist(U^*, U^{(t)})^2}-\sigma^*_{1}\|F(\Sigma^*)^{-1}\|_2. 
\end{align}
Lemma now follows using above inequality with Lemma~\ref{lemma:sensing-twonorm-F}. 

\end{proof}
\subsection{Noisy Matrix Sensing: Proofs}\label{app:noisy_sensing}
We now consider an extension of the matrix sensing problem where measurements can be corrupted arbitrarily using a bounded noise. That is, we 
observe $b=\aff\left(M+N\right)$, where $N$ is the noise matrix. 
For this noisy case as well, we show that \as recovers $M$ upto an additive approximation depending on the Frobenius norm of  $N$. 
\begin{theorem}\label{thm:sensing-noisycase}
 Let $M$ and $\calA(\cdot)$ be as defined in Theorem~\ref{thm:sensing_err}. Suppose, \as algorithm (Algorithm \ref{algo:sensing-altmin}) is supplied inputs $\aff$, $b=\aff(M+N)$, where $N$ is the noise matrix s.t. $\left\|N\right\|_F  < \frac{1}{100}\so_k$.  Then, after $T=4\log(2/\epsilon)$ steps, iterates $\widehat{U}^{T}$, $\widehat{V}^{T}$ of \as  satisfy: 
\begin{align*}
  \dist\left(\Vw^{T},V^*\right) \leq \frac{10\|N\|_F}{\so_k} + \epsilon, \ 
  \dist\left(\Uw^{T},U^*\right) \leq \frac{10\|N\|_F}{\so_k} + \epsilon.
\end{align*}
See Definition~\ref{defn:dist} for definition of $\dist\left(U, W\right)$. 
\end{theorem}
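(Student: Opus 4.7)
The plan is to mimic the proof of Theorem~\ref{thm:sensing} (and its QR-reformulation from Section~\ref{sec:sensing_k}), treating the noise $N$ as an extra additive perturbation of the update equation. The key observation is that with $b=\aff(M+N)$ the normal equations defining $\widehat{V}^{t+1}$ acquire one additional term, so Lemma~\ref{lemma:sensing-updateeqn1} becomes
\begin{align*}
\widehat{V}^{t+1} \;=\; V^*\Sigma^*{U^*}^\dag U^{t} \;-\; F \;-\; G,
\end{align*}
where $F$ is the clean error matrix defined in \eqref{eq:F_k} and $G$ is a new noise matrix whose $p$-th column is the $p$-th block of $B^{-1}(\sum_i A_i u_p^t\,\Tr(A_i^\dag N))$. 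An analogous identity holds for $\widehat{U}^{t+1}$.

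First, I would re-use Lemma~\ref{lemma:sensing-twonorm-F} verbatim to control $\|F(\Sigma^*)^{-1}\|_2 \leq \tfrac{\delta_{2k}k}{1-\delta_{2k}}\dist(U^t,U^*)$, and Lemma~\ref{lemma:sensing-minsingval-Rt} to control $\|\Sigma^*(R^{(t+1)})^{-1}\|_2$. Second, I would establish a new lemma bounding $\|G\|_2$: by an argument parallel to the proof of Lemma~\ref{lemma:sensing-twonorm-F}, one tests $w^\dag G z$ against unit vectors $w,z\in\mathbb{R}^{nk}$, reduces it via Lemma~\ref{lem:alt_rip} to an inner product $\langle \aff(U^t W^\dag),\aff(N Z^\dag)\rangle$ up to a $3\delta_{2k}\|U^tW^\dag\|_F\|NZ^\dag\|_F$ error, and then uses Cauchy–Schwarz plus RIP to conclude
\begin{align*}
\|G\|_2 \;\leq\; \frac{1}{1-\delta_{2k}}\bigl((1+\delta_{2k}) + 3\delta_{2k}\bigr)\,\|N\|_F \;\leq\; 2\|N\|_F.
\end{align*}
Multiplying on the left by $V^{*\dag}_\perp$ and dividing by $R^{(t+1)}$ as in \eqref{eqn:sensing-orthogonaliter-error} then yields a recurrence of the form
\begin{align*}
\dist(\widehat{V}^{t+1},V^*) \;\leq\; \frac{1}{4}\,\dist(\widehat{U}^{t},U^*) \;+\; \frac{C\|N\|_F}{\sigma_k^*},
\end{align*}
for a small absolute constant $C$, and symmetrically for $\widehat{U}^{t+1}$.

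For initialization I would invoke Lemma~\ref{lem:jmd} with the noisy residual: the SVP step applied to $\sum_i A_ib_i$ produces $\widehat{U}^0$ satisfying $\dist(\widehat{U}^0,U^*)\leq \sqrt{6\delta_{2k}k}(\sigma_1^*/\sigma_k^*) + O(\|N\|_F/\sigma_k^*) < 1/2$ under the hypothesis $\|N\|_F<\sigma_k^*/100$ and the RIP assumption inherited from Theorem~\ref{thm:sensing_err}. Iterating the recurrence for $T=4\log(2/\epsilon)$ steps gives a geometric sum whose limit is $\tfrac{4}{3}\cdot \tfrac{C\|N\|_F}{\sigma_k^*}\leq \tfrac{10\|N\|_F}{\sigma_k^*}$, plus a term $(1/4)^T\dist(\widehat{U}^0,U^*)\leq \epsilon$. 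This yields the claimed bound.

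The main obstacle is the new lemma $\|G\|_2 \lesssim \|N\|_F$: one has to be careful because $N$ is an arbitrary matrix (not necessarily low rank), so the naive application of RIP to $NZ^\dag$ is not available. The cleanest workaround is to bound $\|\aff(NZ^\dag)\|_2^2$ by expanding $N$ in its SVD and applying RIP column by column, which costs only a factor $(1+\delta_{2k})$ in $\|N\|_F^2$ because RIP on rank-$1$ matrices suffices. A secondary subtlety is the non-orthonormality of iterates; this is handled, as in Section~\ref{sec:sensing_k}, by passing to the QR-based equivalent iterates guaranteed by Lemma~\ref{lemma:QR-irrelevance}, which remain full-rank as long as $\dist(\widehat{U}^t,U^*)<1$, a property preserved by the recurrence above.
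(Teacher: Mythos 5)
Your high-level skeleton — QR-equivalent iterates via Lemma~\ref{lemma:QR-irrelevance}, a perturbed power-method update with one extra additive term, a contraction recurrence, and a noisy initialization bound from Lemma~\ref{lem:jmd} — is the same as the paper's, and your recurrence-plus-geometric-sum bookkeeping at the end is actually cleaner than the paper's ``either $\dist$ is already below $10\frob{N}/\so_k$, or it contracts'' case split. The gap is in the bound $\twonorm{G}\lesssim\frob{N}$ for the combined noise term.

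You fold the whole noise contribution into a single matrix $G$ (the de-stacking of $B^{-1}\sum_i A_i u_p^t\,\Tr(A_i^\dag N)$) and propose to bound it by expanding $N$ in its SVD and applying RIP column by column. That argument does not close: if $N=\sum_q \sn_q \un_q (v^N_q)^\dag$, the quadratic form
\begin{align*}
\twonorm{\aff(N)}^2=\sum_{q,q'}\sn_q\sn_{q'}\ip{\aff(\un_q(v^N_q)^\dag)}{\aff(\un_{q'}(v^N_{q'})^\dag)}
\leq (1+\delta_2)\frob{N}^2+3\delta_2\Bigl(\sum_q\sn_q\Bigr)^2,
\end{align*}
and the cross-term penalty scales with $\|N\|_*^2$, the squared \emph{nuclear} norm — not $\frob{N}^2$. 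The same obstruction appears if you instead test $\ip{\aff(\Ut W^\dag)}{\aff(N)}$ term by term with Lemma~\ref{lem:alt_rip}: each rank-one piece contributes a $3\delta_{2k}\sn_q$ error and these sum to $3\delta_{2k}\|N\|_*$. Since $N$ need not be low rank, $\|N\|_*$ can dwarf $\frob{N}$, and the theorem's RIP level $\delta_{2k}\approx 1/(100k)$ has no rank-of-$N$ dependence to rescue you.

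The paper's route around this is a further decomposition of the noise contribution \emph{before} bounding: it writes $\Vht=\Vo\So(\Uo)^\dag\Ut - F + \Vn\Sn(\Un)^\dag\Ut - G$ with $G$ given by \eqref{eq:G_k}. The first noise piece $\Vn\Sn(\Un)^\dag\Ut$ needs no RIP at all — its spectral norm is at most $\sn_1\leq\frob{N}$. The residual $G$ then carries the factor $\Ut(\Ut)^\dag - \mathbb{I}_{n\times n}$, and the identity $(\Ut(\Ut)^\dag - \mathbb{I}_{n\times n})\Ut = 0$ kills the leading term in Lemma~\ref{lem:alt_rip}; the remaining error is a sum over $q$ weighted by coordinates $z_q$ of a \emph{unit} test vector, so Cauchy--Schwarz produces $\sqrt{\sum_q (\sn_q)^2}=\frob{N}$ rather than $\sum_q\sn_q=\|N\|_*$ (Lemma~\ref{lemma:sensing-twonorm-G-noisycase}). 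Your combined $G$ equals $\Vn\Sn(\Un)^\dag\Ut - G_{\text{paper}}$, so the numerical bound $\twonorm{G}\leq 2\frob{N}$ you state is correct as a consequence of that split, but the SVD-column-by-column derivation you propose does not establish it. You need to perform the extraction of the ``power-method-like'' noise term first; without it, the cancellation that turns $\|N\|_*$ into $\frob{N}$ is unavailable.
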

\begin{proof}
At a high level, our proof for noisy case follows closely, the exact case proof given in Section~\ref{sec:sensing}. That is, we show that the update of \as algorithm is similar to power-method type update but with two errors terms: one due to incomplete measurements and another due to the noise matrix.

Similar to our proof for sensing problem (Section~\ref{sec:sensing}), we analyze QR-decomposition based updates. That is,
\begin{align*}
\widehat{U}^{t}&= U^{t}R_U^{t}\ \ \text{ (QR decomposition), }\\
\widehat{V}^{t+1}&=\argmin_{V} \  \|\aff(U^t V^\dag )-b\|_2^2,\\ 
\widehat{V}^{t+1}&=V^{t+1}R_V^{t+1}.\ \ \text{ (QR decomposition) }
\end{align*}
Similar to Lemma~\ref{lemma:sensing-updateeqn1}, we can re-write the above given update equation as:
\begin{align}
  \Vht&=\Vo\So(\Uo)^\dag\Ut-F+\Vn \Sn (\Un)^{\dag}U^{(t)}-G,\nonumber\\
  \Vt&=\Vht(\Rt)^{-1},\label{eq:noisy_update_k}
\end{align}
where, $F$ is the error matrix and is as defined in \eqref{eq:F_k} and $G$ is the error matrix due to noise and is given by:
\begin{equation}\label{eq:G_k}
G \eqdef \left[ 
\left(B^{-1}\left(BD^N-C^N\right)S^Nv^N\right)_1\ \ \cdots\ 
 \left(B^{-1}\left(BD^N-C^N\right)S^Nv^N\right)_k
 \right],
\end{equation}
where $B$, $C$ and $D$ defined in the previous section (See \eqref{eq:BCD_k}) and $C^N$ and $D^N$ are defined below:
\begin{align*}
 C^N \eqdef \left[\begin{array}{ccc}
                 C_{11}^N & \cdots & C_{1m}^N \\
		 \vdots &  \ddots &\vdots \\
                 C_{k1}^N & \cdots & C_{km}^N \\
                \end{array} \right]
\mbox{ , }
 D^N \eqdef \left[\begin{array}{ccc}
                 D_{11}^N & \cdots & D_{1m}^N \\
		 \vdots &  \ddots &\vdots \\
                 D_{k1}^N &  \cdots & D_{km}^N \\
                \end{array} \right],
\end{align*}
with $C_{pq}^N \eqdef \sum_{i=1}^d A_i u_p^{(t)}{(u^N_q)}^\dag  A_i^\dag $ and $D_{pq}^N \eqdef \iprod{u_p^{(t)}}{u^N_q}\mathbb{I}_{n\times n}$.
Also, $$S^N=\left[\begin{matrix}\sn_1I_{n}&\dots&0_{n}\\\vdots&\vdots&\vdots\\0_n&\dots&\sn_NI_n\end{matrix}\right],\ \  v^N=\left[\begin{matrix}v^N_1\\\vdots\\v_k^N\end{matrix}\right].$$
Now, multiplying \eqref{eq:noisy_update_k} with $\Vo_\perp$, we get:
\begin{equation*}
{V_{\perp}^*}^\dag \Vt =  ({V_{\perp}^*}^\dag \Vn \Sn U^{N{\dag}}U^{(t)}- {V_{\perp}^*}^\dag  F - {V_{\perp}^*}^\dag  G) {\Rt}^{-1}.
\end{equation*}
That is,
\begin{align}
&\hspace*{-15pt}\dist(\Vo, \Vt)=\|{V_{\perp}^*}^\dag \Vt\|_2\nonumber\\&\hspace*{-15pt}\leq (\| \Vn \Sn (\Un)^{\dag}U^{(t)}\|_2+ \|F |_2 + \| G\|_2) \|{(\Rt)}^{-1}\|_2,\nonumber\\
&\hspace*{-15pt}\leq \left(\sn_1+ \|F(\So)^{-1}\|_2\|\So\|_2 + \| G\|_2\right) \|({\Rt})^{-1}\|_2, \nonumber\\
&\hspace*{-15pt}\leq \left(\sn_1+\frac{\sigma_1^*\delta_{2k}k}{1-\delta_{2k}}\dist(\Ut, \Uo) + \| G\|_2\right) \|({\Rt})^{-1}\|_2,\label{eq:dist_noisy_1}
\end{align}
where the last inequality follows using Lemma~\ref{lemma:sensing-twonorm-F}. 

Now, we break down the proof in the following two steps:
\begin{itemize}
\item Bound $\twonorm{G}$ (Lemma \ref{lemma:sensing-twonorm-G-noisycase}, analogous to Lemma \ref{lemma:sensing-twonorm-F})
\item Bound $\|({\Rt})^{-1}\|_2$ (Lemma \ref{lemma:sensing-minsingval-Rt-noisycase}, similar to Lemma \ref{lemma:sensing-minsingval-Rt})
\end{itemize}
Later in this section, we provide the above mentioned lemmas and their detailed proof. 

Now, by assumption, $\sn_1\leq \frob{N}\leq \so_k$. Also, as $\delta_{2k}\leq 1/2$, $\frac{1}{1-\delta_{2k}}\leq 2$. Finally, assume $\dist(\Vo, \Vt)\geq \max(10\cdot\frac{\sn_1}{\so_k}, 10\frac{\|N\|_F}{\sigma_1^*})$. Using these observations and lemmas~ \ref{lemma:sensing-twonorm-G-noisycase}, \ref{lemma:sensing-minsingval-Rt-noisycase} along with \eqref{eq:dist_noisy_1}, we get:
%
\begin{align}
  \dist(\Vo, \Vt)\leq \frac{0.5\dist(\Uo, \Ut)}{\sqrt{1-\dist(\Ut,\Uo)^2}-0.5\dist(\Uo, \Ut)}. \label{eq:dist_noisy_f}
\end{align}
As, $U^{0}$ is obtained using SVD of $\sum_i A_i b_i$. Hence, using Lemma~\ref{lem:jmd}, we have: 
\begin{align*}
 &\|\aff(U^{0}\Sigma^{0}V^{0}-U^*\Sigma^*(V^*)^\dag)\|_2^2\leq 0.5 \|\aff(N)\|_{2}^2+4\delta_{2k}\|\aff(U^*\Sigma^*(V^*)^\dag)\|_2^2,\\
&\Rightarrow \|U^{0}\Sigma^{0}V^{0}-U^*\Sigma^*(V^*)^\dag\|_F^2\leq \|N\|_{F}^2+4\delta_{2k}(1+\delta_{2k})\|\Sigma^*\|_F^2,\\
&\Rightarrow(\sigma_k^*)^2\|(U^{0}(U^{0})^\dag-I)U^*\|_F^2\leq \|N\|_{F}^2+4\delta_{2k}(1+\delta_{2k})k(\sigma_1^*)^2,\\
&\Rightarrow\dist(U^{0}, U^*)\leq \|(U^{0}(U^{0})^\dag-I)U^*\|_F^2\leq \frac{\|N\|_{F}^2}{(\so_k)^2}+6\delta_{2k}k\left(\frac{\sigma_1^*}{\sigma_k^*}\right)^2<\frac{1}{2},
\end{align*}
where last inequality follows using $\frac{\|N\|_{F}}{\so_k}<1/100$. 

Theorem now follows using above equation with \eqref{eq:dist_noisy_f}. 
\end{proof}
\begin{lemma}\label{lemma:sensing-twonorm-G-noisycase}
Let linear measurement $\aff$ satisfy RIP for all  $2k$-rank matrices and let $b=\aff(M+N)$ with $M\in \rmn$ being a rank-$k$ matrix and let $N=\Un\Sn (\Vn)^\dag$.  Let $\delta_{2k}$ be the RIP constant for rank $2k$-matrices. Then, we have the following bound on $\twonorm{G}$:
\begin{align}\label{eqn:sensing-twonorm-G-noisycase}
    \twonorm{G} \leq \frac{\delta_{2k} \frob{N}}{1-\delta_{2k}}.
\end{align}
\end{lemma}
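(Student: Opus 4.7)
The proof will mirror Lemma~\ref{lemma:sensing-twonorm-F} almost verbatim, with the observation that the key algebraic cancellation $(U^t)^\dag (U^t(U^t)^\dag - I) = 0$ that makes the $F$-bound work is completely independent of whether we are working with the signal singular structure $(U^*,\Sigma^*,V^*)$ or the noise singular structure $(U^N,\Sigma^N,V^N)$. The only changes are that $S^N v^N$ replaces $S v^*$ and there is no $(\Sigma^*)^{-1}$ on the output side, so the bound comes out in terms of $\frob{N}$ rather than involving $\dist(U^t,U^*)$.

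First I would write
\[
\twonorm{G} \leq \frob{G} = \twonorm{B^{-1}(BD^N - C^N)\,S^N v^N} \leq \twonorm{B^{-1}} \cdot \twonorm{(BD^N - C^N)\,S^N v^N},
\]
using the definition of $G$ in \eqref{eq:G_k}. Lemma~\ref{lemma:sensing-minsingval-B} applies verbatim to the same $B$ and gives $\twonorm{B^{-1}} \leq 1/(1-\delta_{2k})$. Also, since $V^N$ has orthonormal columns and $S^N$ is block diagonal with blocks $\sn_j I_n$, one has $\twonorm{S^N v^N}^2 = \sum_j (\sn_j)^2 = \frob{N}^2$.

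Next I would bound $\twonorm{(BD^N - C^N)\,S^N v^N}$ by $O(\delta_{2k})\frob{N}$ via the duality $\twonorm{\cdot} = \max_{\|W\|_F=1} w^\dag(\cdot)$ with $w = \mathrm{vec}(W)$, exactly as in the proof of Lemma~\ref{lemma:sensing-twonorm-F}. The same computation that produced $(BD-C)_{pq} = \sum_i A_i u_p^t (u_q^*)^\dag (U^t(U^t)^\dag - I) A_i^\dag$ yields
\[
(BD^N - C^N)_{pq} = \sum_i A_i\, u_p^t (u_q^N)^\dag \bigl(U^t(U^t)^\dag - I\bigr) A_i^\dag,
\]
so that, using $\sum_q \sn_q v_q^N (u_q^N)^\dag = N^\dag$,
\[
w^\dag (BD^N - C^N)\,S^N v^N = \sum_i \Tr(A_i U^t W^\dag)\,\Tr\bigl(A_i (U^t(U^t)^\dag - I) N\bigr) = \bigl\langle \mathcal A(U^t W^\dag),\; \mathcal A\bigl((U^t(U^t)^\dag - I) N\bigr)\bigr\rangle.
\]
Both matrices on the right have rank at most $k$ (in the stagewise application that motivates this lemma, $N$ itself has rank $\leq k$), so Lemma~\ref{lem:alt_rip} applies; the main trace term $\Tr\bigl(W (U^t)^\dag (U^t(U^t)^\dag - I) N\bigr)$ vanishes because $(U^t)^\dag U^t = I_k$, and the error term is bounded by $3\delta_{2k}\|U^t W^\dag\|_F \|(U^t(U^t)^\dag - I)N\|_F \leq 3\delta_{2k}\frob{N}$ since $U^t(U^t)^\dag - I$ is a contraction and $\|W\|_F=1$.

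Combining these bounds gives $\twonorm{G} \leq 3\delta_{2k}\frob{N}/(1-\delta_{2k})$, and the spurious factor of $3$ is absorbed into the loose constants in $\delta_{2k}$, exactly in line with how Lemma~\ref{lemma:sensing-twonorm-F} absorbs its own factor of $3$ from Lemma~\ref{lem:alt_rip}. There is essentially no obstacle here beyond the block-matrix bookkeeping to derive the explicit form of $(BD^N - C^N)_{pq}$; the one subtle point worth flagging is that the rank-$\leq k$ hypothesis used when applying Lemma~\ref{lem:alt_rip} to $(U^t(U^t)^\dag - I)N$ is exactly what is needed for the intended application of this lemma inside the Stage-AltMin analysis (where the "noise" at stage $i$ is the trailing $k-i$ singular components of $M$).
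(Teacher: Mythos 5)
Your proof is essentially correct and takes a genuinely different (and in fact cleaner) route than the paper. The paper expands $w^\dag (BD^N-C^N)S^N z$ as a sum over the singular modes $q$ of $N$, applies the RIP inner-product bound separately to each rank-$1$ piece $(U^t(U^t)^\dag - I)\,u^N_q\,\sn_q\,z_q^\dag$ (so only $k+1 \leq 2k$ rank is ever involved), and then does Cauchy--Schwarz over $q$. You instead sum the modes first, observe that $\sum_q \sn_q\, u^N_q (v^N_q)^\dag = N$, and apply the RIP lemma once to the pair $U^tW^\dag$ and $(U^t(U^t)^\dag - I)N$. Both arguments use the same key cancellation $(U^t)^\dag(U^t(U^t)^\dag - I) = 0$. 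Your version is tighter: the paper's route, as written, splits $\|(BD^N-C^N)S^Nv^N\| \leq \|(BD^N-C^N)S^N\|_2\,\|v^N\|$ and the $\|v^N\|_2 = \sqrt{\mathrm{rank}(N)}$ factor does not visibly cancel against the final claim, whereas your single RIP application produces $\frob{N}$ directly with no spurious $\sqrt{k}$.

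The one substantive trade-off is the rank hypothesis, which you correctly flag. Your single application of Lemma~\ref{lem:alt_rip} requires $(U^t(U^t)^\dag - I)N$ to have rank $\leq k$, i.e.\ $\mathrm{rank}(N) \leq k$, so that the combined matrix has rank $\leq 2k$. The paper's mode-by-mode version never forms a matrix of rank higher than $k+1$ and hence formally applies to $N$ of arbitrary rank, which matters because the lemma is stated without a rank restriction on $N$ and is used in Theorem~\ref{thm:sensing-noisycase} where $N$ is a generic bounded perturbation. In the Stage-AltMin setting $N$ is the tail $\Uo_{i+1:k}\So_{i+1:k}(\Vo_{i+1:k})^\dag$, which has rank $\leq k-i \leq k$, so your proof covers the application that actually drives the results. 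If you wanted the lemma at full generality you would need to either invoke a higher-order RIP constant $\delta_{k+\mathrm{rank}(N)}$ or fall back to the paper's per-mode decomposition. The factor-of-$3$ discrepancy against Lemma~\ref{lem:alt_rip} is present in both your argument and the paper's own proof of Lemma~\ref{lemma:sensing-twonorm-F}, so your treatment of it is consistent with the paper's conventions.
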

\begin{proof}
Note that, 
\begin{align}
  \|G\|_2\leq \|G\|_F&=\|B^{-1}(BD^N-C^N)S^Nv^N\|_2\nonumber\\&\leq \|B^{-1}\|_2\|(BD^N-C^N)S^N\|_2\|S^Nv^N\|_2\leq \frac{\sqrt{k}}{1-\delta_{2k}} \|(BD^N-C^N)S^N\|_2,
\end{align}
where the last inequality follows using Lemma~\ref{lemma:sensing-minsingval-B} and the fact that $\|\Vn\|_F=\sqrt{k}$. 
Now let $w=[w_1^\dag\ w_2^\dag\ \dots\ w_k^\dag]^\dag\in \mathbb{R}^{nk}$ and $z=[z_1^\dag\ z_2^\dag\ \dots\ z_n^\dag]^\dag\in \mathbb{R}^{n^2}$ be any two arbitrary vectors such that $\twonorm{w}=\twonorm{z}=1$. Then, 
\begin{align*}
  w^\dag \left(BD^N-C^N\right)S^Nz &= \sum_{p=1}^k \sum_{q=1}^n
    w_p^\dag  \sum_{i=1}^d A_i\ut_p {\un_q}^\dag  \left(\Ut(\Ut)^\dag -\mathbb{I}_{n\times n}\right) A_i^\dag  \sn_qz_q \\
  &= \sum_{i=1}^d \sum_{p=1}^k \sum_{q=1}^n w_p^\dag  A_i\ut_p {\un_q}^\dag  \left(\Ut(\Ut)^\dag -\mathbb{I}_{n\times n}\right) A_i^\dag  \sn_qz_q \\
  &= \sum_{i=1}^d \left(\sum_{p=1}^k w_p^\dag  A_i\ut_p \right) \left(\sum_{q=1}^n \sn_qz_q^\dag  A_i \left(\Ut(\Ut)^\dag -\mathbb{I}_{n\times n}\right)\un_q \right) \\
  &= \sum_{q=1}^n \left(\sum_{i=1}^d \Tr\left(A_iUW^\dag \right) \Tr\left(A_i\left(\Ut(\Ut)^\dag -\mathbb{I}_{n\times n}\right) \un_q \sn_qz_q^\dag \right)\right).
\end{align*}
Now, using RIP, we get:
\begin{align*}
  w^\dag \left(BD^N-C^N\right)z&\leq \sum_{q=1}^n {\un_q}^\dag \left(\Ut{\Ut}^\dag -\mathbb{I}_{n\times n}\right)\Ut W^\dag \sn_qz_q
	+ \delta_{2k} \frob{\Ut W^\dag } \frob{\left(\Ut{\Ut}^\dag -\mathbb{I}_{n\times n}\right)\un_q\sn_q{z_q}^\dag } \\
  &\leq \sum_{q=1}^n \delta_{2k} \|W^\dag\|_F \|(\Ut(\Ut)^\dag -\mathbb{I}_{n\times n})\un_q\|_2\|\sn_q{z_q}\|_2,\\
  &\leq \delta_{2k} \ \sum_{q=1}^n \twonorm{\un_q} \twonorm{\sn_qz_q}= \delta_{2k} \sum_{q=1}^n \sn_q\twonorm{z_q},\\&\leq \delta_{2k} \sqrt{\sum_{q=1}^n \twonorm{z_q}^2} \sqrt{\sum_{q=1}^n (\sn_q)^2}\leq \delta_{2k} \frob{N}.
\end{align*}
This finishes the proof.
\end{proof}
\begin{lemma}\label{lemma:sensing-minsingval-Rt-noisycase}
Assuming conditions of Lemma~\ref{lemma:sensing-twonorm-G-noisycase}, we have the following bound on the minimum singular value of $R^{(t)}$:
\begin{align*}
  \sigma_{\textrm{min}}\left(\Rt\right) \geq \so_k\sqrt{1-\dist(\Ut, \Uo)^2} - \sn_1 - \twonorm{F} - \twonorm{G}.
\end{align*}
\end{lemma}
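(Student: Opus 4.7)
The plan is to mimic the proof of Lemma~\ref{lemma:sensing-minsingval-Rt} from the noiseless setting, carrying along the extra noise term that appears in the noisy-case update equation~\eqref{eq:noisy_update_k}. Since $V^{t+1}$ has orthonormal columns from QR decomposition, for any unit vector $z$ we have $\|R^{(t+1)} z\|_2 = \|V^{t+1} R^{(t+1)} z\|_2 = \|\widehat{V}^{t+1} z\|_2$, so $\sigma_{\min}(R^{(t+1)}) = \min_{\|z\|=1}\|\widehat{V}^{t+1} z\|_2$.

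Now I would substitute the noisy update identity~\eqref{eq:noisy_update_k}, namely $\widehat{V}^{t+1} = V^*\Sigma^*(U^*)^\dag U^t - F + V^N \Sigma^N (U^N)^\dag U^t - G$, and apply the triangle inequality to pull off the three perturbation terms:
\begin{align*}
\sigma_{\min}(R^{(t+1)}) \;\geq\; \min_{\|z\|=1}\|V^*\Sigma^*(U^*)^\dag U^t z\|_2 \;-\; \|V^N\Sigma^N(U^N)^\dag U^t\|_2 \;-\; \|F\|_2 \;-\; \|G\|_2.
\end{align*}
The first term is handled exactly as in the noiseless proof: $V^*$ has orthonormal columns, so the minimum is at least $\sigma^*_k \cdot \sigma_{\min}((U^*)^\dag U^t)$, which equals $\sigma^*_k \sqrt{1-\dist(U^t,U^*)^2}$ by the standard identity relating principal angles to singular values of $(U^*)^\dag U^t$ (Definition~\ref{defn:dist}). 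The noise term is bounded by $\|V^N\|_2 \|\Sigma^N\|_2 \|(U^N)^\dag\|_2 \|U^t\|_2 \leq \sigma^N_1$, since $V^N, U^N, U^t$ all have orthonormal columns.

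Combining these three bounds yields
\begin{align*}
\sigma_{\min}(R^{(t+1)}) \;\geq\; \sigma^*_k \sqrt{1-\dist(U^t,U^*)^2} \;-\; \sigma^N_1 \;-\; \|F\|_2 \;-\; \|G\|_2,
\end{align*}
which is the claimed bound (up to the indexing $t \leftrightarrow t+1$, which is purely notational).

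There is essentially no obstacle here: the argument is a routine triangle inequality applied to the perturbed power-method update, exploiting only orthonormality of the QR factor $V^{t+1}$ and the boundedness of the singular-value-type factor via Definition~\ref{defn:dist}. The only care required is making sure the noise factor $V^N\Sigma^N(U^N)^\dag U^t$ is separated from the ``signal'' factor before taking the minimum, which is immediate from the triangle inequality; all quantitative work was already done in Lemma~\ref{lemma:sensing-twonorm-F} and Lemma~\ref{lemma:sensing-twonorm-G-noisycase} which bound $\|F\|_2$ and $\|G\|_2$ respectively and are invoked by the calling theorem.
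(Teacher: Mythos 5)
Your proposal is correct and follows essentially the same route as the paper's proof: substitute the noisy update identity into $\sigma_{\min}(R^{(t+1)}) = \min_{\|z\|=1}\|\widehat{V}^{t+1}z\|_2$, apply the triangle inequality to separate the signal term from the noise term and the two error matrices, and lower-bound the signal term via $\sigma^*_k\sqrt{1-\dist(U^t,U^*)^2}$ using orthonormality and Definition~\ref{defn:dist}. No gap.
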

\begin{proof}
  Similar to the proof of Lemma \ref{lemma:sensing-minsingval-Rt}, we have the following set of inequalities:
\begin{align*}
  \sigma_{\textrm{min}}\left(\Rt\right) = \min_{\twonorm{z}=1} \twonorm{\Rt z}
	&= \min_{\twonorm{z}=1} \twonorm{V^{(t)}\Rt z} \\
	&= \min_{\twonorm{z}=1} \twonorm{V^*\So{U^*}^\dag \Ut z + \Vn\Sn(\Un)^{\dag}U^{(t)} z - Fz - Gz} \\
	&\geq \min_{\twonorm{z}=1} \twonorm{V^*\So{U^*}^\dag \Ut z} - \twonorm{\Vn\Sn(\Un)^{\dag}} - \twonorm{F} - \twonorm{G} \\
	&\geq \so_k\min_{\twonorm{z}=1} \twonorm{{U^*}^\dag \Ut z} - \sn_1 - \twonorm{F} - \twonorm{G} \\
	&\geq \so_k\sqrt{1-\twonorm{{U_{\perp}^*}^\dag U}^2} - \sn_1 - \twonorm{F} - \twonorm{G} \\
	&= \so_k\sqrt{1-\dist(\Ut,\Uo)^2} - \sn_1 - \twonorm{F} - \twonorm{G}.
\end{align*}
This proves the lemma.
\end{proof}
\subsection{Stagewise Alternating Minimization for Matrix Sensing: Proofs}\label{app:stagewise_sensing}
\begin{proof}[Proof of Lemma~\ref{lemma:sam_svp_err}]
As the initial point of the $i$-th stage is obtained by one step of SVP \cite{JainMD10}, using Lemma~\ref{lem:jmd}, we obtain:
$$\left\|M - \widehat{U}^0_{1:i}(\widehat{V}_{1:i}^0)^{\dag}\right\|_F^2 \leq \sum_{j=i+1}^k(\so_j)^2+2\delta_{2k}\|M-\Uw^T_{1:i-1}V^T_{1:i-1}\|_F^2.$$
Now, by assumption over the $(i-1)$-th stage error (this assumption follows from the inductive hypothesis in proof of Theorem~\ref{thm:sam_err}), 
$$\left\|M - \widehat{U}^0_{1:i}(\widehat{V}_{1:i}^0)^{\dag}\right\|_F^2 \leq \sum_{j=i+1}^k(\so_j)^2+2\delta_{2k} 16k(\so_i)^2.$$
Lemma now follows by setting $\delta_{2k}\leq \frac{1}{3200k}$. 
\end{proof}
\begin{proof}[Proof of Lemma~\ref{lemma:sam_altmin}]
For our proof, we consider two cases: a) $\frac{\so_i}{\so_{i+1}} < 5\sqrt{k}$, b) $\frac{\so_i}{\so_{i+1}} \geq 5\sqrt{k}$.\\
{\bf Case (a)}: In this case, using monotonicity of the AltMin algorithm directly gives error bound. That is, $\|M - \widehat{U}^T_{1:i}(\widehat{V}_{1:i}^T)^{\dag}\|_F^2\leq \|M-\Uw^0_{1:i}V^0_{1:i}\|_F^2 \leq k(\so_{i+1})^2+\frac{25k}{100}(\so_{i+1})^2.$\\
{\bf Case (b)}: At a high level,  if $\frac{\so_i}{\so_{i+1}} \geq 5\sqrt{k}$ then $U^0_{1:i}$ is ``close'' to $U^*_{1:i}$ and hence the error bound follows by using an analysis similar to the noisy case. Note that $\so_{i+1}$ being small implies that the ``noise'' is small. See Lemma~\ref{lemma:cond-num-bad-altmin-performance} for a formal proof of this case. \hfill 
\end{proof}

\begin{lemma}\label{lemma:cond-num-bad-altmin-performance}
Assume conditions given in Theorem~\ref{thm:sam_err} are satisfied and  let $\frac{\so_i}{\so_{i+1}} \geq 5\sqrt{k}$. Also, let $$\left\|M - \widehat{U}^0_{1:i}(\widehat{V}^0_{1:i})^{\dag}\right\|_F^2 \leq \sum_{j=i+1}^k (\so_j)^2 + \frac{1}{100} (\so_i)^2.$$ Then, $U^T_{1:i}$, $V^T_{1:i}$ satisfy:$$\|M-\Uw^T_{1:i}V^T_{1:i}\|_F^2\leq \max(\epsilon, 16k(\so_{i+1})^2),$$
\end{lemma}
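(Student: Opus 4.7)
The strategy is to recast the $i$-th stage of Stage-AltMin as a noisy matrix sensing problem and then invoke Theorem~\ref{thm:sensing-noisycase}. Concretely, write $M = M_i + N$ where $M_i \eqdef U^*_{1:i}\Sigma^*_{1:i}(V^*_{1:i})^{\dag}$ is rank-$i$ and $N \eqdef U^*_{i+1:k}\Sigma^*_{i+1:k}(V^*_{i+1:k})^{\dag}$ plays the role of noise. Since the measurement operator is linear, $b = \aff(M) = \aff(M_i) + \aff(N)$, so Steps~\ref{stp:vt},~\ref{stp:ut} of Algorithm~\ref{algo:sensing-altmin-improved} are exactly the \as iterations applied to the rank-$i$ target $M_i$ with additive noise $N$. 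Observe $\|N\|_F^2 = \sum_{j=i+1}^k (\so_j)^2 \leq k(\so_{i+1})^2$, and the case hypothesis $\so_i \geq 5\sqrt{k}\so_{i+1}$ yields $\|N\|_F \leq \so_i/5$, which (combined with the global RIP bound $\delta_{2k}\leq 1/(3200k^2)$) puts us in the regime where Theorem~\ref{thm:sensing-noisycase} applies with $\so_k$ there replaced by the smallest singular value of the effective signal $M_i$, namely $\so_i$.

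Next, I would check the initial condition needed by Theorem~\ref{thm:sensing-noisycase}, namely that $\dist(\widehat{U}^0_{1:i}, U^*_{1:i}) \leq 1/2$. Starting from the lemma hypothesis
\[
\|M - \widehat{U}^0_{1:i}(\widehat{V}^0_{1:i})^{\dag}\|_F^2 \leq \|N\|_F^2 + \tfrac{1}{100}(\so_i)^2,
\]
a triangle inequality gives $\|M_i - \widehat{U}^0_{1:i}(\widehat{V}^0_{1:i})^{\dag}\|_F \leq 2\|N\|_F + \tfrac{1}{10}\so_i$. Because $\widehat{U}^0_{1:i}(\widehat{V}^0_{1:i})^{\dag}$ has rank at most $i$ and $M_i$ has rank exactly $i$ with smallest nonzero singular value $\so_i$, a standard Weyl/Wedin-type argument (of the form used at the end of the proof of Theorem~\ref{thm:sensing} to pass from Frobenius error to $\dist(U^0,U^*)$) yields $\dist(\widehat{U}^0_{1:i}, U^*_{1:i}) \leq C\,(\|N\|_F + \tfrac{1}{10}\so_i)/\so_i \leq 1/2$, where the last inequality again uses $\|N\|_F \leq \so_i/5$.

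With both the initialization and noise bounds in hand, Theorem~\ref{thm:sensing-noisycase} (applied to the stage-$i$ sub-problem) guarantees that after $T = \Omega(\log(\|M\|_F/\epsilon))$ alternating steps,
\[
\dist(\widehat{V}^T_{1:i}, V^*_{1:i}) \leq \frac{10\|N\|_F}{\so_i} + \epsilon',
\]
and analogously for $\widehat{U}^T_{1:i}$, where $\epsilon'$ can be driven arbitrarily small in $T$. To convert this subspace guarantee back to a Frobenius error on $M$, I would follow the same chain of inequalities used in the proof of Theorem~\ref{thm:sensing_err} (the $\zeta_1$--$\zeta_5$ display): using RIP and the fact that $\widehat{U}^T_{1:i}$ is a least-squares solution,
\[
\|M_i - \widehat{U}^T_{1:i}(\widehat{V}^T_{1:i})^{\dag}\|_F^2 \leq \tfrac{1+\delta_{2k}}{1-\delta_{2k}}\|M_i\|_F^2 \cdot \dist^2(\widehat{V}^T_{1:i}, V^*_{1:i}) + O(\|N\|_F^2),
\]
and then a final triangle inequality $\|M - \widehat{U}^T_{1:i}(\widehat{V}^T_{1:i})^{\dag}\|_F \leq \|N\|_F + \|M_i - \widehat{U}^T_{1:i}(\widehat{V}^T_{1:i})^{\dag}\|_F$ produces a bound whose dominant term is $O(k(\so_{i+1})^2)$ (from $\|N\|_F^2$ and from $\|M_i\|_F^2 \cdot (\|N\|_F/\so_i)^2 \lesssim k\,(\so_1/\so_i)^2 (\so_{i+1})^2$, which is $O(k(\so_{i+1})^2)$ since $\so_1 \leq \so_i$... wait, that bound is loose; we need $\|M_i\|_F \leq \sqrt{i}\,\so_1$ but ratio $\so_1/\so_i$ is the per-stage condition number, which within stage $i$ is controlled by carrying the induction forward). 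Choosing $T$ large enough makes $\epsilon'$ negligible compared to $\max(\epsilon, 16k(\so_{i+1})^2)$.

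The main obstacle I anticipate is bookkeeping the constants so that the final bound is exactly $16k(\so_{i+1})^2$ rather than a larger multiple: one must carefully distinguish the role of $\so_i$ (smallest singular value of the stage-$i$ signal, controlling the denominator via $\|N\|_F/\so_i$) from $\so_1$ (appearing through $\|M_i\|_F$), and use the case hypothesis $\so_i \geq 5\sqrt{k}\so_{i+1}$ precisely at the step where we need $\|N\|_F/\so_i \leq 1/5$ to satisfy the hypothesis of Theorem~\ref{thm:sensing-noisycase}. All other steps are direct adaptations of arguments already established in Sections~\ref{sec:sensing_k} and Appendix~\ref{app:noisy_sensing}.
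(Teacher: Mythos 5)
Your high-level idea --- treat stage $i$ as a noisy sensing problem with target $M_i = U^*_{1:i}\Sigma^*_{1:i}(V^*_{1:i})^\dag$ and noise $N = U^*_{i+1:k}\Sigma^*_{i+1:k}(V^*_{i+1:k})^\dag$ --- is exactly the paper's starting point, and the use of the gap hypothesis $\so_i \geq 5\sqrt{k}\so_{i+1}$ to control $\|N\|_F / \so_i$ is also right. However, the way you propose to close the argument has a genuine gap that you yourself flag mid-proof and do not resolve.

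The problem is that Theorem~\ref{thm:sensing-noisycase} delivers a bound on the \emph{plain} subspace distance, $\dist(\widehat{V}^T_{1:i}, V^*_{1:i}) \lesssim \|N\|_F/\so_i$, and you then convert to a Frobenius error by multiplying by $\|M_i\|_F^2$. That conversion introduces the factor $\|M_i\|_F^2/(\so_i)^2 \gtrsim (\so_1/\so_i)^2$, i.e.\ the condition number of the stage-$i$ signal, which is uncontrolled --- indeed, removing precisely this condition-number dependence is the entire motivation for the stagewise algorithm. Even in the most favorable case of a flat top-$i$ spectrum you get $O(ik(\so_{i+1})^2) = O(k^2(\so_{i+1})^2)$, not $16k(\so_{i+1})^2$; with a spiked spectrum the bound blows up. Your parenthetical remark (``ratio $\so_1/\so_i$ is the per-stage condition number, which within stage $i$ is controlled by carrying the induction forward'') does not fix this: the induction is over stages, and once $i$ is fixed the ratio $\so_1/\so_i$ is a fixed, possibly huge constant. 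A secondary issue: the case hypothesis only gives $\|N\|_F \leq \so_i/5$, which does not actually satisfy the hypothesis $\|N\|_F < \so_i/100$ of Theorem~\ref{thm:sensing-noisycase} as stated, so the theorem cannot be invoked verbatim.

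The paper's proof avoids both problems by \emph{not} passing through the plain distance at all. It first establishes, using monotonicity of the least-squares objective together with RIP, that $\|(U_\perp^t)^\dag U^*_{1:i}\|_2 \leq 1/2$ for every $t$, and then tracks the $\Sigma^*_{1:i}$-\emph{weighted} quantity $\frob{U_\perp^\dag U^*_{1:i}\Sigma^*_{1:i}}$ (and its $V$-analogue). The perturbed-power-method recursion contracts this weighted quantity by a factor $2/3$ per step down to $O(\sqrt{\sum_{j>i}(\so_j)^2})$, and because the weighted distance is already Frobenius-scaled, the final conversion to $\|M - \Uw^T_{1:i}(\Vw^T_{1:i})^\dag\|_F$ incurs no extra $\so_1/\so_i$ factor, yielding the clean $16k(\so_{i+1})^2$ bound. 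If you want to salvage your route, you would need to restate and reprove the noisy-case contraction directly for the weighted distance $\frob{V_\perp^\dag V^*_{1:i}\Sigma^*_{1:i}}$ (which is effectively what the paper does), rather than citing Theorem~\ref{thm:sensing-noisycase} as a black box.
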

\begin{proof}
We first show that if $\sigma_i$ and $\sigma_{i+1}$ have large gap then $\forall\; t$, the $t^{\textrm{th}}$ iterate of the $i$-th stage, $\Uw^t_{1:i}$ is close to $\Uo_{1:i}$. Let $\Up^t$ be a basis of the subspace orthogonal to $\Uw^t_{1:i}$. 
\begin{align}
  \|\left(\Up^t\right)^\dag(M-\Uw^t_{1:i}(\Vw^t_{1:i})^\dag)\|_2&=\|\left(\Up^t\right)^\dag M\|_2\geq \|\left(\Up^t\right)^\dag\Uo_{1:i}\So_{1:i}(\Vo_{1:i})^\dag\|_2-\|\left(\Up^t\right)^\dag\Uo_{i+1:k}\So_{i+1:k}(\Vo_{i+1:k})^\dag\|_2,\nonumber\\
&\geq \so_i\|\left(\Up^t\right)^\dag\Uo_{1:i}\|_2-\so_{i+1}\geq \so_i(\|\left(\Up^t\right)^\dag\Uo_{1:i}\|_2-\frac{1}{5\sqrt{k}}). \label{eq:sam_bc_uperp1}
\end{align}
We also have:
\begin{align}
\|\left(\Up^t\right)^\dag(M-\Uw^t_{1:i}(\Vw^t_{1:i})^\dag)\|_2^2 &
	\leq \|M-\Uw^t_{1:i}(\Vw^t_{1:i})^\dag\|_F^2\leq \frac{1}{1-\delta_{2k}}\twonorm{\aff\left(M-\Uw^t_{1:i}(\Vw^t_{1:i})^\dag\right)}^2 \nonumber\\
	&\stackrel{(\zeta_1)}{\leq} \frac{1}{1-\delta_{2k}}\twonorm{\aff\left(M-\Uw^0_{1:i}(\Vw^0_{1:i})^\dag\right)}^2 \leq \frac{1+\delta_{2k}}{1-\delta_{2k}}\|M-\Uw^0_{1:i}(\Vw^0_{1:i})^\dag\|_F^2 \nonumber\\
	&\leq \frac{1+\delta_{2k}}{1-\delta_{2k}} \left(\sum_{j=i+1}^k \soj + \frac{1}{100} \soi\right) \nonumber\\
	&\leq \frac{1+\delta_{2k}}{1-\delta_{2k}} \left(k\soii+\frac{1}{100}\soi\right), \label{eq:sam_bc_uperp2}
\end{align}
where $(\zeta_1)$ follows from the fact that lines $5-8$ of Algorithm \ref{algo:sensing-altmin-improved} never increases $\twonorm{\aff\left(M-\Uw^t_{1:i}(\Vw^t_{1:i})^\dag\right)}$.
Using \eqref{eq:sam_bc_uperp1}, \eqref{eq:sam_bc_uperp2}, and $\frac{\so_i}{\so_{i+1}} \geq 5\sqrt{k}$, we obtain the following bound:
\begin{equation}
  \label{eq:sam_bc_uperp3}
  \|\left(\Up^t\right)^\dag \Uo_{1:i}\|_2\leq \frac{1}{2} \; \forall \;t.
\end{equation}
Now, we consider the update equation for $\Vht$:
$$\Vht=\argmin_{\Vw}\|\aff(\Uw^t_{1:i}\Vw-\Uo_{1:i}\So_{1:i}(\Vo_{1:i})^\dag-\Uo_{i+1:k}\So_{i+1:k}(\Vo_{i+1:k})^\dag)\|_2^2.$$
Note that, the update is same as noisy case with noise matrix $N=\Uo_{i+1:k}\So_{i+1:k}(\Vo_{i+1:k})^\dag$ from \eqref{eq:noisy_update_k}:
\begin{align}\label{eqn:updateeqn-stagewise}
  \Vht = \Vo_{1:i}\So_{1:i}(\Uo_{1:i})^\dag\Ut_{1:i}-F+ \Vo_{i+1:k}\So_{i+1:k}(\Uo_{i+1:k})^\dag\Ut_{1:i} - G,
\end{align}
where $F$ and $G$ are given by \eqref{eq:F_k}, \eqref{eq:G_k}. 
Multiplying \eqref{eqn:updateeqn-stagewise}
from the left by $V_{\perp}^{\dag}=I-\Vt(\Vt)^\dag$, we obtain:
\begin{align}
  &0 = V_{\perp}^{\dag}\Vht_{1:i} = V_{\perp}^{\dag} \left(\Vo_{1:i}\So_{1:i}(\Uo_{1:i})^\dag\Ut_{1:i} - F + \Vo_{i+1:k}\So_{i+1:k}(\Uo_{i+1:k})^\dag\Ut_{1:i} - G\right) \nonumber\\
\Rightarrow &V_{\perp}^{\dag} \Vo_{1:i}\So_{1:i}(\Uo_{1:i})^\dag\Ut_{1:i} = V_{\perp}^{\dag} \left( F - \Vo_{i+1:k}\So_{i+1:k}(\Uo_{i+1:k})^\dag\Ut_{1:i} + G\right) \nonumber\\
\Rightarrow &\frob{V_{\perp}^{\dag} \Vo_{1:i}\So_{1:i}(\Uo_{1:i})^\dag\Ut_{1:i}} \leq \frob{F} + \frob{V_{\perp}^{\dag}\Vo_{i+1:k}\So_{i+1:k}(\Uo_{i+1:k})^\dag\Ut_{1:i}} + \frob{G} \nonumber\\
\Rightarrow &\frob{V_{\perp}^{\dag} \Vo_{1:i}\So_{1:i}} \leq \frac{1}{\sigma_{\textrm{min}}\left((\Uo_{1:i})^\dag\Ut_{1:i}\right)} \left(\frob{F} + \frob{V_{\perp}^{\dag}\Vo_{i+1:k}\So_{i+1:k}(\Uo_{i+1:k})^\dag\Ut_{1:i}} + \frob{G}\right), \label{eqn:weighted-dist-decay1}
\end{align}
where the last inequality follows using the fact that $\sigma_{\min}(A)\|B\|_F\leq \|AB\|_F$. Using Lemma~\ref{lemma:sensing-twonorm-G-noisycase}, and a modification of Lemma~\ref{lemma:sensing-twonorm-F}, we get:
\begin{align}
  &\frob{F} \leq \delta_{2k} \frob{U_{\perp}^{\dag} \Uo_{1:i}\So_{1:i}},\ \ \ \frob{G} \leq \delta_{2k} \frob{U_{\perp}^{\dag} \Uo_{i+1:k}\So_{i+1:k}} \leq \delta_{2k}\sqrt{k}\sigma_{i+1} \label{eqn:frob-G-bound}.
\end{align}
Using \eqref{eqn:weighted-dist-decay1}, \eqref{eqn:frob-G-bound}, and the fact that $\sigma_{\min}(U_\perp^\dag\Uo_{1:i})=\sqrt{1-\|U_\perp^\dag\Uo_{1:i}\|_2^2}$,
\begin{align*}
  \frob{V_{\perp}^{\dag} \Vo_{1:i}\So_{1:i}} 
	&\leq \frac{2}{\sqrt{3}} \left(\delta_{2k} \frob{U_{\perp}^{\dag} \Uo_{1:i}\So_{1:i}}
		+ \sqrt{\sum_{j=i+1}^k \soj} + \delta_{2k}\sqrt{k}\sigma_{i+1} \right).
\end{align*}
Assuming $\frob{U_{\perp}^{\dag} \Uo_{1:i}\So_{1:i}} > 2 \sqrt{\sum_{j=i+1}^k \sigma_j^2}$, we obtain:
\begin{align}
  \frob{V_{\perp}^{\dag} \Vo_{1:i}\So_{1:i}} 
	&\leq \frac{2}{3} \frob{U_{\perp}^{\dag} \Uo_{1:i}\So_{1:i}}. 
\end{align}
Using similar analysis, we can show that, $$\frob{U_{\perp}^{\dag} \Uo_{1:i}\So_{1:i}} 
	\leq \frac{2}{3} \frob{V_{\perp}^{\dag} \Vo_{1:i}\So_{1:i}}. $$
So after $T\geq 8\log(k\so_i)$ iterations, we have:
\begin{align*}
  \frob{U_{\perp}^{\dag} \Uo_{1:i}\So_{1:i}}^2 \leq 4\sum_{j=i+1}^k \soj. 
\end{align*}
Using the above inequality, we now bound the error after $T\geq  8\log(k\so_i)$ iterations of the $i$-th stage: 
\begin{align}\label{eq:sam_err_bc_t_1}
  \left\|M - \Uw^T_{1:i}(\Vw^T_{1:i})^{\dag}\right\|_F
	\leq \left\| \Uo_{1:i}\So_{1:i}\left(\Vo_{1:i}\right)^{\dag} - \Uw^T_{1:i}(\Vw^T_{1:i})^{\dag}\right\|_F + \frob{\Uo_{i+1:k}\So_{i+1:k}\left(\Vo_{i+1:k}\right)^{\dag}}.
\end{align}

For the first term, we have:
\begin{align}
	&\left\| \Uo_{1:i}\So_{1:i}\left(\Vo_{1:i}\right)^{\dag} - \Uw^T_{1:i}(\Vw^T_{1:i})^{\dag}\right\|_F^2 \nonumber\\
	&= \frob{\Uo_{1:i}\So_{1:i}\left(\Vo_{1:i}\right)^{\dag} - U^T_{1:i}\left(U^T_{1:i}\right)^{\dag}\Uo_{1:i}\So_{1:i}\left(\Vo_{1:i}\right)^{\dag}
		+ U^T_{1:i}\left(U^T_{1:i}\right)^{\dag}\Uo_{1:i}\So_{1:i}\left(\Vo_{1:i}\right)^{\dag} - \Uw^T_{1:i}(\Vw^T_{1:i})^{\dag}}^2 \nonumber\\
	&= \frob{\left(I-U^T_{1:i}\left(U^T_{1:i}\right)^{\dag}\right)\Uo_{1:i}\So_{1:i}\left(\Vo_{1:i}\right)^{\dag} }^2
		+ \frob{U^T_{1:i}\left(U^T_{1:i}\right)^{\dag}\Uo_{1:i}\So_{1:i}\left(\Vo_{1:i}\right)^{\dag} - \Uw^T_{1:i}(\Vw^T_{1:i})^{\dag}}^2 \nonumber\\
	&\stackrel{(\zeta_1)}{\leq} \frob{U_{\perp}^{\dag} \Uo_{1:i}\So_{1:i}}^2 + \frob{U_{1:i}^T\left(F+G-\left(\Ut_{1:i}\right)^{\dag}\Uo_{i+1:k}\So_{i+1:k}(\Vo_{i+1:k})^\dag\right)}^2 \nonumber\\
	&= \frob{U_{\perp}^{\dag} \Uo_{1:i}\So_{1:i}}^2 + \frob{F+G-\left(\Ut_{1:i}\right)^{\dag}\Uo_{i+1:k}\So_{i+1:k}(\Vo_{i+1:k})^\dag}^2 \nonumber\\
	&= \frob{U_{\perp}^{\dag} \Uo_{1:i}\So_{1:i}}^2 + 3\frob{F}^2+3\frob{G}^2+3\frob{\Uo_{i+1:k}\So_{i+1:k}(\Vo_{i+1:k})^\dag}^2 \nonumber\\
	&\stackrel{(\zeta_2)}{\leq} (1+3\delta_{2k}^2) \frob{U_{\perp}^{\dag} \Uo_{1:i}\So_{1:i}}^2 + 3(1+\delta_{2k}^2) \frob{\Uo_{i+1:k}\So_{i+1:k}(\Vo_{i+1:k})^\dag}^2 \nonumber\\
	&\leq 8k (\sigma_{i+1}^*)^2,
\label{eq:sam_err_bc_t_2}
\end{align}
where $(\zeta_1)$ follows from \eqref{eqn:updateeqn-stagewise} and $(\zeta_2)$ follows from \eqref{eqn:frob-G-bound}.
Using \eqref{eq:sam_err_bc_t_1} and \eqref{eq:sam_err_bc_t_2}, we obtain the following bound:
\begin{align}\label{eq:sam_err_bc_t}
  \left\|M - \Uw^T_{1:i}(\Vw^T_{1:i})^{\dag}\right\|_F \leq 4\sqrt{k} \sigma_{i+1}^*.
\end{align}
Hence Proved.
\end{proof}
\section{Matrix Completion: Proofs}\label{app:mc}
\begin{proof}[Proof Of Theorem~\ref{thm:sampling-altmin-main}]
Using Theorem~\ref{thm:sampling-altmin-geomconv}, after $O(\log (1/\epsilon))$ iterations, we get: $$\dist(\Ut, \Uo)\leq \epsilon,\ \dist(\Vt, \Vo)\leq \epsilon.$$
Now, using  \eqref{eq:updatecomp_k}, the residual after $t$-th step is given by:
$$M-\Ut(\Vht)^\dag=(I-\Ut(\Ut)^\dag)M-\Ut F^\dag.$$
That is, 
$$\|M-\Ut(\Vht)^\dag\|_F\leq \|(I-\Ut(\Ut)^\dag)M\|_F+\|F\|_F\leq \sqrt{k}\|(I-\Ut(\Ut)^\dag)U^*\So\|_2+\|F\|_F\leq \sqrt{k}\so_1\dist(\widehat{U}^t, \Uo).$$
Now, using the fact that $\dist(\Ut, \Uo)\leq \epsilon$ and the above equation, we get: 
$$\|M-\Ut(\Vht)^\dag\|_F\leq \sqrt{k}\so_1\epsilon+\|F\|_F\stackrel{\zeta_1}{\leq}\sqrt{k}\so_1\epsilon+\so_1\sqrt{k}\epsilon\leq 2\so_1\sqrt{k}\epsilon,$$
where $\zeta_1$ follows by Lemma~\ref{lemma:Fbound_comp} and setting $\delta_{2k}$ appropriately. Theorem \ref{thm:sampling-altmin-main} now follows by setting $\epsilon'=2\sqrt{k}\|M\|_F\epsilon$.
\end{proof}
\subsection{Initialization: Proofs}\label{app:comp_init}
\begin{proof}[Proof Of Lemma~\ref{lemma:svd-clipping-jointguarantee}]
  From Lemma \ref{lemma:first-step-svd}, we see that $U^0$ obtained after step $3$ of Algorithm~\ref{algo:completion-altmin} satisfies: $\dist\left(U^{0},U^*\right)\leq \frac{1}{64 k}$. Lemma now follows by using the above mentioned observation with Lemma~\ref{lemma:clipping-incoherence-distance}.\end{proof}
We now provide the two results used in the above lemma. 
\begin{lemma}\label{lemma:first-step-svd}
After step $3$ in Algorithm \ref{algo:completion-altmin}, whp we have:
  \begin{align*}
    \dist\left(U^{0},U^*\right) \leq \frac{1}{64 k}
  \end{align*}
\end{lemma}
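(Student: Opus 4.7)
The plan is to show $U^0$ is close to $U^*$ by comparing the top-$k$ SVD of $\tfrac{1}{p}P_{\Omega_0}(M)$ with $M$ itself, via a Wedin-style $\sin\Theta$ perturbation argument, and to control the perturbation using Theorem~\ref{thm:KOM-firststepSVD}.

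More precisely, let $M_k \eqdef U^0 \Sigma^0 (V^0)^\dag$ denote the best rank-$k$ approximation of $\tfrac{1}{p}P_{\Omega_0}(M)$, so $U^0$ is exactly the top-$k$ left singular subspace. Applying Theorem~\ref{thm:KOM-firststepSVD} (after noting, via the Chernoff-bound remark, that under the sampling probability $p$ assumed in Theorem~\ref{thm:sampling-altmin-main} trimming is unnecessary w.h.p.), we get
\[
  \twonorm{M - M_k} \;\leq\; C\sqrt{\frac{k}{p\sqrt{mn}}}\,\frob{M} \;\leq\; C\,\sqrt{\frac{k^2}{p\sqrt{mn}}}\,\so_1,
\]
using $\frob{M}\le \sqrt{k}\,\so_1$.

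Next, since $U^0$ spans the column space of $M_k$, we have $(U^0_\perp)^\dag M_k = 0$, and since $M = U^*\Sigma^*(V^*)^\dag$ is rank-$k$,
\[
  (U^0_\perp)^\dag (M - M_k) \;=\; (U^0_\perp)^\dag U^* \Sigma^* (V^*)^\dag.
\]
Taking spectral norms, using that $V^*$ has orthonormal columns (so right-multiplication by $(V^*)^\dag$ does not change spectral norm), and using the elementary lower bound $\twonorm{A\Sigma^*} \geq \so_k \twonorm{A}$ for any $A$ (from $\twonorm{A\Sigma^* x}/\twonorm{x}\ge \so_k \twonorm{Ay}/\twonorm{y}$ with $y = \Sigma^* x$), we conclude
\[
  \twonorm{M - M_k} \;\geq\; \twonorm{(U^0_\perp)^\dag (M - M_k)} \;\geq\; \so_k\,\twonorm{(U^0_\perp)^\dag U^*} \;=\; \so_k\,\dist(U^0, U^*).
\]

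Combining the two bounds gives $\dist(U^0, U^*) \leq C\,\frac{\so_1}{\so_k}\,\sqrt{k^2/(p\sqrt{mn})}$. Plugging in the lower bound on $p$ from Theorem~\ref{thm:sampling-altmin-main} (in particular the dependence on $(\so_1/\so_k)^2$, $k^{2.5}$, and $\log n$) makes the right-hand side at most $\tfrac{1}{64k}$. The main obstacle is a clean invocation of Theorem~\ref{thm:KOM-firststepSVD}: its statement is for the trimmed version $T_r(P_\Omega(M))$, so one must verify via a standard Chernoff/union-bound argument that for the chosen $p$, no row or column of $P_{\Omega_0}(M)$ has an abnormally large number of observed entries w.h.p., making trimming vacuous. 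Once past this bookkeeping, the remaining Wedin-type step is a short spectral-norm computation.
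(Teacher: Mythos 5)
Your proposal is correct and follows essentially the same path as the paper's proof: both invoke Theorem~\ref{thm:KOM-firststepSVD} to control $\twonorm{M - M_k}$, both obtain the Wedin-style lower bound $\twonorm{M - M_k}\geq \so_k\,\dist(U^0,U^*)$ (your left-multiplication by $(U^0_\perp)^\dag$ is just a slightly tidier way of stating the paper's orthogonal-decomposition step, since projecting onto the complement of $U^0$ never increases spectral norm), and both plug in the assumed lower bound on $p$ to close the argument. The trimming/Chernoff remark you flag is exactly the bookkeeping the paper handles with the same observation.
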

\begin{proof}
  From Theorem 3.1 in \cite{KeshavanOM2009}, we have the following result:
  \begin{align*}
    \twonorm{M - M_k} \leq C \left(\frac{k}{p\sqrt{mn}}\right)^{\frac{1}{2}}\|M\|_F.
  \end{align*}
Let $U^{(0)}\Sigma V^{\dag}$ be the top $k$ singular components of $M_k$.
We also have:
\begin{align*}
  \twonorm{M - M_k}^2 &= \twonorm{U^* \Sigma^* (V^*)^{\dag} - U^{(0)}\Sigma V^{\dag}}^2\\
	&= \left\|U^* \Sigma^* (V^*)^{\dag} - U^{(0)}\left(U^{(0)}\right)^{\dag}U^* \Sigma^* (V^*)^{\dag}
	  + U^{(0)}\left(U^{(0)}\right)^{\dag}U^* \Sigma^* (V^*)^{\dag} - U^{(0)}\Sigma V^{\dag}\right\|_2^2 \\
	&= \left\|\left(I-U^{(0)}\left(U^{(0)}\right)^{\dag}\right)U^* \Sigma^* (V^*)^{\dag}
	  + U^{(0)} \left(\left(U^{(0)}\right)^{\dag}U^* \Sigma^* (V^*)^{\dag} - \Sigma V^{\dag}\right)\right\|_2^2 \\
	&\stackrel{(\zeta_1)}{\geq} \twonorm{\left(I-U^{(0)}\left(U^{(0)}\right)^{\dag}\right)U^* \Sigma^* (V^*)^{\dag} }^2 = \twonorm{\left(U_{\perp}^{(0)}\right)^{\dag}U^* \Sigma^*}^2 \geq \left(\sigma_k^*\right)^2 \twonorm{\left(U_{\perp}^{(0)}\right)^{\dag}U^*}^2,
\end{align*}
where $(\zeta_1)$ follows from the fact that the column space of the first two terms in the equation is $U_{\perp}^{(0)}$ where as the column space
of the last two terms is $U^{(0)}$. Using the above two inequalities, we get:
\begin{align*}
  \twonorm{\left(U_{\perp}^{(0)}\right)^{\dag}U^*} \leq C\cdot\frac{\so_1}{\sigma_k^*} \cdot \frac{k}{\sqrt{mp}}
	\leq \frac{1}{10^4 k},
\end{align*}
if $p > \frac{C'k^4\log n}{m}\cdot \frac{(\so_1)^2}{(\so_k)^2}$ for a large enough constant $C'$.
\end{proof}
\begin{lemma}\label{lemma:clipping-incoherence-distance}(Analysis of step $4$ of Algorithm \ref{algo:completion-altmin})
  Suppose $U^*$ is incoherent with parameter $\mu$ and $U$ is an orthonormal column matrix such that $\dist\left(U,U^*\right) \leq \frac{1}{64k}$.
Let $U^c$ be obtained from $U$ by setting all entries greater than $\frac{2\mu\sqrt{k}}{\sqrt{n}}$  to zero. Let $\widetilde{U}$ be an orthonormal basis of $U^c$. Then,\vspace*{-4pt}\setlength{\topsep}{-5pt}\setlength{\itemsep}{-5pt}
\begin{itemize}\setlength{\topsep}{-5pt}\setlength{\itemsep}{-5pt}
  \item	$\dist\left(\widetilde{U},U^*\right) \leq 1/2$ and
  \item	$\widetilde{U}$ is incoherent with parameter $4\mu \sqrt{k}$.\vspace*{-5pt}
\end{itemize}
\end{lemma}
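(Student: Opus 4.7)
The proof plan is to first establish, structurally, that $U$ has small entries everywhere (inherited from the near-alignment of $U$ with the incoherent $U^*$), and then to argue that clipping at the chosen threshold removes at most a negligible amount of Frobenius mass, so that both the column-span distance bound and the row-norm (incoherence) bound go through.

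Concretely, I will start from the decomposition $U = U^* A + U_{\perp}^* B$ with $A := {U^*}^\dag U$ and $B := {U_{\perp}^*}^\dag U$. Orthonormality of $U$ forces $A^\dag A + B^\dag B = I_k$, so $\|A\|_2 \leq 1$ and $\|B\|_2 = \dist(U,U^*) \leq \alpha := \frac{1}{64k}$. Applied to the $i$-th row, this yields $\|u^{(i)}\|_2 \leq \|A\|_2 \cdot \|u^{*(i)}\|_2 + \|B\|_2 \leq \frac{\mu\sqrt{k}}{\sqrt{m}} + \alpha$, and in particular every entry satisfies $|U_{ij}| \leq \|u^{(i)}\|_2$.

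Next I will bound the total clipped mass $\|U - U^c\|_F^2 = \sum_{(i,j): |U_{ij}| > \tau} U_{ij}^2$, where $\tau := \frac{2\mu\sqrt{k}}{\sqrt{n}}$. The previous step gives a uniform bound on per-entry magnitudes, and Markov's inequality applied within each row bounds the number of entries above $\tau$ by $\|u^{(i)}\|_2^2/\tau^2$; combining, $\|U - U^c\|_F^2$ is $o(1)$ in the parameter regime of the lemma. From here the two conclusions follow in parallel. For the distance claim, since $\widetilde{U}$ is an orthonormal basis for the span of $U^c$, $\dist(\widetilde{U}, U^*) = \dist(U^c, U^*)$; a standard perturbation inequality for projections gives $\dist(U^c, U^*) \leq \dist(U, U^*) + O(\|U - U^c\|_2) \leq \alpha + o(1) \leq \tfrac{1}{2}$. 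For the incoherence claim, by clipping $|U^c_{ij}| \leq \tau$, hence $\|(u^c)^{(i)}\|_2 \leq \sqrt{k}\,\tau = \frac{2\mu k}{\sqrt{n}}$. Writing $U^c = \widetilde{U} R$ via QR, Weyl's inequality gives $\sigma_{\min}(U^c) \geq 1 - \|U - U^c\|_2 \geq 1/2$, so $\|R^{-1}\|_2 \leq 2$; therefore $\|\widetilde{u}^{(i)}\|_2 \leq 2 \|(u^c)^{(i)}\|_2 \leq \frac{4\mu k}{\sqrt{n}} \leq \frac{4\mu k}{\sqrt{m}}$ using $n \geq m$, which is exactly $4\mu\sqrt{k}$-incoherence.

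The main obstacle will be the clipped-mass estimate $\|U - U^c\|_F$. A naive bound that sums $\|u^{(i)}\|_2^2$ over all rows with at least one clipped entry can be as large as $\|U\|_F^2 = k$, which is useless. The right approach is to exploit that the per-entry upper bound $\mu\sqrt{k}/\sqrt{m} + \alpha$ is already close to the threshold $\tau$ in the relevant regime: either essentially no entry exceeds $\tau$ (in which case $U^c = U$ and the lemma is immediate), or the excess over $\tau$ is small and a per-row Markov-plus-aggregation argument controls the total removed mass. Closing this quantitative gap carefully is the crux of the proof.
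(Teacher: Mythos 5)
Your scaffolding (decompose $U$ using $U^*$ and $U_\perp^*$, bound the clipped Frobenius mass, then push through the span-distance and row-norm bounds via the QR factor $U^c=\widetilde{U}\Lambda^{-1}$) matches the paper's outline, and your distance and incoherence steps at the end are sound once you have a bound like $\|U-U^c\|_F\lesssim\sqrt{kd}$. The genuine gap is exactly where you flagged it, and the fix you sketch does not work.

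The problem is that the per-row bound $\|u^{(i)}\|_2\leq \mu\sqrt{k}/\sqrt{m}+\alpha$ with $\alpha=1/(64k)$ does not scale with $m$: the term $\alpha$ can dwarf both the incoherence level $\mu\sqrt{k}/\sqrt{m}$ and the threshold $\tau$. In that regime there is no "essentially no entry exceeds $\tau$" case, and Markov within a row gives at best $\|u^{(i)}\|_2^2/\tau^2$ clipped entries per row, each contributing up to $\|u^{(i)}\|_2^2$; aggregating over the $m$ rows yields a quantity that grows linearly (or worse) in $m$ and cannot be made $\leq 1/4$. Your own decomposition already contains the fix, but you discarded it by collapsing to a per-row norm bound. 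Keep the split $U=U^*A+U_\perp^*B$ at the entry level: each entry satisfies $|[U^*A]_{ij}|\leq\|u^{*(i)}\|_2\|A_j\|_2\leq\mu\sqrt{k}/\sqrt{m}$, so if an entry exceeds the threshold $2\mu\sqrt{k}/\sqrt{m}$ then the orthogonal contribution dominates, $|U_{ij}|\leq 2|[U_\perp^*B]_{ij}|$, and summing over clipped entries gives $\|U-U^c\|_F^2\leq 4\|U_\perp^*B\|_F^2=4\|B\|_F^2\leq 4k\alpha^2$. That is the missing quantitative step. The paper's proof implements this same idea column by column: for each column $u_i$ it picks the closest unit vector $\breve{u}_i$ in $\mathrm{Span}(U^*)$ (automatically $\mu\sqrt{k}$-incoherent), observes that any clipped coordinate has $|\breve{u}_i^j|\leq\mu\sqrt{k}/\sqrt{m}\leq|u_i^j-\breve{u}_i^j|$, hence $\|u_i^c-\breve{u}_i\|_2\leq\|u_i-\breve{u}_i\|_2\leq\sqrt{2}d$, and then deduces $\|u_i^{\overline{c}}\|_2\leq 2\sqrt{d}$ from the Pythagorean split $\|u_i^c\|_2^2+\|u_i^{\overline{c}}\|_2^2=1$. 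The $A/B$ route is a legitimate global version of this, but the "Markov-plus-aggregation" substitute is not; you must exploit that clipped entries are dominated by a component that is small \emph{in aggregate}, not merely that entry magnitudes are bounded.

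One smaller point to be careful about: the paper uses the normalization $\sqrt{m}$ (the number of rows of $U$) in the proof of this lemma while the algorithm and the lemma statement write $\sqrt{n}$; keep the incoherence scale and the clipping threshold on the same side (both $\sqrt{m}$) or the "threshold is twice the incoherence level" comparison that drives the whole argument does not go through cleanly.
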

\begin{proof}
  Since $\dist\left(U,U^*\right) \leq d$, we have that for every $i$, $\exists \breve{u}_i \in \textrm{Span}(U^*), \twonorm{\breve{u}_i}=1$ such that
 $\iprod{u_i}{\breve{u}_i} \geq \sqrt{1-d^2}$. Also, since $\breve{u}_i \in \textrm{Span}(U^*)$, we have that $\breve{u}_i$
 is incoherent with parameter $\mu \sqrt{k}$:
\begin{align*}
  \twonorm{\breve{u}_i}=1 \mbox{ and }
  \infnorm{\breve{u}_i} \leq \frac{\mu \sqrt{k}}{\sqrt{m}}.
\end{align*}
Let $u_i^c$ be the vector obtained by setting all the elements of $u_i$ with magnitude greater than $\frac{2\mu\sqrt{k}}{\sqrt{m}}$ to zero and let
$u_i^{\overline{c}} \eqdef u_i - u_i^c$. 
Now, note that if for element $j$ of $u_i$ we have $\left|u_{i}^j\right| > \frac{2\mu\sqrt{k}}{\sqrt{m}}$, then, $|(u_{i}^c)^j-\breve{u}_i^j|=\left|\breve{u}_i^j\right|\leq \frac{\mu\sqrt{k}}{\sqrt{m}} \leq \left|u_i^j - \breve{u}_i^j\right|$. Hence, 
\begin{align*}
  \twonorm{u_i^c - \breve{u}_i} \stackrel{(\zeta_1)}{\leq} \twonorm{u_i - \breve{u}_i}
	= \left(\twonorm{u_i}^2 + \twonorm{\breve{u}_i}^2 - 2 \iprod{u_i}{\breve{u}_i}\right)^{\frac{1}{2}} \leq \sqrt{2}d,
\end{align*}
This also implies the following:
\begin{align*}
  \twonorm{u_i^c} &\geq \twonorm{\breve{u}_i} - \sqrt{2}d = 1 - \sqrt{2}d\ \ \mbox{, and }\\
  \twonorm{u_i^{\overline{c}}} &\leq \sqrt{1 - \twonorm{u_i^c}^2} \leq \sqrt{2d(\sqrt{2}-d)} \leq 2 \sqrt{d},\ \  \mbox{ for } d < \frac{1}{\sqrt{2}}.
\end{align*}

Let $U^c = \widetilde{U}\Lambda^{-1}$ (QR decomposition). Then, for any $u_{\perp}^* \in \textrm{Span}(U_{\perp}^*)$ we have:
\begin{align*}
  \twonorm{\left(u_{\perp}^*\right)^{\dag} \widetilde{U}} &= \twonorm{\left(u_{\perp}^*\right)^{\dag} U^c \Lambda} \leq \twonorm{\left(u_{\perp}^*\right)^{\dag} U^c} \twonorm{\Lambda} \leq \left(\twonorm{\left(u_{\perp}^*\right)^{\dag} U} + \twonorm{\left(u_{\perp}^*\right)^{\dag} U^{\overline{c}} }\right) \twonorm{\Lambda} \\
	&\leq \left(d + \twonorm{ U^{\overline{c}} }\right)\twonorm{\Lambda}
	\leq \left(d + \frob{ U^{\overline{c}} }\right)\twonorm{\Lambda}
	\leq \left(d + 2\sqrt{kd} \right)\twonorm{\Lambda}
	\leq 3\sqrt{kd} \twonorm{\Lambda}.
\end{align*}
We now bound $\twonorm{\Lambda}$ as follows:
\begin{align*}
  \twonorm{\Lambda}^2 = \frac{1}{\sigma_{\textrm{min}}\left(\Lambda^{-1}\right)^2} 
	= \frac{1}{\sigma_{\textrm{min}}\left(\widetilde{U}\Lambda^{-1}\right)^2} 
	= \frac{1}{\sigma_{\textrm{min}}\left(U^c\right)^2} 
	\leq \frac{1}{1 - \twonorm{U^{\overline{c}}}^2} 
	\leq \frac{1}{1 - 4kd} \leq 4/3,
\end{align*}
where we used the fact that $d < \frac{1}{16k}$.
So we have:
\begin{align*}
  \twonorm{\left(u_{\perp}^*\right)^{\dag} \widetilde{U}} \leq  3 \sqrt{kd} \cdot 4/3 = 4\sqrt{kd}.
\end{align*}
This proves the first part of the lemma.

Incoherence of $\widetilde{U}$ follows using the following set of inequalities: 
\begin{align*}
\mu(\widetilde{U})=\frac{\sqrt{m}}{\sqrt{k}}\max_i \|e_i^\dag \widetilde{U}\|_2 \leq \frac{\sqrt{m}}{\sqrt{k}}\max_i \|e_i^\dag U^c \Lambda\| \leq \frac{\sqrt{m}}{\sqrt{k}}\max_i \|e_i^\dag U^c\|_2\|\Lambda\|_2\leq 4\mu \sqrt{k}. 
\end{align*}
\end{proof}
\subsection{Rank-$1$ Matrix Completion: Proofs}\label{app:rank1_comp}
\begin{proof}[Proof Of Lemma~\ref{lem:errb_comp1}]
Using the definition of spectral norm, 
\begin{align*}
  \|B^{-1}\left(\ip{\uo}{\ut}B-C\right)\vo\|_2\leq \|B^{-1}\|_2\|(\ip{\uo}{\ut}B-C)\vo\|_2.
\end{align*}
As $B$ is a diagonal matrix, $\|B^{-1}\|_2= \frac{1}{\min_iB_{ii}}\leq \frac{1}{1-\delta_2}$, where the last inequality follows using Lemma~\ref{lem:errb_comp2}. The lemma now follows using the above observation and Lemma~\ref{lem:bccompbound}.
\end{proof}
\begin{lemma}\label{lem:errb_comp2}
Let $M=\so\uo(\vo)^\dag$, $p$, $\Omega$, $\ut$ be as defined in Lemma~\ref{lem:errb_comp1}. 
Then, w.p. at least $1-\frac{1}{n^3}$,
$$\left|\frac{\sum_{i:(i,j)\in \Omega} (\ut_i)^2}{p}-1\right| \leq \delta_2,\ \left|\frac{\sum_{i:(i,j)\in \Omega} \ut_i \uo_i}{p}-\ip{\ut}{\uo}\right|\leq \delta_2.$$ 
\end{lemma}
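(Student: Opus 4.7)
The plan is to apply Bernstein's inequality (Lemma~\ref{lemma:bernstein}) separately to each column sum, and then take a union bound over the $n$ columns. Fix a column index $j$. Since each entry of $\Omega$ is sampled independently with probability $p$, we can write
\[
\frac{1}{p}\sum_{i:(i,j)\in\Omega}(\ut_i)^2 \;=\; \frac{1}{p}\sum_{i=1}^m \delta_{ij}\,(\ut_i)^2,
\]
where $\delta_{ij}\sim\mathrm{Ber}(p)$ are independent. The mean of the right-hand side is $\sum_i (\ut_i)^2 = 1$ since $\twonorm{\ut}=1$. Each summand $\delta_{ij}(\ut_i)^2/p$ is bounded in magnitude by $L \eqdef \|\ut\|_\infty^2/p \leq \mu_1^2/(p\,m)$ using the assumed $\mu_1$-incoherence of $\ut$, and the sum of variances satisfies
\[
\sum_{i=1}^m \Var\!\left(\delta_{ij}(\ut_i)^2/p\right) \;\leq\; \frac{1}{p}\sum_{i=1}^m (\ut_i)^4 \;\leq\; \frac{\|\ut\|_\infty^2}{p}\,\|\ut\|_2^2 \;\leq\; \frac{\mu_1^2}{p\,m}.
\]

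First I would plug these two bounds into Bernstein's inequality with deviation $t=\delta_2$. This gives a failure probability of
\[
2\exp\!\left(-\frac{\delta_2^2/2}{\mu_1^2/(p\,m)\,(1+\delta_2/3)}\right)
\;\leq\; 2\exp\!\left(-c\,\frac{p\,m\,\delta_2^2}{\mu_1^2}\right).
\]
Using the lower bound on $p$ from Theorem~\ref{thm:sampling-altmin-main} (which, in the rank-$1$ case, gives $p \geq C\,\mu_1^2\log n/(m\,\delta_2^2)$ for a sufficiently large constant $C$), this is at most $n^{-4}$. A union bound over $j\in[n]$ then yields the first inequality with probability at least $1-n^{-3}$.

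Next I would run the same argument on $\frac{1}{p}\sum_i \delta_{ij}\ut_i\uo_i$, whose mean is $\ip{\ut}{\uo}$. Here incoherence of both $\ut$ and $\uo$ gives the uniform bound $|\ut_i\uo_i|/p \leq \mu\,\mu_1/(p\,m)$, and the sum of variances is bounded via
\[
\frac{1}{p}\sum_i (\ut_i)^2(\uo_i)^2 \;\leq\; \frac{\|\uo\|_\infty^2}{p}\,\|\ut\|_2^2 \;\leq\; \frac{\mu^2}{p\,m}.
\]
Bernstein's inequality, together with the same choice of $p$ and a union bound over $j$, gives the second inequality. The only real subtlety is choosing the deviation parameter and making sure the constants in $p$ line up with the $\delta_2$ that appears on the right-hand side; this is routine once the variance and uniform bounds above are in hand. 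I do not anticipate any obstacle beyond this bookkeeping, since the diagonal structure of $B$ and $C$ in the rank-$1$ case decouples the problem into independent scalar Bernstein bounds.
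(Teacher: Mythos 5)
Your proposal is correct and takes essentially the same approach as the paper: fix a column $j$, write the sum as a sum of independent Bernoulli-weighted terms, bound the summands and the variance via incoherence, apply Bernstein's inequality (Lemma~\ref{lemma:bernstein}), and union bound over $j$. The only cosmetic difference is that you prove both inequalities separately, whereas the paper proves only the second and observes the first follows by the same argument with $\uo$ replaced by $\ut$.
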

\begin{proof}
Since the first part of the lemma is a direct consequence of the second part, we will prove only the second part.
Let $\delta_{ij}$ be a Bernoulli random variable that indicates membership of index $(i,j)\in \Omega$. That is, $\delta_{ij}=1$ w.p. $p$ and $0$ otherwise. Define $Z_j=\frac{1}{p}\sum_{i} \delta_{ij} \ut_i \uo_i$. Note that $\mathbb{E}[Z_j]=\ip{\ut}{\uo}$. Furthermore, $\mathbb{E}[Z_j^2]=\left(\frac{1}{p}-1\right)\sum_{i}(\ut_i \uo_i)^2\leq \frac{\mu_1^2}{mp}$ and $\max_i |\ut_i \uo_i|\leq \frac{\mu_1^2}{m}$. Using Bernstein's inequality, we get:  
\begin{align}
  \Pr(\left|Z_j-\ip{\ut}{\uo}\right|>\delta_2)\leq \exp\left(-\frac{\delta_2^2mp/2}{\mu_1^2+\mu_1^2\delta_2/3}\right). 
\end{align}
Using union bound (for all $j$) and for $p\geq \frac{9\mu_1^2\log n}{m\delta_2^2}$, w.p. $1-\frac{1}{n^3}$: $\forall j, \ip{\ut}{\uo}-\delta_2\leq Z_j \leq \ip{\ut}{\uo}+\delta_2$.
\end{proof}
\begin{lemma}\label{lem:bccompbound}
Let $M=\so\uo(\vo)^\dag$, $p$, $\Omega$, $\ut$ be as defined in Lemma~\ref{lem:errb_comp1}. 
Then, w.p. at least $1-\frac{1}{n^3}$,
$$\|(\ip{\uo}{\ut}B-C)\vo\|_2\leq \delta_2\sqrt{1-\ip{\uo}{\ut}^2}. $$
\end{lemma}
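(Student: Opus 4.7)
The proof hinges on recognizing that since both $B$ and $C$ are diagonal matrices (by \eqref{eq:bc_comp}), so is $D := \ip{\uo}{\ut}\,B - C$, with
\[
D_{jj} \;=\; \frac{1}{p}\sum_{i:(i,j)\in\Omega}\ut_i\bigl(\ip{\uo}{\ut}\ut_i - \uo_i\bigr) \;=\; \frac{1}{p}\sum_{i:(i,j)\in\Omega}\ut_i\, w_i,
\]
where I introduce the residual vector $w := \ip{\uo}{\ut}\,\ut - \uo$. Two elementary identities do most of the work: (i) $\ip{\ut}{w} = \ip{\uo}{\ut}\|\ut\|_2^2 - \ip{\uo}{\ut} = 0$, because $\ut$ is a unit vector, so that $\mathbb{E}[D_{jj}] = \sum_i \ut_i w_i = \ip{\ut}{w} = 0$; and (ii) $\|w\|_2^2 = \ip{\uo}{\ut}^2 - 2\ip{\uo}{\ut}^2 + 1 = 1 - \ip{\uo}{\ut}^2$, which is exactly the quantity appearing on the RHS of the lemma.

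Since $\vo$ is a unit vector, I would bound $\|D\vo\|_2 \le \|D\|_2\|\vo\|_2 = \max_j |D_{jj}|$, reducing everything to a uniform concentration bound on the $n$ independent scalar sums $D_{jj}$. For each fixed $j$, $D_{jj} = \frac{1}{p}\sum_i \delta_{ij}\,\ut_i w_i$ where $\delta_{ij}$ are independent Bernoulli$(p)$. Using $\mu_1$-incoherence of $\ut$ (and hence $|w_i| \le |\ut_i| + |\uo_i| \le (\mu_1+\mu)/\sqrt{m}$), each summand is bounded by $L := \mu_1(\mu_1+\mu)/(mp)$ in absolute value, and the total variance is at most $\frac{1}{p}\sum_i \ut_i^2 w_i^2 \le \frac{\mu_1^2}{mp}\|w\|_2^2$.

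With these parameters in hand, I would invoke Bernstein's inequality (Lemma~\ref{lemma:bernstein}) with deviation $t = \delta_2\|w\|_2$, yielding
\[
\Pr\bigl[|D_{jj}| > \delta_2\|w\|_2\bigr] \;\le\; 2\exp\!\Bigl(-\tfrac{\delta_2^2\|w\|_2^2/2}{\mu_1^2\|w\|_2^2/(mp) + L\delta_2\|w\|_2/3}\Bigr),
\]
and then union-bound over $j\in[n]$. Combined with the assumption $p \ge C\mu_1^2\log n/(m\delta_2^2)$ from Theorem~\ref{thm:sampling-altmin-main}, this drives the failure probability below $1/n^3$, and the lemma follows. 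The main point that needs care is the dual-regime behavior of Bernstein: for $\|w\|_2$ not too small, the sub-Gaussian variance term dominates and the calculation is routine; in the regime $\|w\|_2 \ll \delta_2$ the linear ``max'' term in the denominator dominates, and closing the argument there requires exploiting the sharper pointwise bound $|w_i|\le \|w\|_2$ in concert with incoherence (or a short two-case argument). Confirming that the sampling rate afforded by Theorem~\ref{thm:sampling-altmin-main} handles both regimes is the only delicate step.
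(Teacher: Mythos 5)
Your proposal takes a genuinely different route from the paper's proof, and in fact the route you chose has a gap that you correctly sense but do not actually close.

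The paper does \emph{not} do per-coordinate Bernstein plus union bound over $j$. Instead it writes $\|(\ip{\uo}{\ut}B-C)\vo\|_2 = \max_{\|x\|_2=1} x^\dag(\ip{\uo}{\ut}B-C)\vo = \max_x \tfrac{1}{p}\sum_{(i,j)\in\Omega} a_i\,b_j^{(x)}$, with $a_i := \ut_i w_i$ (so $\sum_i a_i = \ip{\ut}{w}=0$, the same zero-mean identity you noticed) and $b_j^{(x)} := x_j\vo_j$, and then invokes Lemma~\ref{lem:komlem} (the modified Lemma 6.1 of \cite{KeshavanOM2009}). That lemma is a \emph{uniform} high-probability event on $\Omega$: once it holds, the estimate $\tfrac1p\sum_{(i,j)\in\Omega} a_i b_j \le \tfrac{C}{p}\sqrt{\sqrt{mn}\,p}\,\|a\|_2\|b\|_2$ is deterministic and valid simultaneously for all $a$ with $\sum_i a_i=0$ and all $b$. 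Because the bound scales linearly with $\|a\|_2 \le (\mu_1/\sqrt m)\|w\|_2$, it degrades gracefully as $\|w\|_2\to 0$, and exploits the incoherence of $\vo$ through $\|b^{(x)}\|_2 \le \mu/\sqrt n$. No fresh concentration is needed per coordinate.

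Your approach instead bounds $\|D\vo\|_2 \le \max_j|D_{jj}|$ (already discarding the incoherence of $\vo$) and then applies Bernstein to each $D_{jj}$ with deviation target $t = \delta_2\|w\|_2$. This is where the problem lies. Even after upgrading the summand bound to $L \le \mu_1\|w\|_2/(p\sqrt m)$ via $|w_i|\le\|w\|_2$, the Bernstein exponent becomes
\[
\frac{\delta_2^2/2}{\,\mu_1^2/(mp)\;+\;\mu_1\delta_2/(3p\sqrt m)\,},
\]
which is indeed independent of $\|w\|_2$, so the issue is not really ``small $\|w\|_2$'' but rather the relative size of $\delta_2$ and $\mu_1/\sqrt m$. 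When $\delta_2 > 3\mu_1/\sqrt m$ (which is the generic regime for large $m$: the paper uses $\delta_2\le 1/12$ and $\mu_1\ge 1$, so this holds once $m$ is on the order of $10^4$ or larger), the linear term dominates and the exponent is $\asymp \delta_2 p\sqrt m/\mu_1$. Requiring this to exceed $4\log n$ forces $p \gtrsim \mu_1\log n/(\delta_2\sqrt m)$, which is \emph{larger} than the hypothesis $p\gtrsim \mu_1^2\log n/(m\delta_2^2)$ by a factor of $\delta_2\sqrt m/\mu_1 > 3$, and that factor grows without bound as $m\to\infty$. So the sampling rate afforded by Theorem~\ref{thm:sampling-altmin-main} does \emph{not} handle this regime, and the ``short two-case argument'' you allude to does not exist within the per-coordinate Bernstein framework. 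The uniform bound of Lemma~\ref{lem:komlem} is what makes the proof go through; it is not an optional cosmetic choice.

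Your structural observations are good and match the paper's implicit reasoning --- the residual $w = \ip{\uo}{\ut}\ut - \uo$, the identity $\ip{\ut}{w}=0$, and $\|w\|_2^2 = 1-\ip{\uo}{\ut}^2$ are exactly the right objects. But you should replace the per-coordinate Bernstein with the Keshavan--Montanari--Oh style spectral bound on the sampling operator to obtain the claimed dependence on $p$.
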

\begin{proof}
Let $x\in \rn$ be a unit vector. Then, $\forall x$: 
\begin{align}
x^\dag(\ip{\uo}{\ut}B-C)\vo&=\frac{1}{p}\sum_{ij\in \Omega}x_j\vo_j(\ip{\uo}{\ut}(\ut_i)^2-\ut_i\uo_i)\nonumber\\
&\stackrel{(\zeta_1)}{\leq} \frac{1}{p}C\sqrt{mp}\sqrt{\sum_j x_j^2(\vo_j)^2}\sqrt{\sum_i(\ip{\uo}{\ut}(\ut_i)^2-\ut_i\uo_i)^2}, \nonumber\\
&\stackrel{(\zeta_2)}{\leq}\frac{1}{p}C\frac{\sqrt{mp}\mu_1^2}{n} \sqrt{1-\ip{\uo}{\ut}^2},
\end{align}
where $C>0$ is a global constant and $(\zeta_1)$ follows by using a modified version of Lemma 6.1 by \cite{KeshavanOM2009} (see Lemma~\ref{lem:komlem}) and $(\zeta_2)$ follows by using incoherence of $\vo$ and $\ut$. Lemma now follows by observing that $\max_{x, \|x\|_2=1} x^\dag(\ip{\uo}{\ut}B-C)\vo=\|(\ip{\uo}{\ut}B-C)\vo\|_2$ and $p>\frac{C\mu_1^2\log n}{m\delta_2^2}$.
\end{proof}
\begin{proof}[Proof of Lemma~\ref{lem:incoherence_vt}]
Using \eqref{eq:compvtupdate} and using the fact that $B, C$ are diagonal matrices:
$$\widehat{v}^{t+1}_j=\so\ip{\ut}{\uo}\vo_j-\frac{\so}{B_{jj}} \left(\ip{\ut}{\uo}B_{jj}-C_{jj}\right)\vo_j.$$
We bound the largest magnitude of elements in $\widehat{v}^{t+1}$ as follows. For every $j\in[n]$, we have:
\begin{align*}
  \left|\widehat{v}^{t+1}_j\right| &\leq \left|\so\ip{\ut}{\uo}\vo_j\right| + \left|\frac{\so}{B_{jj}} \left(\ip{\ut}{\uo}B_{jj}-C_{jj}\right)\vo_j\right| \\
	&\stackrel{(\zeta_1)}{\leq} \so \ip{\ut}{\uo} \frac{\mu}{\sqrt{n}} + \frac{\so}{1-\delta_2} \left(\ip{\ut}{\uo} \left(1+\delta_2\right) + \left(\ip{\ut}{\uo}+\delta_2\right)\right) \frac{\mu}{\sqrt{n}} \\
	&\leq \frac{\frac{3\so(1+\delta_2)\mu}{1-\delta_2}}{\sqrt{n}} \leq \frac{\so\mu_1}{2\sqrt{n}},
\end{align*}
where $(\zeta_1)$ follows from the fact that $1-\delta_2 \leq B_{jj} \leq 1+\delta_2$ and $\left|C_{jj}\right| \leq \left(\left|\ip{\ut}{\uo}\right|+\delta_2\right)$ (please refer Lemma \ref{lem:errb_comp2}).

Also, from \eqref{eq:ipvtvo_comp} we see that:
\begin{align*}
  \twonorm{\widehat{v}^{t+1}} \geq \ip{\widehat{v}^{t+1}}{\vo} &\geq \so\ip{\ut}{\uo}-2\so\delta_2\sqrt{1-\ip{\ut}{\uo}^2} \\
	&\geq \so\ip{u^0}{\uo}-2\so\delta_2\sqrt{1-\ip{u^0}{\uo}^2} \\
	&\stackrel{(\zeta_1)}{\geq} \frac{\so}{2},
\end{align*}
where $(\zeta_1)$ follows from the fact that $\dist\left(u^0,u^*\right) \leq \frac{3}{50}$ (please refer Lemma \ref{lemma:svd-clipping-jointguarantee}).
Using the above two inequalities, we obtain:
\begin{align*}
  \infnorm{v^{t+1}} = \frac{\infnorm{\widehat{v}^{t+1}}}{\twonorm{\widehat{v}^{t+1}}} \leq \frac{\left(\frac{\so\mu_1}{2\sqrt{n}}\right)}{\left(\frac{\so}{2}\right)}
	= \frac{\mu_1}{\sqrt{n}}.
\end{align*}
This finishes the proof.
\end{proof}

\begin{lemma}[Modified version of Lemma 6.1 of \cite{KeshavanOM2009}]
\label{lem:komlem}
Let $\Omega$ be a set of indices sampled uniformly at random from $[m]\times[n]$ with each element of $[m]\times[n]$ sampled
independently with probability $p\geq \frac{C\log n}{m}$. Then, w.p. at least $1-\frac{1}{n^3}$, $\forall x \in \mathbb{R}^m, y \in \rn$ s.t. $\sum_i x_i=0$, we have: 
$\sum_{ij\in \Omega}x_iy_j\leq C\sqrt{\sqrt{mn}p}\|x\|_2\|y\|_2,$ where $C>0$ is a global constant. 
\end{lemma}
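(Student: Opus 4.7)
The plan is to follow the proof of Lemma 6.1 of \cite{KeshavanOM2009} essentially verbatim, observing only that the mean-zero condition $\sum_i x_i = 0$ is exactly what is needed to center the sum. Let $\delta_{ij}$ be i.i.d.\ $\text{Bernoulli}(p)$ indicators for $(i,j) \in \Omega$. Then
\begin{equation*}
\mathbb{E}\Bigl[\sum_{(i,j)\in\Omega} x_i y_j\Bigr] \;=\; p\Bigl(\sum_i x_i\Bigr)\Bigl(\sum_j y_j\Bigr) \;=\; 0,
\end{equation*}
so it suffices to bound $S(x,y) := \sum_{i,j}(\delta_{ij} - p)\, x_i y_j$ uniformly over unit vectors $x \in \mathbb{R}^m$ with $\sum_i x_i = 0$ and $y \in \mathbb{R}^n$.

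First I would handle a fixed pair $(x,y)$. Since $S(x,y)$ is a sum of $mn$ independent centered random variables of variance at most $p$ and magnitudes at most $\|x\|_\infty\|y\|_\infty$, Bernstein's inequality (Lemma~\ref{lemma:bernstein}) gives
\begin{equation*}
\Pr\bigl(|S(x,y)| \geq t\bigr) \;\leq\; 2\exp\!\left(-\frac{t^2/2}{p \;+\; \|x\|_\infty\|y\|_\infty\, t/3}\right).
\end{equation*}
If $\|x\|_\infty = O(1/\sqrt{m})$ and $\|y\|_\infty = O(1/\sqrt{n})$ (the ``light'' case), then $\|x\|_\infty\|y\|_\infty = O(1/\sqrt{mn})$, and the variance term dominates the Bernstein denominator at the target level $t = C\sqrt{p\sqrt{mn}}$, leaving a Gaussian tail of the form $\exp(-C^2 \sqrt{mn}/2)$. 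Combined with a standard $(1/\sqrt{mn})$-net union bound over the unit spheres in $\mathbb{R}^m$ and $\mathbb{R}^n$ (of log-cardinality $O((m+n)\log(mn))$), the lower bound $p \geq C\log n / m$ is precisely what makes the tail beat the net size; a perturbation argument then transfers the bound from the net to all light vectors.

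The main obstacle is the heavy part of the decomposition $x = x_L + x_H$ (with $|(x_L)_i| \leq \alpha/\sqrt{m}$ and $|(x_H)_i| > \alpha/\sqrt{m}$), and similarly $y = y_L + y_H$. For fixed $x,y$, the Bernstein bound degrades once $\|x\|_\infty, \|y\|_\infty$ are large, so the heavy/heavy, heavy/light and light/heavy pieces must be controlled deterministically. The key observations are: (i) $x_H$ has support of size at most $m/\alpha^2$ (since $\|x\|_2 = 1$), and likewise for $y_H$; and (ii) for any small row subset $T \subseteq [m]$, a Chernoff-plus-union-bound argument shows that $|\Omega \cap (T \times [n])| \leq O(p|T|n + |T|\log n)$ simultaneously for all small $T$, w.h.p. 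Cauchy--Schwarz then yields a deterministic bound of the right order on the heavy contributions to $\sum_{(i,j)\in\Omega} x_iy_j$. The constraint $\sum_i x_i = 0$ matters in this bookkeeping because it rules out the worst-case ``spike'' $x = \mathbf{1}_m/\sqrt{m}$ (which would make the mean term $p\sum x_i\sum y_j$ of order $p\sqrt{mn}$ dominate the RHS); once the mean is annihilated by the zero-sum condition, the combinatorial heavy-part bound plus the Bernstein-plus-net bound on the light part together give the claim.
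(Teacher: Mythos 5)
The paper does not prove this lemma at all: it is stated with no proof, labeled a ``Modified version of Lemma 6.1 of \cite{KeshavanOM2009},'' and the only justification the paper gives is the Remark following Theorem~\ref{thm:KOM-firststepSVD}, namely that for $p \geq C\log n/m$ no row or column of $P_\Omega(\cdot)$ has too many observed entries w.h.p., so the KOM lemma (stated there for the \emph{trimmed} matrix $T_r(P_\Omega(\cdot))$) applies without trimming. Your proposal is therefore an attempt to reprove from scratch a result the paper simply imports.

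Your overall route (light/heavy decomposition; Bernstein plus a net for the light part; a combinatorial small-support discrepancy bound for the heavy part) is indeed the Friedman--Kahn--Szemer\'edi / Feige--Ofek template underlying KOM's Lemma 6.1, and your observation that $\sum_i x_i = 0$ annihilates the rank-one mean $p(\sum_i x_i)(\sum_j y_j)$, which would otherwise dominate, is correct and essential. But two of your steps are concretely off. First, a $(1/\sqrt{mn})$-net of the two spheres has log-cardinality $\Theta\left((m+n)\log(mn)\right)$, while the Bernstein tail at the target level $t = C\sqrt{p\sqrt{mn}}$ is $\exp\left(-\Theta(\sqrt{mn})\right)$, since the exponent $t^2/(2p) = C^2\sqrt{mn}/2$ is \emph{independent of $p$}; even for $m=n$ one has $\sqrt{mn}=n \ll n\log n$, so that union bound cannot close. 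One must instead use a constant-resolution net (or the FKS lattice discretization of the light vectors), of size $\exp\left(O(m+n)\right)$, and even then the tail beats the net only in the regime $m = \Theta(n)$ that KOM implicitly assume. Second, and consequently, the claim that ``the lower bound $p \geq C\log n/m$ is precisely what makes the tail beat the net size'' is a misdiagnosis: that hypothesis plays no role in the light-part union bound. Its actual role is exactly the one the paper's Remark identifies: it gives degree concentration (every row and column of $P_\Omega$ has $\Theta(pn)$ resp.\ $\Theta(pm)$ entries w.h.p.), which is what your heavy-part Chernoff-plus-union-bound step actually requires, and is what licenses dropping the trimming from KOM's original Lemma 6.1.
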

\subsection{General Rank-$k$ Matrix Completion: Proofs}\label{app:compk}
\begin{proof}[Proof of Lemma~\ref{lemma:completion-incoherence-Vt}]
  From the decoupled update equation, \eqref{eqn:sampling-updateeqn-decoupled}, we obtain:
\begin{align*}
    (\Vt)^{(j)} = (R^{(t+1)})^{-1}(D^j-(B^j)^{-1}(B^jD^j-C^j)) \Sigma^*(\Vo)^{(j)},\;\;\;\;  \;\; 1\leq j\leq n.
\end{align*}
We bound the two norm of the $(\Vt)^{(j)}$ as follows:
\begin{align*}
  \twonorm{(\Vt)^{(j)}}
	&\leq \frac{\sigma_1\twonorm{(\Vo)^{(j)}}}{\sigma_{\textrm{min}}\left(R^{(t+1)}\right)} \left( \twonorm{D^j} + \twonorm{(B^j)^{-1}(B^jD^j-C^j)} \right) \\
	&\leq \frac{\sigma_1\twonorm{(\Vo)^{(j)}}}{\sigma_{\textrm{min}}\left(R^{(t+1)}\right)} \left( \twonorm{D^j} + \frac{\twonorm{B^jD^j}+\twonorm{C^j}}{\sigma_{\textrm{min}}\left(B^j\right)} \right) \\
	  &\stackrel{(\zeta_1)}{\leq} \frac{\sigma_1 \frac{\mu \sqrt{k}}{\sqrt{n}}}{\sigma_k^*\sqrt{1-\dist^2\left(U^{(t)},U^*\right)}-\frac{\sigma_1^* \delta_{2k}k\dist(U^{(t)},U^*)}{1-\delta_{2k}}} \left( 1 + \frac{(1+\delta_{2k})+ (1+\delta_{2k}) }{1-\delta_{2k}} \right) \\
	  &\leq \frac{4\sigma_1 \frac{\mu \sqrt{k}}{\sqrt{n}}}{\sigma_k^*\sqrt{1-\dist^2\left(U^{(0)},U^*\right)}-\frac{\sigma_1^* \delta_{2k}k\dist(U^{(0)},U^*)}{1-\delta_{2k}}}
	  \leq \frac{\left(\frac{16\sigma_1^*\mu}{\sigma_k^*}\right)\sqrt{k}}{\sqrt{n}},
\end{align*}
where we used the following inequalities in $(\zeta_1)$:
\begin{align}
  \twonorm{(\Vo)^j} &\leq \frac{\mu \sqrt{k}}{\sqrt{n}}, \label{eqn:Vstar-incoherence-int} \\
  \sigma_{\textrm{min}}\left(R^{(t+1)}\right) &\geq 
	\sigma^*_k\sqrt{1-\dist^2\left(U^{(t)},U^*\right)}-\sigma^*_1 \delta_{2k}k\dist(U^{(t)},U^*),
		\label{eqn:Rsigmamin-int} \\
  \sigma_{\textrm{min}}\left(B^j\right) \geq 1-\delta_{2k} &\mbox{ and }
	\sigma_{\textrm{max}}\left(B^j\right) \leq 1+\delta_{2k}, \label{eqn:Bjeigenvaluebound-int} \\
  \sigma_{\textrm{max}}\left(C^j\right) &\leq 1+\delta_{2k} \mbox{ and } \label{eqn:Cjeigenvaluebound-int} \\
  \sigma_{\textrm{max}}\left(D^j\right) &\leq 1, \label{eqn:Djeigenvaluebound-int}
\end{align}
where
\eqref{eqn:Vstar-incoherence-int} follows from the incoherence of $\Vo$,
\eqref{eqn:Rsigmamin-int} follows from from an analysis similar to the proof of Lemma \ref{lemma:sensing-minsingval-Rt},
\eqref{eqn:Bjeigenvaluebound-int} follows from (the proof of) Lemma \ref{lemma:sampling-minsingval-Bl},
\eqref{eqn:Cjeigenvaluebound-int} follows from Lemma \ref{lemma:sampling-minsingval-Cjl} and finally
\eqref{eqn:Djeigenvaluebound-int} follows from the fact that $D^j = \left(U^t\right)^{\dag}U^*$ with
$U^t$ and $U^*$ being orthonormal column matrices.
\end{proof}
\begin{proof}[Proof of Lemma~\ref{lemma:Fbound_comp}]
  Note that,
\begin{align}
  \twonorm{F(\Sigma^*)^{-1}} \leq \frob{F(\Sigma^*)^{-1}} &= \twonorm{B^{-1} \left(BD-C\right)v^*} \nonumber\\
	&\leq \twonorm{B^{-1}}\twonorm{(BD-C)v^*} \nonumber\\
	&\leq \frac{\delta_{2k}}{1-\delta_{2k}}\dist(\Ut, \Uo),   \label{eqn:sampling-twonorm-F-1}
\end{align}
where the last inequality follows using Lemma~\ref{lemma:sampling-minsingval-Bl} and Lemma~\ref{lemma:sampling-minsingval-Cl}. 
\end{proof}
We now bound $\|B^{-1}\|_2$ and $\|C^j\|_2$, which is required by our bound for $F$ as well as for our incoherence proof. 
\begin{lemma}\label{lemma:sampling-minsingval-Bl}
Let $M, \Omega, p,$ and $\Ut$ be as defined in Theorem~\ref{thm:sampling-altmin-main} and Lemma~\ref{lemma:Fbound_comp}. 
Then, w.p. at least $1-\frac{1}{n^3}$: 
\begin{align}\label{eqn:sampling-minsingval-Bl}
\|B^{-1}\|_2\leq \frac{1}{1-\delta_{2k}}.
\end{align}
\end{lemma}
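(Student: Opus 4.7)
The plan is to exploit the block structure of $B$ that arises because, in the matrix completion case, each $B_{pq}$ is a diagonal matrix in $\mathbb{R}^{n\times n}$. Re-ordering the coordinates so that the $j$-th entries of all $k^2$ diagonal blocks sit together, $B$ becomes block diagonal with blocks $B^j \in \mathbb{R}^{k\times k}$, where $B^j = \frac{1}{p}\sum_{i:(i,j)\in\Omega} (\widehat{U}^t)^{(i)} ((\widehat{U}^t)^{(i)})^\dag$. Thus $\|B^{-1}\|_2 = \max_j \sigma_{\min}(B^j)^{-1}$, and the task reduces to showing that $\sigma_{\min}(B^j) \geq 1-\delta_{2k}$ for every $j\in[n]$ with high probability.

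For a fixed $j$, observe that $\mathbb{E}[B^j] = \sum_i (\widehat{U}^t)^{(i)}((\widehat{U}^t)^{(i)})^\dag = (\widehat{U}^t)^\dag \widehat{U}^t = I_k$ because $\widehat{U}^t$ has orthonormal columns. For any fixed unit vector $z\in \mathbb{R}^k$, the scalar quantity $z^\dag B^j z = \frac{1}{p}\sum_i \delta_{ij} \langle z,(\widehat{U}^t)^{(i)}\rangle^2$ is a sum of $m$ independent nonnegative random variables with mean $1$. By the $\mu_1$-incoherence of $\widehat{U}^t$ (as propagated from Lemma~\ref{lemma:completion-incoherence-Vt}), each term is bounded by $\frac{\mu_1^2 k}{mp}$ and the variance is bounded similarly. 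A direct application of Bernstein's inequality (Lemma~\ref{lemma:bernstein}) gives $|z^\dag B^j z - 1| \leq \delta_{2k}/2$ with probability $1-\exp(-\Omega(\delta_{2k}^2 m p / (\mu_1^2 k)))$.

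To promote this to a spectral-norm bound, I would take a standard $\epsilon$-net argument on the unit sphere in $\mathbb{R}^k$: choose a $\tfrac{\delta_{2k}}{4}$-net of cardinality at most $(12/\delta_{2k})^k$, apply the scalar bound to every net point, and then use the usual reduction $\|B^j - I\|_2 \leq 2\sup_{z\in\mathcal{N}}|z^\dag(B^j-I)z|$. Taking a union bound over the net and over all $n$ values of $j$, the failure probability is at most $n \cdot (12/\delta_{2k})^k \cdot \exp(-\Omega(\delta_{2k}^2 m p / (\mu_1^2 k))),$ which is below $1/n^3$ for our choice $p \gtrsim \mu_1^2 k \log n / (m \delta_{2k}^2)$ as guaranteed by Theorem~\ref{thm:sampling-altmin-main}. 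Thus $\|B^j - I\|_2 \leq \delta_{2k}$ for all $j$, which yields $\sigma_{\min}(B^j) \geq 1-\delta_{2k}$ and therefore $\|B^{-1}\|_2 \leq (1-\delta_{2k})^{-1}$.

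The main obstacle is handling the sampling concentration uniformly across all $n$ rows $j$ while only paying the standard $\mu_1^2 k \log n / m$ sample complexity; the combination of scalar Bernstein with a net argument suffices because the incoherence parameter $\mu_1$ of the current iterate $\widehat{U}^t$ has already been controlled in Lemma~\ref{lemma:completion-incoherence-Vt}, so the per-term magnitude and variance bounds are available for free. Everything else is bookkeeping.
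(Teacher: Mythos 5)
Your proposal takes essentially the same route as the paper: exploit the block-diagonal structure (in reordered coordinates) so that $\|B^{-1}\|_2 = \max_j \sigma_{\min}(B^j)^{-1}$, then apply scalar Bernstein concentration to $w^\dag B^j w$ using the $\mu_1$-incoherence of $\widehat{U}^t$, and finish with a union bound and the lower bound on $p$. The one place you are actually more careful than the paper is the explicit $\epsilon$-net step: the paper asserts the spectral bound for all unit $w$ via a vague "union bound" without discretizing the sphere, whereas your net argument correctly accounts for the extra $e^{O(k\log(1/\delta_{2k}))}$ factor (which is absorbed by the $k^{2.5}\log n$ in the sample complexity).
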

\begin{proof}[Proof of Lemma~\ref{lemma:sampling-minsingval-Bl}]
We have:
$$\|B^{-1}\|_2=\frac{1}{\sigma_{min}(B)}=\frac{1}{\min_{x, \|x\|=1}x^\dag B x},$$
where $x\in \mathbb{R}^{nk}$. Let $x=vec(X)$, i.e.,  $x_p$ is the $p$-th column of $X$ and $x^j$ is the $j$-th row of $X$. Now, $\forall x$,
$$x^\dag B x=\sum_{j} (x^j)^\dag B^j (x^j)\geq min_j \sigma_{min}(B^j).$$
Lemma would follow using the bound on $\sigma_{min}(B^j), \forall j$ that we show below. 

\noindent {\bf Lower bound on $\sigma_{min}(B^j)$}: Consider any $w\in \R^k$ such that $\twonorm{w}=1$.
We have:
\begin{align*}
  Z=w^\dag  B^j w = \frac{1}{p}\sum_{i:(i,j)\in \Omega} \ip{w}{(\Ut)^{(i)}}^2= \frac{1}{p}\sum_{i}\delta_{ij}\ip{w}{(\Ut)^{(i)}}^2.
\end{align*}
Note that, $\E[Z]=w^\dag UU^\dag w=w^\dag w=1$ and $\E[Z^2]=\frac{1}{p}\sum_{i}\ip{w}{(\Ut)^{(i)}}^4\leq \frac{\mu_1^2k}{mp}\sum_{i}\ip{w}{(\Ut)^{(i)}}^2=\frac{\mu_1^2k}{mp},$ where the second last inequality follows using incoherence of $\Ut$. Similarly, $\max_{i}|\ip{w}{(\Ut)^{(i)}}^2|\leq \frac{\mu_1^2k}{mp}$. Hence, using Bernstein's inequality: 
$$\Pr(|Z-\E[Z]|\geq \delta_{2k})\leq \exp(-\frac{\delta_{2k}^2/2}{1+\delta_{2k}/3}\frac{mp}{\mu_1^2k}).$$
That is, by using $p$ as in the statement of the lemma with the above equation and using union bound, we get (w.p. $>1-1/n^3$): $\forall w,j\ \ w^\dag B^j w \geq 1-\delta_{2k}$. That is, $\forall j, \sigma_{min}(B^j)\geq (1-\delta_{2k})$. 
\end{proof}
\begin{lemma}\label{lemma:sampling-minsingval-Cjl}
Let $M, \Omega, p,$ and $\Ut$ be as defined in Theorem~\ref{thm:sampling-altmin-main} and Lemma~\ref{lemma:Fbound_comp}. Also, let $C^j \in \mathbb{R}^{k\times k}$ be defined as: $C^j=\frac{1}{p}\sum_{i:(i,j)\in \Omega} (\Ut)^{(i)}{(\Uo)^{(i)}}^\dag$. Then, w.p. at least $1-\frac{1}{n^3}$: 
\begin{align}\label{eqn:sampling-minsingval-Cjl}
\|C^j\|_2\leq 1+\delta_{2k}, \forall j
\end{align}
\end{lemma}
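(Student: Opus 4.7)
The plan mirrors the proof of Lemma~\ref{lemma:sampling-minsingval-Bl} for $B^j$, except that $C^j$ is a bilinear form between $\Ut$ and $\Uo$, so we bound $w^\dag C^j z$ for pairs of unit vectors $w,z\in \R^k$ rather than just quadratic forms $w^\dag B^j w$. First I would write $C^j$ as a sum of independent mean-zero random matrices plus its expectation: let $\delta_{ij}$ be the Bernoulli indicator that $(i,j)\in\Omega$, so that
\begin{align*}
\E[C^j] \;=\; \sum_{i=1}^m (\Ut)^{(i)}\,{(\Uo)^{(i)}}^\dag \;=\; (\Ut)^\dag \Uo,
\end{align*}
and consequently $\|\E[C^j]\|_2 \leq \|\Ut\|_2\,\|\Uo\|_2 = 1$, since $\Ut$ and $\Uo$ are orthonormal. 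It therefore suffices to show $\|C^j - \E[C^j]\|_2 \leq \delta_{2k}$ with probability at least $1-1/n^4$ for each fixed $j$, and then to take a union bound over $j\in[n]$.

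For fixed unit vectors $w,z\in\R^k$, define the scalar random variables
\begin{align*}
Y_i \;\eqdef\; \tfrac{1}{p}\bigl(\delta_{ij}-p\bigr)\,\ip{w}{(\Ut)^{(i)}}\,\ip{z}{(\Uo)^{(i)}},
\end{align*}
so that $w^\dag(C^j-\E[C^j])z = \sum_{i=1}^m Y_i$, with $\E[Y_i]=0$. Using the incoherence of $\Ut$ (with parameter $\mu_1$) and $\Uo$ (with parameter $\mu$), together with $\|w\|_2=\|z\|_2=1$, I would bound
\begin{align*}
|Y_i| \;\leq\; \tfrac{1}{p}\cdot \tfrac{\mu_1\sqrt{k}}{\sqrt{m}}\cdot \tfrac{\mu\sqrt{k}}{\sqrt{m}} \;=\; \tfrac{\mu_1\mu k}{pm},
\qquad
\sum_{i=1}^m \E[Y_i^2] \;\leq\; \tfrac{1}{p}\cdot \tfrac{\mu_1^2 k}{m}\sum_{i=1}^m \ip{z}{(\Uo)^{(i)}}^2 \;=\; \tfrac{\mu_1^2 k}{pm},
\end{align*}
where I used $\sum_i \ip{z}{(\Uo)^{(i)}}^2 = \|\Uo z\|_2^2 = 1$. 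Bernstein's inequality (Lemma~\ref{lemma:bernstein}) then gives
\begin{align*}
\Pr\Bigl(\bigl|w^\dag(C^j-\E[C^j])z\bigr| > \tfrac{\delta_{2k}}{2}\Bigr) \;\leq\; 2\exp\!\left(-\tfrac{(\delta_{2k}/2)^2/2}{\mu_1^2 k/(pm)+\mu_1\mu k\delta_{2k}/(6pm)}\right).
\end{align*}

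The final step is the standard promotion from a fixed-$(w,z)$ bound to a bound on the spectral norm. I would take a $1/4$-net $\calN$ of the unit sphere in $\R^k$ of size at most $9^k$, use the fact that $\|C^j-\E[C^j]\|_2 \leq 2\max_{w,z\in\calN} w^\dag(C^j-\E[C^j])z$, and union-bound over the $\leq 81^k$ pairs and over $j\in[n]$. With the choice of $p$ in Theorem~\ref{thm:sampling-altmin-main} (in particular $p = \Omega(\mu_1^2 k\log n/(m\delta_{2k}^2))$, absorbing the extra factor of $k$ coming from the net), the failure probability is at most $1/n^3$, yielding $\|C^j - \E[C^j]\|_2 \leq \delta_{2k}$ and hence $\|C^j\|_2 \leq 1+\delta_{2k}$ for all $j$. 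I expect no significant obstacle: the only subtlety over Lemma~\ref{lemma:sampling-minsingval-Bl} is that $C^j$ is bilinear, which costs an extra $9^k$ factor in the net argument, and that one must invoke incoherence of both $\Ut$ and $\Uo$, both of which are already available by hypothesis.
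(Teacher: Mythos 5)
Your proposal takes essentially the same approach as the paper: fix unit vectors $w,z$, write $w^\dag C^j z$ as a sum of independent Bernoulli-weighted terms, bound the mean by $w^\dag(\Ut)^\dag\Uo z$ (so its magnitude is at most $1$), bound the variance and the $L_\infty$ term using incoherence of $\Ut$ and $\Uo$, apply Bernstein, and union bound over $j$. The one place you go beyond the paper is the explicit $\varepsilon$-net step converting the fixed-$(w,z)$ concentration bound into a genuine spectral norm bound; the paper jumps directly from the pointwise Bernstein estimate to $\|C^j\|_2 \leq 1+\delta_{2k}$ without exhibiting the net, so your version is slightly more careful on that point (at the cost of an extra $9^{2k}$ in the union bound, harmless under the stated choice of $p$). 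Otherwise the two arguments are the same.
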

\begin{proof}[Proof of Lemma~\ref{lemma:sampling-minsingval-Cjl}]
Let $x\in \mathbb{R}^{k}$ and $y\in \mathbb{R}^k$ be two arbitrary unit vectors. Then, 
$$x^TC^jy=\frac{1}{p}\sum_{i:(i,j)\in \Omega}(x^\dag(\Ut)^{(i)})(y^\dag(\Uo)^{(i)}).$$
That is, $Z=x^TC^jy=\frac{1}{p}\sum_{i}\delta_{ij}(x^\dag(\Ut)^{(i)})(y^\dag(\Uo)^{(i)})$. Note that, $\E[Z]=x^\dag (\Ut)^\dag\Uo y$, $\E[Z^2]=\frac{1}{p}\sum_{i}(x^\dag(\Ut)^{(i)})^2(y^\dag(\Uo)^{(i)})^2\leq \frac{\mu^2}{mp}x^\dag(\Ut)^\dag\Ut x=\frac{\mu^2k}{mp}$ and $\max_i |(x^\dag(\Ut)^{(i)})(y^\dag(\Uo)^{(i)})|\leq \frac{\mu_1^2k}{m}$. Lemma now follows using Bernstein's inequality and using bound for $p$ given in the lemma statement. 
\end{proof}
Finally, we provide a lemma to bound the second part of the error term ($F$).
\begin{lemma}\label{lemma:sampling-minsingval-Cl}
Let $M, \Omega, p,$ and $\Ut$ be as defined in Theorem~\ref{thm:sampling-altmin-main} and Lemma~\ref{lemma:Fbound_comp}.
Then, w.p. at least $1-\frac{1}{n^3}$: 
\begin{align}\label{eqn:sampling-minsingval-Cl}
\|(BD-C)v^*\|_2\leq  \delta_{2k} \dist(\Vt, \Vo),
\end{align}
where $v^*=vec(\Vo)$, i.e. $v^*=\left[\begin{matrix}\Vo_1\\ \vdots \\ \Vo_k\end{matrix}\right]$. 
\end{lemma}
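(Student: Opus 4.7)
The plan is to rewrite $(BD-C)v^*$ as a sampling operator applied to a mean-zero matrix, and then invoke Bernstein's inequality to control the deviation. First I would compute $(BD-C)_{pq}$ from the definitions in \eqref{eq:BCD_k}: since in matrix completion $B_{pl}$ and $C_{pq}$ are diagonal, $(BD-C)_{pq}$ is diagonal as well, with
\[
\bigl((BD-C)_{pq}\bigr)_{jj} = \frac{1}{p}\sum_{i:(i,j)\in\Omega} U^t_{ip}\, E_{iq},
\]
where $E \eqdef (U^t(U^t)^\dag - I)U^* \in \mathbb{R}^{m\times k}$. The key structural facts are $(U^t)^\dag E = 0$ and $\|E\|_F \leq \sqrt{k}\cdot\dist(U^t,U^*)$.

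Next, I would reshape the vector $(BD-C)v^* \in \mathbb{R}^{nk}$ into a $k\times n$ matrix $Y$ (so that $\|(BD-C)v^*\|_2 = \|Y\|_F$); a direct computation using $v^* = vec(V^*)$ yields
\[
Y = \frac{1}{p}(U^t)^\dag P_\Omega(W), \qquad W \eqdef E(V^*)^\dag \in \mathbb{R}^{m\times n}.
\]
Since $(U^t)^\dag W = (U^t)^\dag E(V^*)^\dag = 0$, we have $\mathbb{E}[Y]= (U^t)^\dag W = 0$, and the task reduces to bounding the Frobenius norm of a mean-zero random matrix. To set up concentration, I would control $W$ in two ways. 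Its Frobenius norm is $\|W\|_F = \|E\|_F \leq \sqrt{k}\cdot\dist(U^t,U^*)$ since $V^*$ has orthonormal columns. Its entrywise bound, using $\mu_1$-incoherence of $U^t$ (established independently in Lemma~\ref{lemma:completion-incoherence-Vt}) together with $\mu$-incoherence of $U^*$ and $V^*$, is
\[
|W_{ij}| \leq \|e_i^\dag (I - U^t(U^t)^\dag)U^*\|_2 \cdot \|(V^*)^\dag e_j\|_2 \leq \tfrac{(\mu_1+\mu)\sqrt{k}}{\sqrt{m}}\cdot \tfrac{\mu\sqrt{k}}{\sqrt{n}}.
\]

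With these in hand, for fixed unit vectors $x\in\mathbb{R}^k$ and $y\in\mathbb{R}^n$ I would apply Bernstein's inequality (Lemma~\ref{lemma:bernstein}) to the scalar sum $x^\dag Y y = \frac{1}{p}\sum_{(i,j)\in\Omega} W_{ij}(U^t x)_i y_j$, whose mean equals $x^\dag (U^t)^\dag W y = 0$. An $\epsilon$-net over the two unit spheres plus a union bound then yields a high-probability spectral bound on $Y$, which upgrades to a Frobenius bound at cost $\sqrt{k}$ since $Y$ has $k$ rows. The same pattern is already used in Lemmas~\ref{lemma:sampling-minsingval-Bl} and \ref{lemma:sampling-minsingval-Cjl}, so this step is largely mechanical.

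The main obstacle is calibrating the constants so that the final bound is proportional to $\delta_{2k}\cdot \dist(U^t,U^*)$ and not to some absolute quantity. The leverage comes from the fact that both $\|W\|_F$ and $\|W\|_\infty$ already carry the factor $\dist(U^t,U^*)$ via the identity $(U^t)^\dag E = 0$ (the orthogonal-projection structure of $E$), so the Bernstein variance proxy scales like $\dist^2(U^t,U^*)\cdot \mathrm{poly}(\mu,\mu_1,k)/(mnp)$ and the deviation as $\dist(U^t,U^*)\cdot\sqrt{\mathrm{poly}(\mu,\mu_1,k)\log n/(mp)}$. The assumed lower bound $p \gtrsim \mathrm{poly}(\mu,\mu_1,k)\log n/(m\delta_{2k}^2)$ from Theorem~\ref{thm:sampling-altmin-main} then absorbs the remaining $\mathrm{poly}(\mu,\mu_1,k)$ factors down to $\delta_{2k}$, giving the claimed estimate. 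Tracking the exact powers of $k$ through the net step is the most delicate bookkeeping.
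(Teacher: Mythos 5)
Your reformulation $Y=\tfrac{1}{p}(U^t)^\dag P_\Omega(W)$ with $W=E(V^*)^\dag$, $E=(U^t(U^t)^\dag - I)U^*$ and $(U^t)^\dag E=0$ is correct and matches the algebra in the paper's proof. But the concentration step you propose does not close, and this is where the paper's argument genuinely differs.

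The paper does not use Bernstein plus an $\epsilon$-net here. It invokes Lemma~\ref{lem:komlem}, a restatement of Lemma~6.1 of \cite{KeshavanOM2009}: with high probability over $\Omega$, \emph{simultaneously for all} $\alpha\in\mathbb{R}^m$ with $\sum_i\alpha_i=0$ and all $\beta\in\mathbb{R}^n$, one has $\sum_{(i,j)\in\Omega}\alpha_i\beta_j\leq C\sqrt{\sqrt{mn}\,p}\,\|\alpha\|_2\|\beta\|_2$. This is a Feige--Ofek-type spectral-gap bound on the bipartite sampling graph; the zero-sum condition projects out the top eigenvector. The paper applies it per block $(p,q)$ with $\alpha_i=(H_i)_{pq}=U^t_{ip}E_{iq}$ (whose sum over $i$ vanishes) and $\beta_j=x^j_pV^*_{jq}$, then sums. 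Because the lemma is a single whp event about $\Omega$, no further union bound over test vectors is needed.

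Your plan replaces this with: fix unit $x\in\mathbb{R}^k$, $y\in\mathbb{R}^n$, Bernstein the scalar $x^\dag Yy=\tfrac1p\sum_{(i,j)\in\Omega}W_{ij}(U^tx)_iy_j$, then take a net over $S^{k-1}\times S^{n-1}$. The net over $S^{n-1}$ has $e^{\Theta(n)}$ points, so you need per-pair failure probability $e^{-\Omega(n)}$. That is unattainable here: take $y=e_j$ a coordinate vector; then $x^\dag Y e_j$ is a sum of at most $|\{i:(i,j)\in\Omega\}|\approx pm = O(\mathrm{poly}(k,\mu)\log n)$ bounded independent terms, and Bernstein cannot give a tail smaller than roughly $\exp(-cpm)=n^{-\mathrm{poly}(k)}$ for it, no matter how you balance the variance and range terms. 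Equivalently, the Bernstein deviation is dominated by the range term $L\log(\text{net size})\sim L\cdot n$, and with $L\gtrsim \tfrac{1}{p}\|W\|_\infty\tfrac{\mu_1\sqrt{k}}{\sqrt m}$ the requirement $\delta_{2k}\dist\gtrsim Ln$ forces $p$ to be of order $1$, which defeats the point. Your appeal to Lemmas~\ref{lemma:sampling-minsingval-Bl} and~\ref{lemma:sampling-minsingval-Cjl} as precedent does not carry over: there the net is only over $S^{k-1}$ and the union bound is over the $n$ indices $j$ (polynomial in $n$), not over a net of $S^{n-1}$ (exponential in $n$).

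There is also a smaller inaccuracy: you assert that ``both $\|W\|_F$ and $\|W\|_\infty$ already carry the factor $\dist$,'' but the entrywise bound you actually wrote, $|W_{ij}|\leq\tfrac{(\mu_1+\mu)\sqrt{k}}{\sqrt m}\cdot\tfrac{\mu\sqrt{k}}{\sqrt n}$, is $\dist$-free. (One can get a $\dist$-carrying entrywise bound via $\|e_i^\dag E\|_2\leq\|E\|_F\leq\sqrt{k}\,\dist$, so $|W_{ij}|\leq\tfrac{\mu k}{\sqrt n}\dist$, but even after this fix the $e^{\Theta(n)}$ union bound still dominates.) The missing ingredient is exactly the uniform-over-$y$ bilinear bound of Lemma~\ref{lem:komlem}; once you use it with the zero-column-sum property $(U^t)^\dag E=0$, the rest of your bookkeeping (incoherence of $U^t$ to bound $\max_i|U^t_{ip}|$, incoherence of $V^*$ to bound $\max_j\|(V^*)^{(j)}\|_2$, and $\|E_q\|_2\leq\dist$ per column) goes through and yields the stated $\delta_{2k}\dist$.
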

\begin{proof}[Proof of Lemma~\ref{lemma:sampling-minsingval-Cl}]
Let $X\in \mathbb{R}^{n\times k}$ and let $x=vec(X)\in \mathbb{R}^{nk}$ s.t. $\|x\|_2=1$. Also, let $x_p$ be the $p$-th column of $X$ and $x^j$ be the $j$-th column of $X$. 

Let $u^i=(\Ut)^{(i)}$ and $u^{*(i)}=(\Uo)^{(i)}$. Also, let $H^j=(B^jD-C^j)$, i.e., 
$$H^j=\frac{1}{p}\sum_{i:(i,j)\in \Omega}u^i(u^i)^\dag (\Ut)^\dag \Uo - u^i (u^{*(i)})^\dag =\frac{1}{p}\sum_{i:(i,j)\in \Omega} H^j_i,$$ where $H^j_i\in \mathbb{R}^{k\times k}$. Note that, 
\begin{equation}\label{eq:ch_k}\sum_{i}H^j_i=(\Ut)^\dag \Ut (\Ut)^\dag\Uo - (\Ut)^\dag \Uo = 0.\end{equation}
Now, $x^\dag(BD-C)v^*= \sum_j(x^j)^\dag(B^jD-C^j)(\Vo)^{(j)}=\frac{1}{p}\sum_{pq}\sum_{(i,j)\in \Omega}x^j_p\Vo_{jq}(H^j_i)_{pq}$. Also, using \eqref{eq:ch_k}, $\forall (p,q)$: $$\sum_{i}(H^j_i)_{pq}=0.$$
Hence, applying Lemma~\ref{lem:komlem}, we get w.p. at least $1-\frac{1}{n^3}$:
\begin{equation}\label{eq:bcdbound_compk}x^\dag (BD-C)v^*=\sum_j(x^j)^\dag(B^jD-C^j)(\Vo)^{(j)}\leq \frac{1}{p}\sum_{pq}\sqrt{\sum_j(x^j_p)^2(\Vo_{jq})^2}\sqrt{\sum_i(H^j_i)_{pq}^2}.\end{equation}
Also, 
\begin{align}
\sum_i(H^j_i)_{pq}^2&=\sum_i(u^i_p)^2((u^i)^\dag (\Ut)^\dag\Uo_q-\Uo_{iq})^2\leq \max_i(u^i_p)^2\sum_i((u^i)^\dag (\Ut)^\dag\Uo_q-\Uo_{iq})^2\nonumber\\&=\max_i(u^i_p)^2 (1-\|\Ut\Uo_q\|_2^2) \leq \frac{\mu_1^2k}{m}\dist(\Ut, \Uo)^2.\label{eq:hbound_compk}\end{align}
Using \eqref{eq:bcdbound_compk}, \eqref{eq:hbound_compk} and incoherence of $\Vo$, we get (w.p. $1-1/n^3$), $\forall x$: 
$$x^\dag(BD-C)v^*\leq \sum_{pq}\frac{\mu_1^2k}{mp}\dist(\Ut, \Uo)\|x_p\|_2\leq \delta_{2k}\dist(\Ut, \Uo),$$
where we used the fact that $\sum_p \twonorm{x_p} \leq \sqrt{k}\twonorm{x} = \sqrt{k}$ in the last step.
Lemma now follows by observing $\max_{x, \|x\|=1} x^\dag(BD-C)v^*=\|(BD-C)v^*\|_2$. 
\end{proof}



\end{document}